\documentclass{article}

\usepackage{microtype}
\usepackage{graphicx}
\usepackage{subcaption}
\usepackage{booktabs} 

\usepackage{hyperref}

\usepackage[
  detect-all,
  per-mode=symbol
]{siunitx}

\usepackage[preprint]{icml2026}

\usepackage{amsmath}
\usepackage{amssymb}
\usepackage{mathtools}
\usepackage{amsthm}

\usepackage[capitalize,noabbrev]{cleveref}

\newcommand{\Sph}[0]{\mathcal{S}}
\newcommand{\EE}[0]{\mathbb{E}}

\newcommand{\MM}[0]{\mathcal{M}}
\newcommand{\SO}[0]{\mathrm{SO}}

\theoremstyle{plain}
\newtheorem{theorem}{Theorem}[section]
\newtheorem{proposition}[theorem]{Proposition}

\newtheorem{corollary}[theorem]{Corollary}
\theoremstyle{definition}

\theoremstyle{remark}

\usepackage[textsize=tiny]{todonotes}

\icmltitlerunning{Manifold Aware Denoising Score Matching (MAD)}

\begin{document}

\twocolumn[
  \icmltitle{Manifold Aware Denoising Score Matching (MAD)}



  \icmlsetsymbol{equal}{*}

  \begin{icmlauthorlist}
    \icmlauthor{Alona Levy-Jurgenson}{yyy}
    \icmlauthor{Alvaro Prat}{yyy}
    \icmlauthor{James Cuin}{imper}
    \icmlauthor{Yee Whye Teh}{yyy}
  \end{icmlauthorlist}

  \icmlaffiliation{yyy}{Department of Statistics
University of Oxford}
  \icmlaffiliation{imper}{Department of Mathematics, Imperial College London, London, United Kingdom}

  \icmlcorrespondingauthor{Alona Levy-Jurgenson}{alona.jurgenson@stats.ox.ac.uk}
  \icmlcorrespondingauthor{Yee Whye Teh}{y.w.teh@stats.ox.ac.uk}

  \icmlkeywords{denoising score matching, manifold, diffusion, generative models, deep learning, machine learning}

  \vskip 0.3in
]



\printAffiliationsAndNotice{}  

\begin{abstract} 
A major focus in designing methods for learning distributions defined on manifolds is to alleviate the need to implicitly learn the manifold so that learning can concentrate on the data distribution within the manifold. However, accomplishing this often leads to compute-intensive solutions.
In this work, we propose a simple modification to denoising score-matching in the ambient space to implicitly account for the manifold, thereby reducing the burden of learning the manifold while maintaining computational efficiency. Specifically, we propose a simple decomposition of the score function into a known component $s^{base}$ and a remainder component $s-s^{base}$ (the learning target), with the former implicitly including information on where the data manifold resides. 
We derive known components $s^{base}$ in analytical form for several important cases, including distributions over rotation matrices and discrete distributions, and use them to demonstrate the utility of this approach in those cases. 
\end{abstract}

\begin{figure*}[ht]
    \centering
    \begin{minipage}{0.24\textwidth}
        \centering
        \includegraphics[width=\linewidth]{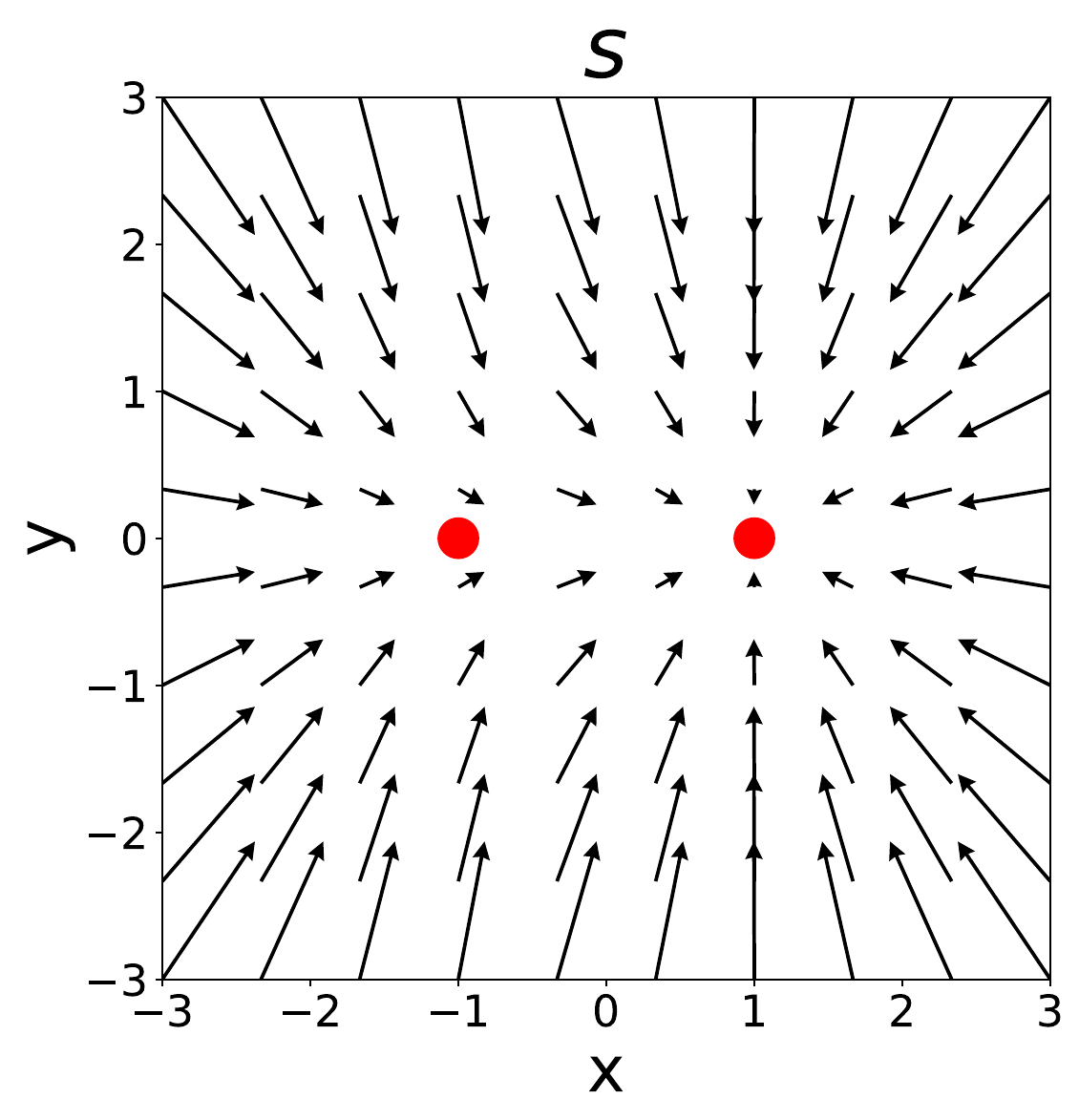}
        \label{fig:fig1}
    \end{minipage}%
    \hfill
    \begin{minipage}{0.24\textwidth}
        \centering
        \includegraphics[width=\linewidth]{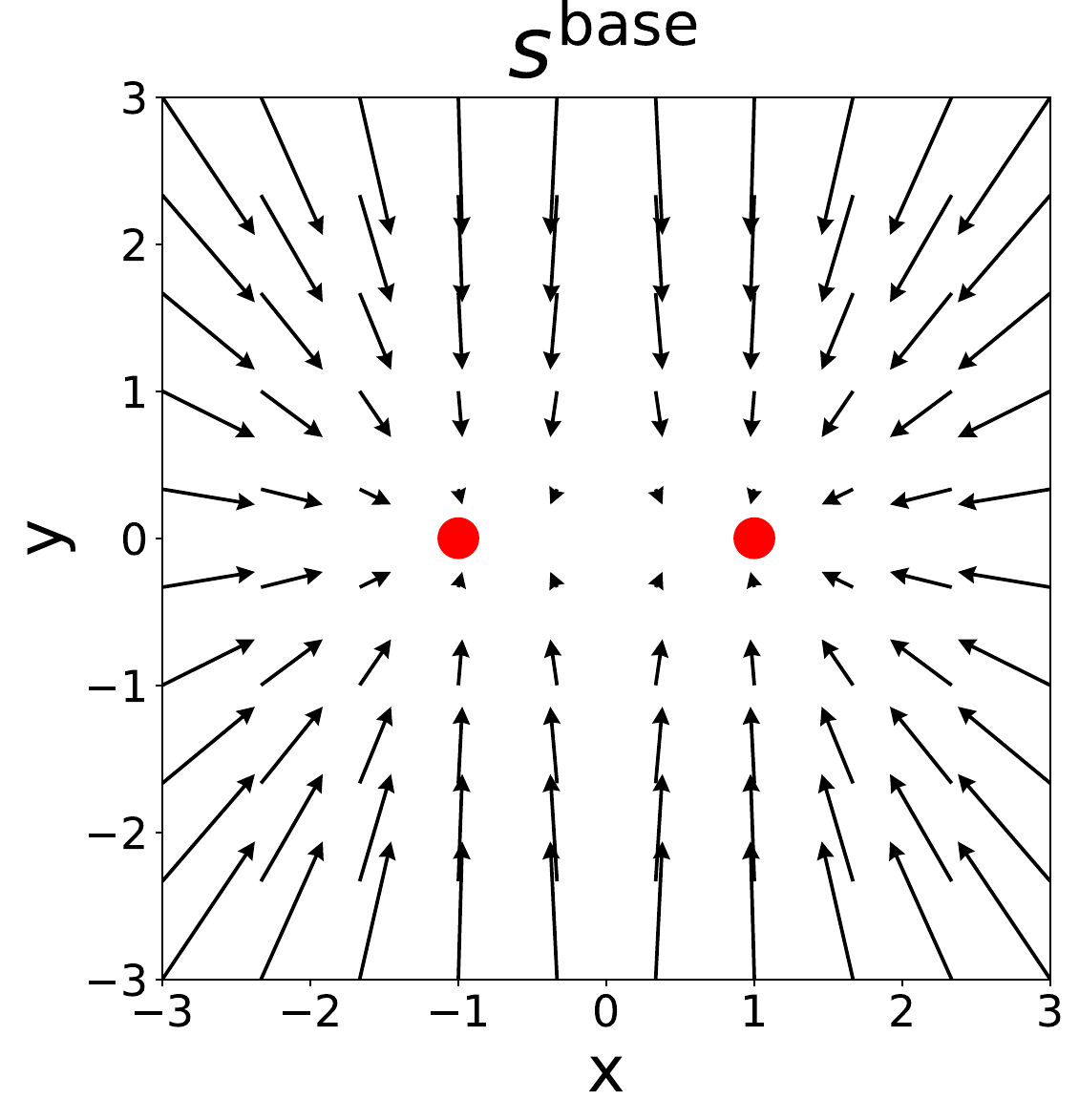}
        \label{fig:fig2}
    \end{minipage}%
    \hfill
    \begin{minipage}{0.24\textwidth}
        \centering
        \includegraphics[width=\linewidth]{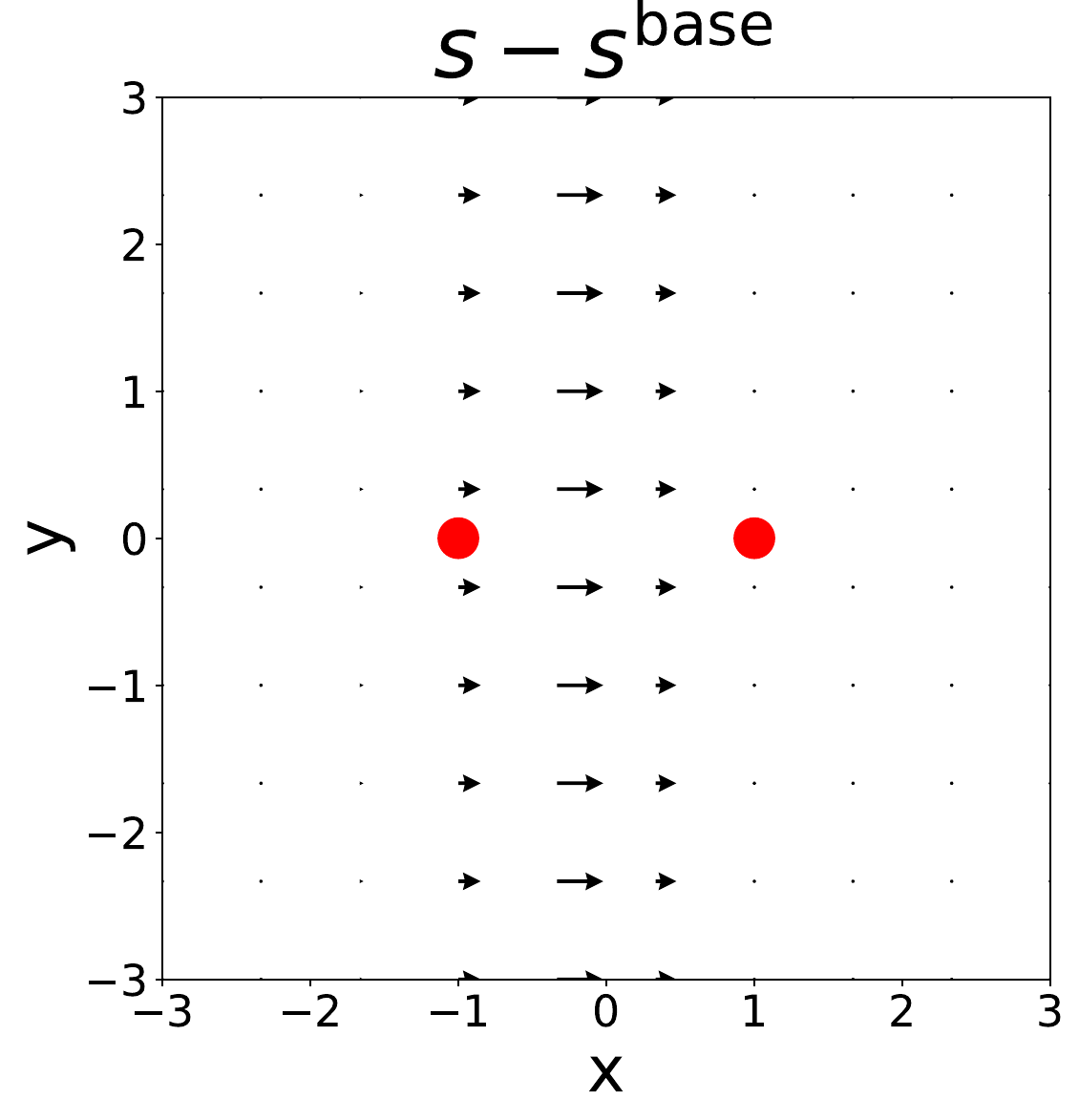}
        \label{fig:fig3}
    \end{minipage}%
    \hfill
    \begin{minipage}{0.24\textwidth}
        \centering
        \includegraphics[width=\linewidth]{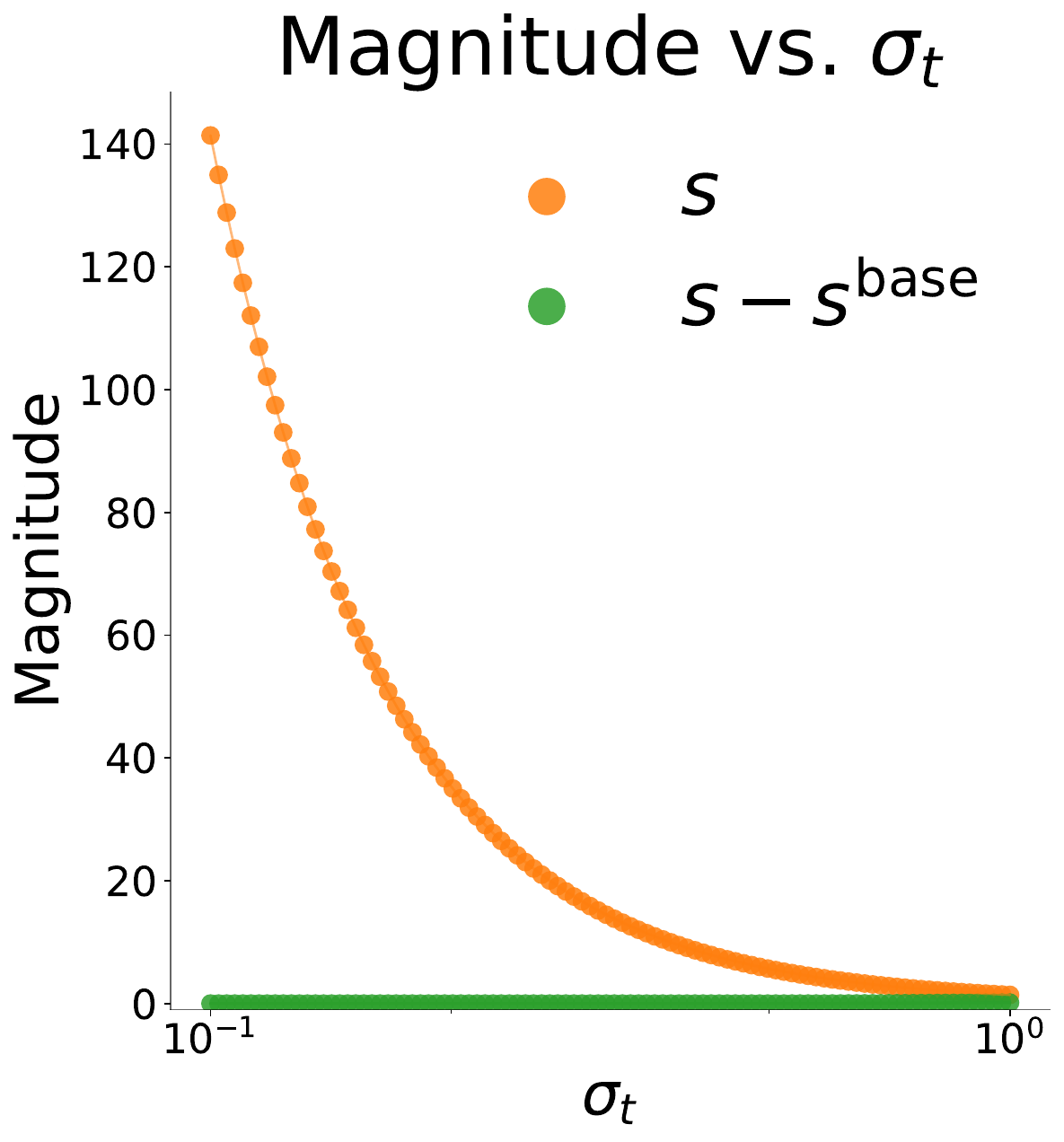}
        \label{fig:fig4}
    \end{minipage}
    \caption{Introduction to MAD through a toy discrete distribution over $\MM=\{(-1,0),(1,0)\}$ (red) 
    with $p = \{0.1, 0.9\}$, respectively, and $p^{base} = \{0.5, 0.5\}$.
    Vector fields are shown for the actual score $s$, the base score $s^{base}$ and the difference $s-s^{base}$ for $\sigma_t = 0.8$. The magnitude of $s$ (DSM's learning target) and $s-s^{base}$ (MAD's learning target) as a function of $\sigma_t$ is shown for $x=(1,0)$.}
    \label{fig:intro}
\end{figure*}

\section{Introduction}
Score-based generative models (SGMs) trained with denoising score matching (DSM) offer an efficient and scalable approach to generative modeling \cite{song2019generative, song2020score}. However, many data modalities of practical interest are supported on low-dimensional manifolds. This includes rotations in $\SO(3)$, which are relevant for drug-design and robotics \cite{senanayake2018directional, zhang2024flexsbdd, prat2025sigmadockuntwistingmoleculardocking}, geology and climate data \cite{karpatne2018machine}. Also of interest is, discrete data, ubiquitous in the context of text generation \cite{shi2024simplified, amin2025masking, feng2025theoretical} and bio-informatics \cite{huang2024latent}, which is supported on lower-dimensional sets embedded in higher-dimensional spaces. In such settings, standard DSM, which assumes full support in the ambient space, is implicitly tasked with simultaneously learning both the manifold and the distribution supported on it. This can result in a more challenging learning problem than in settings where the target distribution has full ambient-space support.

To address this, several approaches construct generative models that explicitly account for manifold structure, for example by defining an on-manifold generative process \cite{rozen2021moser, huang2022riemannian, de2022riemannian, prat2025sigmadockuntwistingmoleculardocking}, where the score is defined through the tangent space of the manifold, or by applying Euclidean generative models in a low-dimensional setting using methods like charts, where the dimension matches the intrinsic dimension of the manifold, followed by a mapping back to the manifold \cite{gemici2016normalizing, kalatzis2021density}. While effective, the former can be computationally intensive during training, often yielding sampling schemes that require fine discretisation to navigate the high curvature of the manifold \cite{de2022riemannian, lou2023scalingriemanniandiffusionmodels}. 
On the other hand, the latter frequently require multiple mappings that may also introduce distortions or dependency on the chosen maps \cite{gemici2016normalizing, brehmer2020flows, kalatzis2021density}.

In contrast, DSM in the ambient space is computationally efficient and straightforward to deploy.
Furthermore, working in Euclidean space relaxes the geometry of the path: instead of being constrained to follow curved on-manifold geodesics, trajectories may take more direct ``chord-like” routes, noting that for embedded manifolds $d_\mathcal{M}(x,y)\geq ||x-y||_2$ \cite{Lee2012}.
However, ambient-space diffusion accounts for the manifold structure only implicitly through the training data. Hence, many ambient-space
probability-flow models have struggled to generate coherent in-manifold structures due to this very issue (e.g. \citep{D3SC04185A}). This lack of information
on the manifold structure affects the learning dynamics -- recent work observed that diffusion models, and specifically DSM, must first recover the support of the data distribution before learning any information about the density \cite{li2025scores}. Together, these finding raise a natural question: can ambient-space DSM be modified to account for the support structure, while retaining its simplicity and improving learning efficiency?

In this work, we answer this question by decomposing the score into two components: a known base score that captures the manifold structure, and an unknown residual term that depends on the target distribution. We term this approach \textit{Manifold Aware Denoising Score Matching} (MAD). A demonstration of our approach is shown in Figure \ref{fig:intro}. We analytically derive base scores for a range of important manifolds, including rotations, n-dimensional spheres, and discrete data. Because the base score is known, learning can focus exclusively on the residual, distribution-dependent, component. We show how MAD can lead to faster convergence and improved distributional fidelity, while achieving performance comparable to, and in some cases better than, more computationally intensive alternatives, without sacrificing the efficiency and simplicity of standard DSM. We also demonstrate how this decomposition can lead to a more stable learning target for score-matching in the discrete case, replacing the need to learn the score function, which diverges as noise levels become smaller. 

\section{Method}

Formally, we are interested in learning a diffusion model for a target distribution that is supported on a subset $\MM \subseteq \mathbb{R}^n$ for some $n \in \mathbb{{N}}_{>0}$. We assume that we have a simple and tractable base probability measure $\mu$ over $\MM$, 
with the target distribution $p$ (with some notational overlap) absolutely continuous with respect to $\mu$. For example, we can take $\MM$ to be a known compact Riemannian manifold with $\mu$ being the normalised and uniform Haar measure on $\MM$, or a finite set with $\mu$ being the uniform counting measure on $\MM$. 

\subsection{Background: Diffusion Models}

A diffusion model is defined by a forward process that adds noise to samples drawn according to $p$, and a reverse process that denoises samples to recover the original distribution. Let $x_t$ be the noisy sample at time $t$, with noise variance $\sigma_t^2$. In our setup, we work with VE-SDEs \cite{song2020score}. Denote $g_t^2 := \frac{d\sigma_t^2}{dt}$ and $dW_t$ as a standard Wiener process in $\mathbb{R}^n$. We have Brownian motion as the forward process:
\begin{align}
    x_0 &\sim p \\
    dx_t &= g_t dW_t
\end{align}
The reverse process is given by a stochastic differential equation (SDE),
\begin{align}
    dx_t = g_t^2 \nabla_{x_t} \log p_t(x_t)dt + g_t dW_t,
\end{align}
where the score function is given by,
\begin{align}
    s(x_t,t) &:= \nabla_{x_t} \log p_t(x_t) \notag \\ 
    &= \nabla_{x_t} \log \int N_{\sigma_t}(x_t-x_0) p(x_0) \mu
    (dx_0) \notag \\
    &= \frac{ \EE[x_0|x_t] - x_t}{\sigma_t^2} \label{eq:score_fn}\\ 
    N_{\sigma_t}(x_t-x_0) &:= \frac{1}{(2\pi\sigma_t^2)^{d/2}} \exp\left(-\frac{\|x_t-x_0\|^2}{2\sigma_t^2}\right)
\end{align}

Typically, a neural network is used to approximate the score function, $s(x_t,t) \approx s_\theta(x_t,t)$, with $\theta$ often learned via score matching:
\begin{align}
    \mathcal{L}(\theta) &= \EE_{x_0,x_t,t} \left[\left\| 
   {\sigma_t} s_\theta(x_t,t)-\frac{x_0-x_t}{\sigma_t} \right\|^2_2 \right].
    \label{eq:loss}
\end{align}
where, following \citep[][Section 4.2]{song2019generative}, the score matching loss is weighted by $\sigma_t^2$ for numerical reasons.

\subsection{Score Decomposition}

The key idea behind our approach is to decompose the time-dependent score function $s(x_t,t)$ defined in the ambient space, into two components: 
\begin{align}
    s(x_t,t) &= s^{base}(x_t,t) + \delta(x_t,t)
\end{align}
The first term $s^{base}$ is the score function for the simple base distribution $\mu$, while the second term $\delta$ is the unknown component that is our learning target. We will assume that $s^{base}$ is known and analytically tractable, deriving this explicitly for a number of examples below. Because $\mu$ is uniform and supported on $\MM$, $s^{base}$ effectively captures the geometric structure of $\MM$. As a consequence, $\delta$ effectively captures the properties of the target distribution $p$ supported on $\MM$, without having to implicitly encode the geometric structure of $\MM$. 

A natural question is why learning $\delta$ can be easier than learning the original score function $s$, which we find to be experimentally the case. Intuitively, the score function $s$ captures both the geometry of $\MM$ and the target distribution $p$, while $\delta$ only captures $p$ since the geometry has been captured by $s^{base}$, so learning $\delta$ should be easier since less needs to be learnt.

Another useful intuition is as follows: $s(x_t,t)$ should be well-behaved throughout most of its domain, since the  distribution of the noised data is smooth for $t>0$. The exception is that $s(x_t,t)$ can take on large values when $t\approx 0$ (equivalently $\sigma_t\approx 0$). This is because the score function needs to force $x_t$ to converge onto $\MM$ in the reverse SDE, as $t\rightarrow 0$. This holds true for $s^{base}$ as well; in fact both $s(x_t,t)$ and $s^{base}(x_t,t)$ should act similarly in forcing $x_t$ to converge to the same closest point on $\MM$, if $p$ looks similar to $\mu$ locally. As a result, $\delta(x_t,t)$, which is the difference between $s(x_t,t)$ and $s^{base}(x_t,t)$, should have values which are much smaller than either when $t\approx 0$.

We can show this mathematically in the simple case of a discrete distribution:
\begin{theorem}
\label{thm:o(1)_discrete}
    Let $N \in \mathbb{N}$, $N>1$ and $\MM=\{u_i\}_{i=1}^{N}\subset\mathbb{R}^n$ for some $n \in\mathbb{N}$. Let $\mu$ be the uniform normalised\footnote{Note that in the caption of Figure \ref{fig:intro} the unnormalised counting measure is used, but an equivalent, less intuitive, form may be obtained for its normalised counterpart.} counting measure on $\mathcal{M}$ and $p$ a density function w.r.t  $\mu$ that is supported on $\MM$. Let $x \in \mathbb{R}^n$ so that  $x$ is not equidistant to all points in $\MM$. Then as $t \to 0$ (equivalently, $\sigma_t \to 0$):
\begin{equation}
    \|s(x,t)-s^{base}(x,t)\| \to 0
\end{equation}
\end{theorem}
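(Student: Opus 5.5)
Both $s$ and $s^{base}$ are Tweedie-type expressions, so I will start from the closed form \eqref{eq:score_fn}. Writing $w_i(x,t) := \exp\!\bigl(-\|x-u_i\|^2/(2\sigma_t^2)\bigr)$ and $p_i := p(u_i)$, we have
\begin{align*}
s(x,t) = \frac{1}{\sigma_t^2}\Bigl(\sum_i \pi_i u_i - x\Bigr),\qquad \pi_i := \frac{p_i w_i}{\sum_j p_j w_j},
\end{align*}
and analogously $s^{base}(x,t) = \sigma_t^{-2}\bigl(\sum_i \rho_i u_i - x\bigr)$ with $\rho_i := w_i/\sum_j w_j$, since $\mu$ is uniform. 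The Gaussian normalising constants and the factor $1/N$ from $\mu$ cancel in these ratios. The $x$ terms cancel in the difference, so
\begin{align*}
s(x,t)-s^{base}(x,t) = \frac{1}{\sigma_t^2}\sum_i (\pi_i-\rho_i)\,u_i .
\end{align*}
The task therefore reduces to showing that $\pi_i-\rho_i = o(\sigma_t^2)$ for every $i$.

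Let $d_i := \|x-u_i\|^2$, $d^* := \min_i d_i$, and let $I^* := \{i : d_i = d^*\}$ be the set of nearest points. By hypothesis $I^*\neq\{1,\dots,N\}$, so the gap $\Delta := \min_{i\notin I^*} d_i - d^* > 0$. Dividing numerator and denominator by $e^{-d^*/(2\sigma_t^2)}$ gives
\begin{align*}
\pi_i = \frac{p_i e^{-(d_i-d^*)/(2\sigma_t^2)}}{\sum_{j\in I^*} p_j + R_p(t)},\qquad 0\le R_p(t)\le \Bigl(\sum_j p_j\Bigr) e^{-\Delta/(2\sigma_t^2)},
\end{align*}
and the same expression for $\rho_i$ with all $p_j$ replaced by $1$. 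For $i\notin I^*$, both $\pi_i$ and $\rho_i$ are $O(e^{-\Delta/(2\sigma_t^2)})$. For $i\in I^*$, one has $\pi_i = p_i/P^* + O(e^{-\Delta/(2\sigma_t^2)})$ with $P^* := \sum_{j\in I^*}p_j$, and $\rho_i = 1/|I^*| + O(e^{-\Delta/(2\sigma_t^2)})$.

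The main obstacle is the leading term when $|I^*|\ge 2$: the difference $p_i/P^*-1/|I^*|$ need not vanish, and after multiplication by $\sigma_t^{-2}$ it would diverge. I would therefore first handle the generic case $|I^*|=1$, where the nearest point $u_{i^*}$ is unique. This is the case the statement intends when it says ``not equidistant'', and I would make the reading explicit as the assumption that the nearest point of $\MM$ to $x$ is unique; if the literal weaker hypothesis is meant, the claim can fail when several points tie for nearest and carry unequal mass under $p$, so this case has to be excluded. Two further ingredients are needed. First, we need $P^*>0$, i.e.\ $p(u_{i^*})>0$. If instead $p(u_{i^*})=0$, I would restrict $\MM$ to the support of $p$, which is consistent with ``supported on $\MM$''. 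Second, with a unique nearest point both $\pi_{i^*}$ and $\rho_{i^*}$ equal $1 - O(e^{-\Delta/(2\sigma_t^2)})$, so every $|\pi_i-\rho_i|\le C e^{-\Delta/(2\sigma_t^2)}$, where $C$ depends only on $p$ and $N$.

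It then follows that
\begin{align*}
\|s(x,t)-s^{base}(x,t)\| \le \frac{C\max_i\|u_i\|}{\sigma_t^2}\,e^{-\Delta/(2\sigma_t^2)} \longrightarrow 0 \quad\text{as } \sigma_t\to 0,
\end{align*}
because exponential decay in $1/\sigma_t^2$ dominates the polynomial factor $\sigma_t^{-2}$. The only routine work left is to bound the denominators uniformly: the $p$-weighted denominator is at least $P^*>0$, and the uniform-weighted one is at least $1$.
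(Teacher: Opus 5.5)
Your proof is correct. It rests on the same mechanism as the paper's: the posterior weights concentrate exponentially fast in $1/\sigma_t^2$, which beats the $\sigma_t^{-2}$ prefactor. Your route to that fact is more direct.

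The paper never singles out the nearest point. After the same reduction to $\sigma_t^{-2}\sum_i|\pi_i-\rho_i|$, it splits indices by the sign of $\pi_i-\rho_i$. It then replaces every $p(u_j)$ by $m=\min_j p(u_j)$ or $M=\max_j p(u_j)$, which gives a bound depending only on $r_{-i,i}=\sum_{j\neq i}e^{(d_i-d_j)/(2\sigma_t^2)}$. Finally it shows $\sigma_t^2\,(1/r_{-i,i}+r_{-i,i}\,m/p(u_i))\to\infty$, because $r_{-i,i}$ tends to $0$ or $\infty$ exponentially.

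You instead factor out $e^{-d^*/(2\sigma_t^2)}$ and compare both weight vectors with their common limit, the point mass at $i^*$. This avoids the sign splitting and the min/max substitution. It also gives an explicit rate $O(\sigma_t^{-2}e^{-\Delta/(2\sigma_t^2)})$, governed by the gap between the two smallest squared distances.

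Your caveat about the hypothesis is justified, and it applies to the paper's proof too. The paper's final step asserts that no summand of $r_{-i,i}$ tends to $1$ ``by assumption''. That step actually uses that $x$ is not equidistant to any pair $u_i,u_j$. For $i\in I^*$ one has $r_{-i,i}\to|I^*|-1$, so what the argument really needs is a unique nearest point.

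For $N=2$ this coincides with the stated hypothesis. For $N\ge 3$ it does not, and the conclusion can fail: $\MM=\{-1,1,5\}\subset\mathbb{R}$, $x=0$, $p(-1)\neq p(1)$ gives $\|s-s^{base}\|\sim c\,\sigma_t^{-2}$ with $c\neq 0$.

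One small correction. Restricting $\MM$ to the support of $p$ is not a harmless reduction, since it changes $\mu$ and hence $s^{base}$. With $p(u_{i^*})=0$ the claim can fail outright: take $\MM=\{0,1\}$, $x=1/4$, $p(0)=0$. Instead, read ``supported on $\MM$'' as $p>0$ on all of $\MM$, which is what the paper uses when it asserts $m\neq 0$.
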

The proof is in Appendix \ref{appx:proof_for_discrete}.
Theorem \ref{thm:o(1)_discrete} means that the learning target $\delta(x,t)$ scales as $o(1)$ as $\sigma_t \to 0$. This result, combined with the known discrete base score (provided below in Section \ref{sec:base_scores}), makes it theoretically possible to recover the score function for a given $x$ to within $o(1)$ error as $\sigma_t \to 0$. This is important in light of the observation by \cite{li2025scores} that recovering the data density requires the learned score function to match the true score function to within $o(1)$ error as $\sigma_t \to 0$. 

We can now parameterise $s(x_t,t)$ by parameterising $\delta(x_t,t) \approx \delta_\theta(x_t,t)$, giving:
\begin{align}
    s(x_t,t) &\approx s^{base}(x_t,t) + \delta_\theta(x_t,t)
\end{align}
To learn $\theta$, we can adapt the previous loss \eqref{eq:loss}, giving the following loss:
\begin{align}
&\mathcal{L}(\theta) =\\
& \mathbb{E}_{x_0,x_t,t}\Bigg[
 \Bigg\|
\sigma_t \delta_\theta(x_t,t) -
\left(
\frac{x_0 - x_t}{\sigma_t}
- \sigma_t s^{base}(x_t,t)
\right)
\Bigg\|^2_2
\Bigg] \nonumber
\end{align}



In practice, ambient space DSM learning results in a slightly noisy version of $p$ \citep[][Section 3.1]{song2019generative}, typically denoted $p_{\sigma_{\min}}$, in that we learn $p_{\sigma_{\min}}$ rather than $p$. Therefore, as a final step to produce samples that are guaranteed to be on the manifold, unless stated otherwise (e.g.\ Figure \ref{fig:discrete}), we project the generated samples in the ambient space to the manifold. 

\subsection{Examples}\label{sec:base_scores}

Below we derive several base score functions, which will also be used in the results section. All of the proofs are in Appendix \ref{appx: proofs}. An empirical demonstration of the distribution used for the base score can be seen in Appendix Figure \ref{fig:s2_not_projected} where samples were generated using solely the base score.

\subsubsection{Discrete Distributions}
\begin{proposition}(Base score for discrete distributions) \label{thm:base_discrete}
    For a uniform discrete distribution supported on a finite set of points $\MM = \{u_i\}_{i=1}^N$, 
\begin{align}
    s^{base}(x_t,t) = \frac{\sum_{i=1}^N u_i N_{\sigma_t}(x_t-u_i) }{ \sum_{i=1}^N N_{\sigma_t}(x_t-u_i)}
\end{align}
\end{proposition}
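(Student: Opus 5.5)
The plan is to compute the base marginal explicitly and differentiate it. Under $\mu$ uniform on $\MM=\{u_i\}_{i=1}^N$, the base density is $p^{base}\equiv 1$ with respect to $\mu$. The noised base density is therefore the Gaussian mixture
\begin{align}
p^{base}_t(x_t) = \int N_{\sigma_t}(x_t-x_0)\,\mu(dx_0) = \frac{1}{N}\sum_{i=1}^N N_{\sigma_t}(x_t-u_i).
\end{align}
This mixture is smooth and strictly positive on $\mathbb{R}^n$ for every $\sigma_t>0$. Hence $\log p^{base}_t$ is differentiable, and the constant $1/N$ drops out under $\nabla_{x_t}\log$. This gives the normalisation independence noted in the footnote to Theorem~\ref{thm:o(1)_discrete}.

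Next I differentiate term by term, using $\nabla_{x_t} N_{\sigma_t}(x_t-u_i) = \frac{u_i-x_t}{\sigma_t^2} N_{\sigma_t}(x_t-u_i)$. Dividing by the sum gives
\begin{align}
\nabla_{x_t}\log p^{base}_t(x_t) = \frac{1}{\sigma_t^2}\left(\frac{\sum_{i=1}^N u_i N_{\sigma_t}(x_t-u_i)}{\sum_{i=1}^N N_{\sigma_t}(x_t-u_i)} - x_t\right).
\end{align}
Equivalently, this is \eqref{eq:score_fn} with $p$ replaced by the uniform $p^{base}$. Here $\EE^{base}[x_0\mid x_t]$ is the softmax-weighted average of the $u_i$ with weights $w_i\propto N_{\sigma_t}(x_t-u_i)$.

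The last step is to match this with the displayed expression, and this is the main obstacle. The displayed ratio in the proposition is exactly the weighted average $\sum_i w_i u_i$ above. It coincides with $s^{base}$ only once the affine, $x$-only correction $-x_t$ and the factor $\sigma_t^{-2}$ are handled. These two pieces are known in closed form and identical for every distribution on $\MM$. So in the decomposition $s=s^{base}+\delta$ they cancel in $\delta$: both $s$ and $s^{base}$ share the form $(\,\cdot\,-x_t)/\sigma_t^2$ by \eqref{eq:score_fn}. The distribution-specific content of $s^{base}$ is thus precisely the displayed ratio.

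I would state this convention explicitly and check it against the MAD loss, where $\sigma_t s^{base}$ is subtracted from $(x_0-x_t)/\sigma_t$. If the paper intends the literal score, the derivation above gives the full formula, with the displayed ratio as its only nontrivial term. As written, the proposition's right-hand side (the weighted average alone) is not the same function as $\nabla_{x_t}\log p^{base}_t$. So the proposal establishes the displayed ratio as the identity only under this convention, not as the literal gradient of $\log p^{base}_t$.
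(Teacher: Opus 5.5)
Your derivation is the same as the paper's: you write the base marginal as the Gaussian mixture $\frac{1}{N}\sum_i N_{\sigma_t}(x_t-u_i)$, differentiate its logarithm to get $s^{base}=(\EE^{base}[x_0\mid x_t]-x_t)/\sigma_t^2$, and identify the displayed ratio as $\EE^{base}[x_0\mid x_t]$, which is exactly where the paper's proof ends, so you are right that the statement's display is the posterior mean rather than the score itself. Drop the ``convention'' rescue, which is unnecessary and not quite accurate: in $\delta=s-s^{base}$ only the $-x_t/\sigma_t^2$ term cancels, while the factor $\sigma_t^{-2}$ remains, giving $\delta=(\EE[x_0\mid x_t]-\EE^{base}[x_0\mid x_t])/\sigma_t^2$.
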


\subsubsection{Distributions on a Sphere}

Below we derive the base score for general $n$-spheres, and specifically $2$-spheres. This will also serve a role in the derivation of a base score for distributions over 3D rotations.

\begin{proposition} (Base score for $n$-spheres) \label{thm:base_sphere}
Assume $n \in \mathbb{N}_{>0}$. Let $\mathcal{M} = \mathcal{S}^n(1) := \{x \in \mathbb{R}^{n+1} : \|x\| = 1\}$ and $\mu$ be the normalised spherical measure on $\mathcal{M}$ (commonly denoted $\sigma^n$). Then for $x_t \neq 0$,
\begin{align}
    &s^{base}(x_t,t)  = -\frac{x_t}{\sigma_t^2} + \frac{1 - n}{2}\frac{x_t}{\|x_t\|^2} + \\ & \frac{1}{2I_{\frac{n-1}{2}}\left(\frac{\|x_t\|}{\sigma_t^2}\right)}\bigg( I_{\frac{n-3}{2}}\bigg(\frac{\|x_t\|}{\sigma_t^2}\bigg)+I_{\frac{n+1}{2}}\bigg(\frac{\|x_t\|}{\sigma_t^2}\bigg)\bigg)\frac{x_t}{\sigma_t^2\|x_t\|} \nonumber
\end{align}
where $I_k$ stands for the modified Bessel function of the first kind of order $k$.
\end{proposition}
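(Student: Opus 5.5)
The plan is to compute the noised base density explicitly and then differentiate its logarithm. With $\mu=\sigma^n$ normalised on $\Sph^n(1)\subset\mathbb{R}^{n+1}$, the noised base density is
\begin{align}
p^{base}_t(x)=\int_{\Sph^n} N_{\sigma_t}(x-u)\,\sigma^n(du).
\end{align}
Expanding $\|x-u\|^2=\|x\|^2+1-2\langle x,u\rangle$ (using $\|u\|=1$) gives
\begin{align}
p^{base}_t(x)=C_t\, e^{-\|x\|^2/(2\sigma_t^2)}\int_{\Sph^n} e^{\langle x,u\rangle/\sigma_t^2}\,\sigma^n(du),
\end{align}
where $C_t$ does not depend on $x$. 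The remaining integral is the normalising constant of a von Mises--Fisher distribution with concentration $\kappa=\|x\|/\sigma_t^2$. By rotation invariance, take $x$ along the north pole and write $u_{n+1}=\cos\theta$. The integral then reduces to $c_n\int_0^\pi e^{\kappa\cos\theta}\sin^{n-1}\theta\,d\theta$. The standard Poisson integral representation of $I_\nu$ turns this into
\begin{align}
\int_{\Sph^n} e^{\langle x,u\rangle/\sigma_t^2}\,\sigma^n(du)=\Gamma\!\left(\tfrac{n+1}{2}\right)\left(\tfrac{2}{\kappa}\right)^{\frac{n-1}{2}} I_{\frac{n-1}{2}}(\kappa).
\end{align}
The constant is irrelevant for the score. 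This step needs $x\neq 0$ so that $\kappa>0$ and the factor $\kappa^{-(n-1)/2}$ is defined.

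Taking logs, the $x$-dependent part of $\log p^{base}_t$ is
\begin{align}
-\frac{\|x\|^2}{2\sigma_t^2}-\frac{n-1}{2}\log\|x\|+\log I_{\frac{n-1}{2}}\!\left(\frac{\|x\|}{\sigma_t^2}\right).
\end{align}
We differentiate each piece using $\nabla\|x\|=x/\|x\|$. The first term gives $-x/\sigma_t^2$. The second gives $\frac{1-n}{2}\,x/\|x\|^2$. The third gives $\frac{I'_{\nu}(\kappa)}{I_\nu(\kappa)}\cdot\frac{x}{\sigma_t^2\|x\|}$ with $\nu=\frac{n-1}{2}$.

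Finally, we apply the recurrence $I'_\nu=\tfrac12(I_{\nu-1}+I_{\nu+1})$. This gives exactly $\frac{1}{2I_{\frac{n-1}{2}}}\bigl(I_{\frac{n-3}{2}}+I_{\frac{n+1}{2}}\bigr)$ and hence the stated formula.

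The main obstacle is the middle step: correctly reducing the spherical integral to the Bessel function, with the right order $\frac{n-1}{2}$ and the right power of $\kappa$. Only the $x$-dependent power $\kappa^{-(n-1)/2}$ matters, since it produces the $\frac{1-n}{2}x/\|x\|^2$ term. I would check this via the known vMF normaliser $C_p(\kappa)\propto\kappa^{p/2-1}/I_{p/2-1}(\kappa)$ with $p=n+1$. As a sanity check, for $n=2$ the integral should reduce to $\sinh\kappa/\kappa$. Interchanging gradient and integral is justified by smoothness and compactness of $\Sph^n$. The case $n=1$ uses $I_{-1}=I_1$ and needs no separate treatment.
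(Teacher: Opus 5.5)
Your proposal is correct and follows essentially the same route as the paper: expand $\|x_t-x_0\|^2$ using $\|x_0\|=1$, identify the remaining spherical integral as $\Gamma(\tfrac{n+1}{2})(\kappa/2)^{\frac{1-n}{2}}I_{\frac{n-1}{2}}(\kappa)$ with $\kappa=\|x_t\|/\sigma_t^2$, and differentiate the three $x_t$-dependent log terms using $I'_\nu=\tfrac12(I_{\nu-1}+I_{\nu+1})$. The only difference is that you derive this integral via the Poisson representation, whereas the paper simply cites it from Mardia--Jupp.
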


From the above result, we can obtain the following simplified expression for distributions over $2$-spheres. 

\begin{corollary} (Base score for $2$-spheres) \label{thm:base_2_sphere}
    For $\mathcal{M} = \mathcal{S}^2(1)=\{x \in \mathbb{R}^{3}:||x||=1\}$ and $\mu$  the usual normalised spherical measure on $\mathcal{S}^2(1)$,
\begin{equation}
\begin{split}
s^{base}(x_t,t) = x_t\left(-\frac{1}{\sigma_t^2} 
-\frac{1}{\|x_t\|^2} +
\frac{1}{\sigma_t^2\|x_t\|\tanh{\frac{\|x_t\|}{\sigma_t^2}}} \right) \\
\end{split}
\end{equation}
\end{corollary}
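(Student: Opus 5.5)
We obtain the statement by specialising Proposition~\ref{thm:base_sphere} to $n=2$ and simplifying the Bessel functions. With $n=2$, the orders that appear are $\frac{n-1}{2}=\frac12$, $\frac{n-3}{2}=-\frac12$ and $\frac{n+1}{2}=\frac32$. The middle term of Proposition~\ref{thm:base_sphere} becomes $\frac{1-2}{2}\frac{x_t}{\|x_t\|^2}=-\frac{1}{2}\frac{x_t}{\|x_t\|^2}$. Write $z:=\|x_t\|/\sigma_t^2$. What remains is to evaluate
\begin{equation*}
R(z):=\frac{I_{-1/2}(z)+I_{3/2}(z)}{2I_{1/2}(z)} .
\end{equation*}

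For this we use the closed forms of the half-integer modified Bessel functions:
\begin{equation*}
I_{1/2}(z)=\sqrt{\tfrac{2}{\pi z}}\sinh z,\qquad I_{-1/2}(z)=\sqrt{\tfrac{2}{\pi z}}\cosh z .
\end{equation*}
We could also take $I_{3/2}(z)=\sqrt{\tfrac{2}{\pi z}}\bigl(\cosh z-\tfrac{\sinh z}{z}\bigr)$ directly. Alternatively, it follows from the recurrence $I_{\nu-1}(z)-I_{\nu+1}(z)=\frac{2\nu}{z}I_\nu(z)$ with $\nu=\tfrac12$, which gives $I_{3/2}=I_{-1/2}-\frac{1}{z}I_{1/2}$. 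The recurrence route is cleaner, since it yields
\begin{equation*}
R(z)=\frac{2I_{-1/2}(z)-\frac{1}{z}I_{1/2}(z)}{2I_{1/2}(z)}=\coth z-\frac{1}{2z}.
\end{equation*}

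Finally, we substitute $R(z)$ back into Proposition~\ref{thm:base_sphere}:
\begin{equation*}
s^{base}=-\frac{x_t}{\sigma_t^2}-\frac{1}{2}\frac{x_t}{\|x_t\|^2}+\Bigl(\coth z-\frac{\sigma_t^2}{2\|x_t\|}\Bigr)\frac{x_t}{\sigma_t^2\|x_t\|}.
\end{equation*}
The last piece contributes $-\frac{1}{2}\frac{x_t}{\|x_t\|^2}$, which combines with the middle term to give $-\frac{x_t}{\|x_t\|^2}$. The $\coth$ term is exactly $\frac{x_t}{\sigma_t^2\|x_t\|\tanh(\|x_t\|/\sigma_t^2)}$, which yields the stated formula. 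This is valid for $x_t\neq 0$, as in the proposition.

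The only step with any real content is establishing the elementary forms of $I_{\pm1/2}$ and $I_{3/2}$. These are standard and can be verified from the power series $I_\nu(z)=\sum_k \frac{(z/2)^{2k+\nu}}{k!\,\Gamma(k+\nu+1)}$ using the duplication formula for $\Gamma$. After that, the rest is algebra in which we only need to track the signs and the factor $\tfrac12$ carefully.
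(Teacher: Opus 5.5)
Your proposal is correct and follows essentially the same route as the paper. Both specialise the $n$-sphere proposition to $n=2$, insert the closed forms of $I_{\pm1/2}$, obtain $I_{3/2}=I_{-1/2}-\frac{1}{z}I_{1/2}$ from the recurrence at $\nu=\tfrac12$ to get $\coth z-\frac{1}{2z}$, and then merge the resulting $-\frac{1}{2}\frac{x_t}{\|x_t\|^2}$ with the middle term.
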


This also provides some intuition: by rearranging the terms, we can see that:
\begin{equation*}
    s^{base}(x_t,t) = \frac{-x_t + \frac{x_t}{\|x_t\|\tanh{}\frac{\|x_t\|}{\sigma_t^2}}-\frac{\sigma_t^2x_t}{\|x_t\|^2}}{\sigma_t^2} \\
\end{equation*}
so by Equation \ref{eq:score_fn} we can deduce that:
\begin{equation}
    \EE^{base}[x_0|x_t] = \frac{x_t}{\|x_t\|\tanh{}\frac{\|x_t\|}{\sigma_t^2}}-\frac{\sigma_t^2x_t}{\|x_t\|^2}
\end{equation}
so that as $\sigma_t \to 0$ the expected value simply converges towards the perturbed sample projected onto $\mathcal{S}^2$, which makes sense for a uniform distribution.

\subsubsection{Distributions Over 3D Rotations}
\label{sec:rotations}
Another case of interest is rotations in $\SO(3)$ which can be represented as unit vectors in $\mathcal{S}^3$ (formally, quaternions in $\mathbb{R}^4$ -- see Appendix \ref{appx:quat_repres}). In this case, we can use Theorem \ref{thm:base_sphere} with $n=3$ and simplify the expression further to include only modified Bessel functions of the first kind of orders $0$ and $1$, which are more commonly implemented in deep learning frameworks:

\begin{corollary} (Base score for $3$-spheres) \label{thm:base_3_sphere}
For $\mathcal{M} = \mathcal{S}^3(1)=\{x \in \mathbb{R}^{4}:||x||=1\}$ and $\mu$  the usual normalised spherical measure on $\mathcal{S}^3(1)$,
    \begin{align}
    \label{eq:base_so3}
    s^{base}(x_t,t)  =& -\frac{x_t}{\sigma_t^2} + \frac{1 - n}{2}\frac{x_t}{\|x_t\|^2} + \\ 
    & \left( \frac{I_{0}\left(\frac{\|x_t\|}{\sigma_t^2}\right)}{I_{1}\left(\frac{\|x_t\|}{\sigma_t^2}\right)} -\frac{\sigma_t^2}{\|x_t\|}\right)\frac{x_t}{\sigma_t^2\|x_t\|} \nonumber
\end{align}
\end{corollary}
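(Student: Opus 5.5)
The plan is to specialise Proposition~\ref{thm:base_sphere} to $n=3$ and then remove the order-$2$ Bessel function with a standard recurrence. No new probabilistic argument is needed, because $\mu$ here is exactly the normalised spherical measure $\sigma^3$ on $\Sph^3(1)\subset\mathbb{R}^4$ covered by that proposition. Throughout, I write $z := \|x_t\|/\sigma_t^2$, which is positive for $x_t\neq 0$.

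First, substitute $n=3$ into the Bessel orders of Proposition~\ref{thm:base_sphere}. This gives $\tfrac{n-1}{2}=1$, $\tfrac{n-3}{2}=0$ and $\tfrac{n+1}{2}=2$. The first two terms $-x_t/\sigma_t^2$ and $\tfrac{1-n}{2}\,x_t/\|x_t\|^2$ carry over verbatim; the statement keeps the symbol $n$, which here equals $3$. The last term becomes
\begin{equation*}
\frac{I_0(z)+I_2(z)}{2I_1(z)}\,\frac{x_t}{\sigma_t^2\|x_t\|}.
\end{equation*}

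Second, eliminate $I_2$ using the recurrence for modified Bessel functions of the first kind,
\begin{equation*}
I_{\nu-1}(z)-I_{\nu+1}(z)=\frac{2\nu}{z}I_\nu(z).
\end{equation*}
With $\nu=1$ this gives $I_2(z)=I_0(z)-\tfrac{2}{z}I_1(z)$. Hence
\begin{equation*}
\frac{I_0(z)+I_2(z)}{2I_1(z)}=\frac{2I_0(z)-\tfrac{2}{z}I_1(z)}{2I_1(z)}=\frac{I_0(z)}{I_1(z)}-\frac{1}{z}=\frac{I_0(z)}{I_1(z)}-\frac{\sigma_t^2}{\|x_t\|}.
\end{equation*}
Substituting this back yields exactly \eqref{eq:base_so3}. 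Division by $I_1(z)$ is legitimate because $I_1(z)>0$ for $z>0$.

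There is no real obstacle in this argument. The only point needing care is citing the correct form of the recurrence, in particular its sign and the factor $2\nu/z$. As a sanity check on the sign, I would verify it against the series expansions at small $z$: $I_0\approx 1$, $I_1\approx z/2$ and $I_2\approx z^2/8$ satisfy $I_0-I_2\approx \tfrac{2}{z}I_1$ to leading order.
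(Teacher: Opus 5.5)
Your proposal is correct and is essentially the paper's own proof. Like the paper, you specialise the $n$-sphere formula to $n=3$ and apply the recurrence with $\nu=1$, $I_2(z)=I_0(z)-\tfrac{2}{z}I_1(z)$, to rewrite the third term as $\bigl(I_0(z)/I_1(z)-\sigma_t^2/\|x_t\|\bigr)\frac{x_t}{\sigma_t^2\|x_t\|}$; your remarks on the positivity of $I_1$ and the small-$z$ sign check are sound extras the paper omits.
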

Note that this representation is achieved via a 2-to-1 mapping from $\Sph^3$ to $\SO(3)$, where two antipodal points, $x$ and $-x$, correspond to the same rotation.

\textbf{Parity Equivariance}. To account for the double-cover geometry of the rotation group, the distribution of $\Sph^3$ must be invariant under antipodal symmetry. Hence, $p(x)=p(-x)$ for all $x\in\Sph^3$. Consequently, the score function $s(x_t,t)$ must satisfy parity equivariance: $s(-x_t,t) = -s(x_t,t)$, that is, it is odd with respect to the antipodal point. To enforce this structurally, we can parameterise the residual $\delta_\theta$ via a universal parameterisation for parity-equivariant functions via antisymmetrization \cite{de2022riemannian}:  
\begin{equation}
        \delta(x_t,t) \approx \frac{1}{2} \bigg (f_\theta(x_t,t) - f_\theta(-x_t,t) \bigg)
\end{equation}
where $f_\theta(x_t,t)$ is a neural network. Finally, it is trivial to show that our base score in Eq. ~\eqref{eq:base_so3} is parity equivariant, such that $s^\text{base}(-x_t,t) = -s^\text{base}(x_t,t)$. Hence, our score field defined with our composition $s_\theta(x_t,t):= \delta_\theta(x_t,t) + s^\text{base}(x_t,t)$ is strictly parity equivariant.

\textbf{Quotient-Space Diffusion via canonicalisation.} One critical issue with learning conditional distributions over 3D rotations is \emph{non-identifiability} in the presence of rotational symmetries. In particular, the posterior $p(R\mid c)$, where $c$ is any conditioning variable (e.g.\ an image or a molecular graph), is generally \emph{multimodal}: multiple distinct rotations correspond to the same physical state and therefore to the same observation. Formally, for an object with discrete symmetry group $G\subset \SO(3)$, the observation map is invariant under the right action of $G$, i.e.\ $c(R)=c(Rg)$ for all $g\in G$. In the idealized noiseless limit, the posterior concentrates on the symmetry orbit of a ground-truth pose and can be written as a uniform mixture of Dirac measures,
\begin{equation}
    p(R\mid c) \;=\; \frac{1}{|G|}\sum_{g\in G}\delta\bigl(R_{\mathrm{gt}}\,g\bigr),
\end{equation} 
where $R_{\mathrm{gt}}\in \SO(3)$ is any representative consistent with $c$, and the physically meaningful pose is the equivalence class $[R_{\mathrm{gt}}]:=\{R_{\mathrm{gt}}g:g\in G\}$.

Standard DSM trains a network to approximate the conditional score $s(R_t,t\mid c)=\nabla_{R_t}\log p_t(R_t\mid c)$. When $p(R\mid c)$ is multimodal, diffusion trajectories originating from different symmetry-related modes overlap at high noise levels, so the same noisy state $R_t$ can correspond to multiple incompatible denoising targets. In an ambient parameterisation (e.g.\ quaternions in $\mathbb{R}^4$), this induces gradient conflict and drives the model toward Euclidean averages of the modes; such averages typically lie in low-density regions and need not correspond to a valid rotation, yielding degenerate \emph{ghost rotations}.

The root cause is that the physically meaningful variable is not $R\in \SO(3)$, but the equivalence class $[R]\in \SO(3)/G$. We therefore reformulate the problem on the quotient space $\mathcal{M}_q=\SO(3)/G$, on which each symmetry orbit collapses to a single point and the conditional distribution becomes unimodal. In practice, we implement this by canonicalising each ground-truth pose \emph{before} noise injection, i.e.\ by selecting a deterministic representative of the orbit $\{Rg:g\in G\}$ in a fixed fundamental domain. A concrete quaternion-based canonicalisation rule and the associated lifting procedure are given in Appendix~\ref{appx:canon}.



\section{Results}
We evaluate our approach on several benchmarks and compare to both on-manifold as well as ambient-space methods. To obtain a controlled evaluation, in each task we use the same set of training samples and run each method for the same number of steps with the same batch size and the same (or similar) neural network architecture. We evaluate all methods using a single sample-based metric for manifolds -- Maximum Mean Discrepancy (MMD) \cite{mehraban2025quantization}, with the same approach as in \cite{de2022riemannian}, to avoid comparing between different log-likelihood approximation methods for manifolds, and to avoid related log-likelihood pathologies for ambient-space methods (see \cite{loaiza2024deep}).
Full data and training specifications are in Appendix \ref{app:training}.

\textbf{Summary of results. } Our results reveal three main findings: (1) both our method and DSM achieve similar, and in some cases better, MMD compared to other methods, while converging and sampling faster 
(Tables \ref{tab:earth}, \ref{tab:so3}, \ref{tab:symsol}), 
showing that ambient-space methods are strong competitors in this domain; (2) Compared to DSM, MAD leads to faster convergence (Figure \ref{fig:so3_mmd_vs_time}) and consistently lower loss (Appendix Figure \ref{fig:loss_dsm_vs_mad}) while maintaining simplicity, showing that ambient-space DSM can indeed be effectively modified to incorporate information on the geometry of the support so that learning can focus on the distribution within it (e.g. Fig. \ref{fig:earth_closeup_data}); and (3) in the discrete experiments, MAD is able recover the true distribution, as may be anticiapted from Theorem \ref{thm:o(1)_discrete}, while DSM struggles to do so, often generating out-of-distribution samples.
These findings highlight the practical benefits of MAD in low-dimensional settings.

\begin{table*}[h]
    \centering \caption{Earth MMD results (mean ± std) computed using the minimum between 1000 samples and the size of the available test set.} \label{tab:earth}
    \begin{small}
    \begin{sc}
    \begin{tabular}{l c c c c}
        \toprule
        Method & Volcano & Earthquake & Flood & Fire \\
        \midrule
        RSGM-divfree &0.1272 ± 0.0289 & 0.0602 ± 0.0101 & 0.0579 ± 0.0094 & 0.0779 ± 0.0172 \\
        RSGM-ambient & 0.1114 ± 0.0090 & 0.0803 ± 0.0067 & 0.0751 ± 0.0087 & 0.0830 ± 0.0152 \\
        DSM         & 0.1082 ± 0.0217 & 0.0488 ± 0.0096 & 0.0503 ± 0.0092 & 0.0456 ± 0.0165 \\
        MAD (ours)   & 0.1030 ± 0.0185 & 0.0526 ± 0.0108 & 0.0500 ± 0.0100 & 0.0452 ± 0.0131 \\
        \midrule
        Data size & 827 & 6120 & 4875 & 12809 \\
        \bottomrule
        \bottomrule
    \end{tabular}
    \end{sc}
    \end{small}
\end{table*}

\begin{table*}[h]\caption{$\SO(3)$ MMD (mean ± std) computed between generated sets and test set using 5,000 samples, and sampling time for 1K samples. }\label{tab:so3}
    \centering
    \begin{small}
    \begin{sc}
    \begin{tabular}{l c c c c  }
        \toprule
        Method & $K=16$ & $K=32$ & $K=64$ & Sample time (s/1k) \\
        \midrule
            FFF & 0.0230 ± 0.0023 & 0.0242 ± 0.0016 & 0.0204 ± 0.0018 &   0.0040 ± 0.0003\\ 
            RSGM & 0.0155 ± 0.0025 & 0.0171 ± 0.0024 & 0.0190 ± 0.0043 &   0.4628 ± 0.0190\\ 
            DSM & 0.0222 ± 0.0021 & 0.0209 ± 0.0046 &  0.0178 ± 0.0031 &   0.2216 ± 0.0350\\ 
            MAD (ours) & 0.0229 ± 0.0026 & 0.0220 ± 0.0052 & 0.0180 ± 0.0029 &  0.2305 ± 0.0360\\
        \bottomrule
        \bottomrule
    \end{tabular} 
    \end{sc}
    \end{small}
\end{table*}

\subsection{Earth data in $\mathcal{S}^2$}
We begin with earth data represented as points in $\mathcal{S}^2$ (Appendix \ref{appx:datasets_s2}). Results are depicted in Figure \ref{fig:earth} and Table \ref{tab:earth}. From Table \ref{tab:earth} we observe that all methods have either comparable MMDs (e.g. volcano, the smallest dataset), or that MAD and DSM perform slightly better (e.g. Fire, the largest dataset). 
Overall, earth experiments confirm that MAD can yield improved MMD compared to on-manifold methods and can match DSM while capturing sharper distributional detail (Figure \ref{fig:earth_closeup_data}-\ref{fig:earth_closeup_dsm}). 

\begin{figure}[h]
    \centering
    \begin{subfigure}[b]{0.14\textwidth}
        \includegraphics[width=\textwidth]{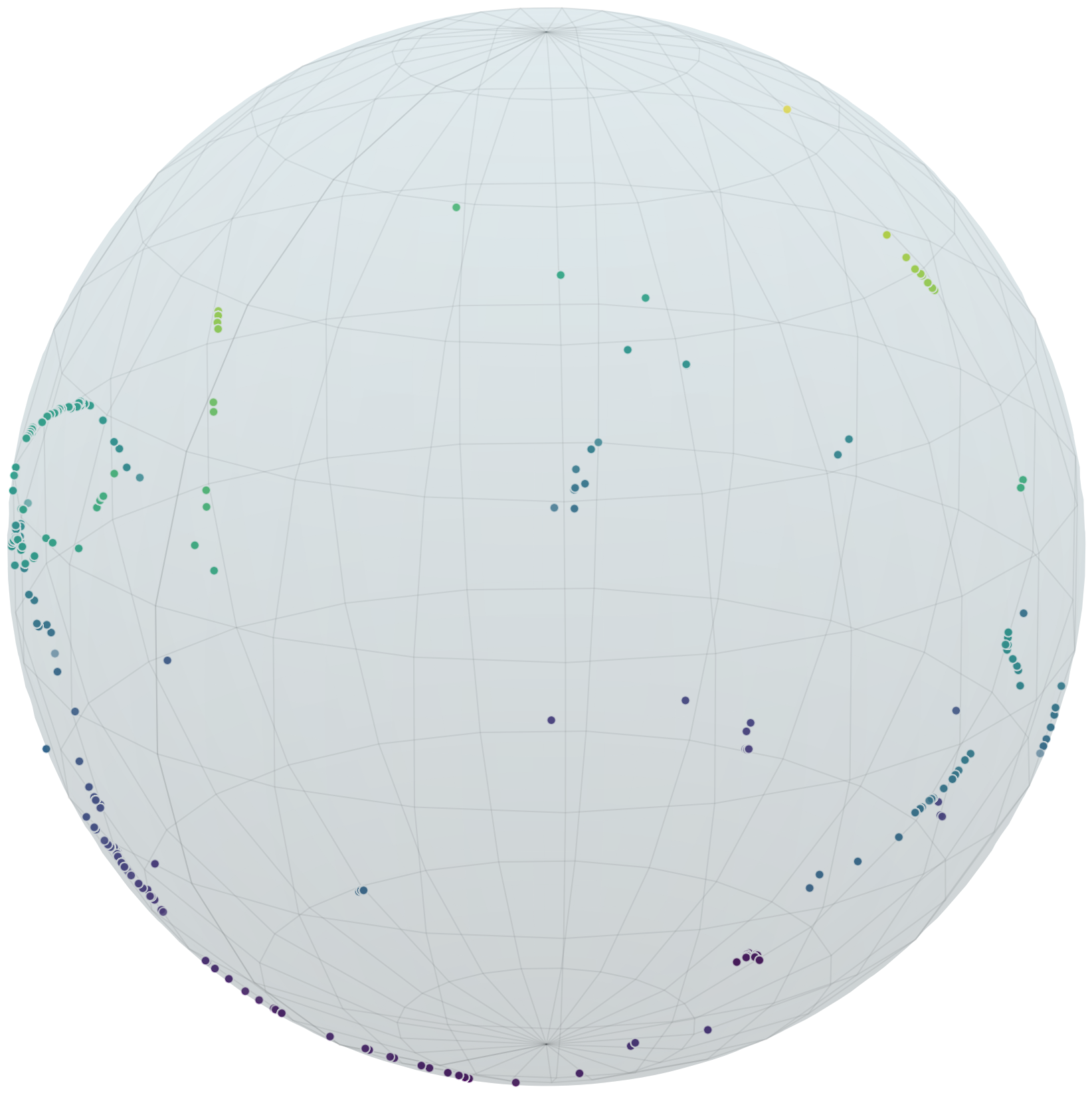}
        \caption{Volcano data}
    \end{subfigure}
    \hfill
    \begin{subfigure}[b]{0.14\textwidth}
        \includegraphics[width=\textwidth]{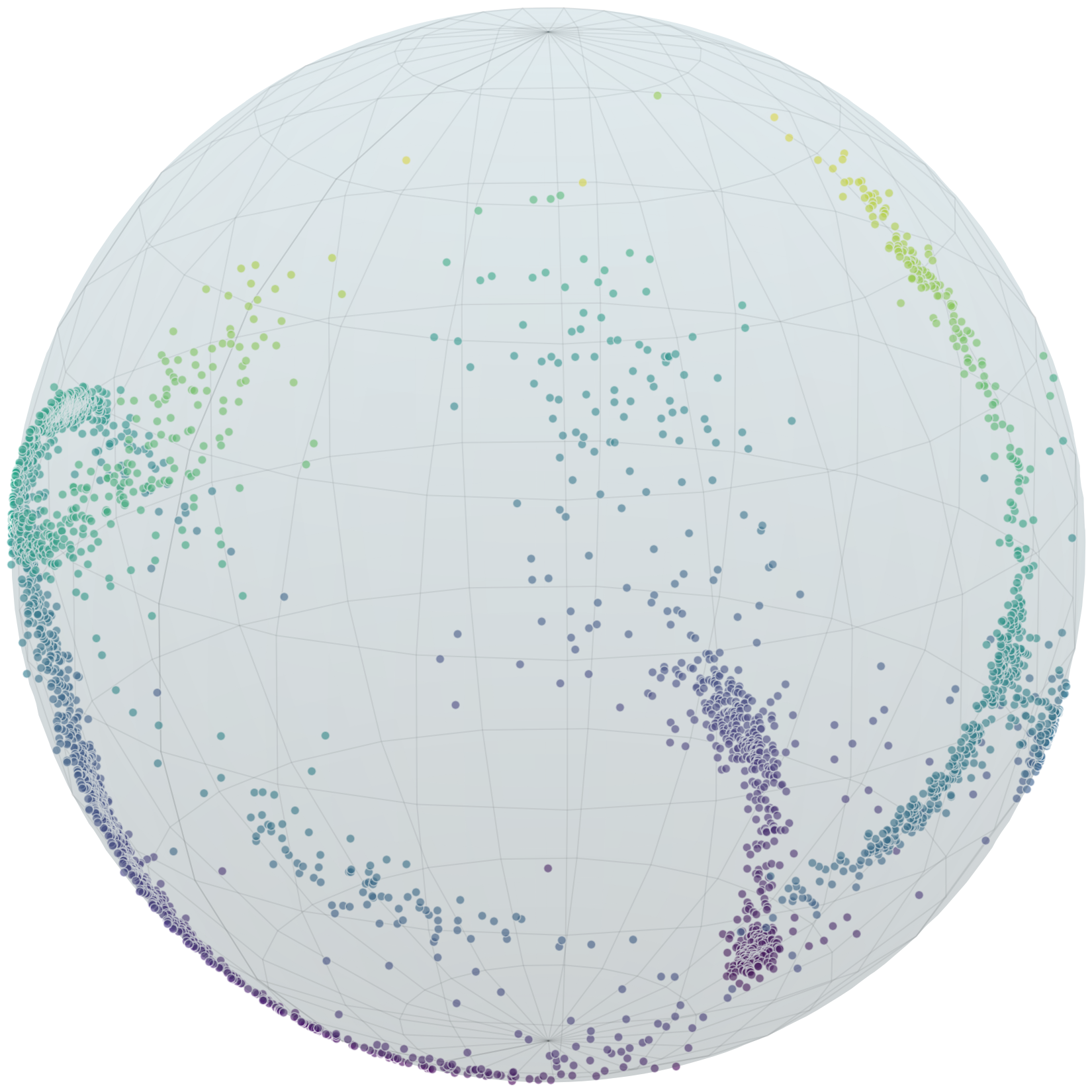}
        \caption{MAD}
    \end{subfigure}
    \hfill
    \begin{subfigure}[b]{0.14\textwidth}
        \includegraphics[width=\textwidth]{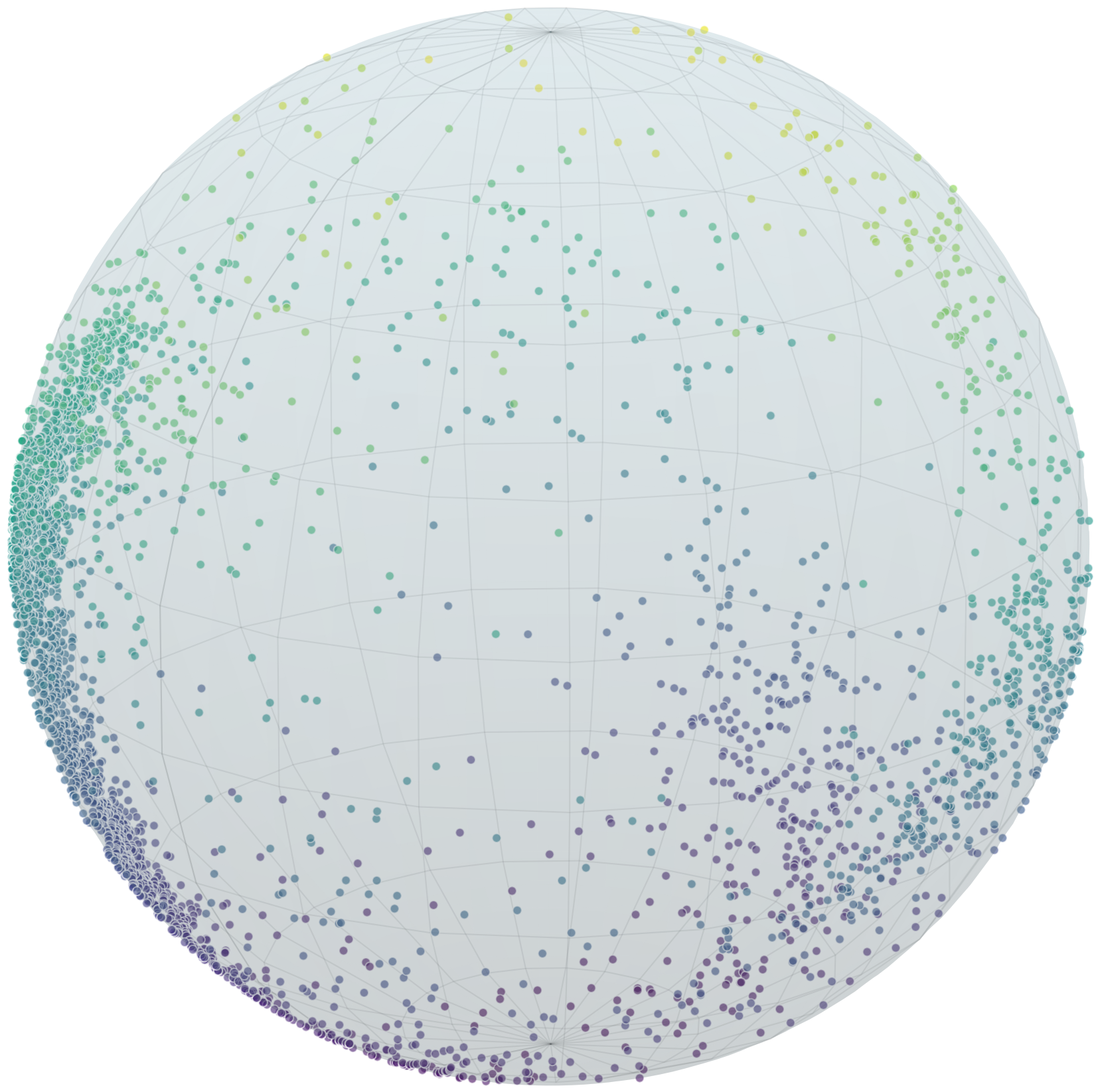}
        \caption{RSGM-ambient}
    \end{subfigure}
    %
    %
    \centering
    \begin{subfigure}[b]{0.14\textwidth}
        \includegraphics[width=\textwidth]{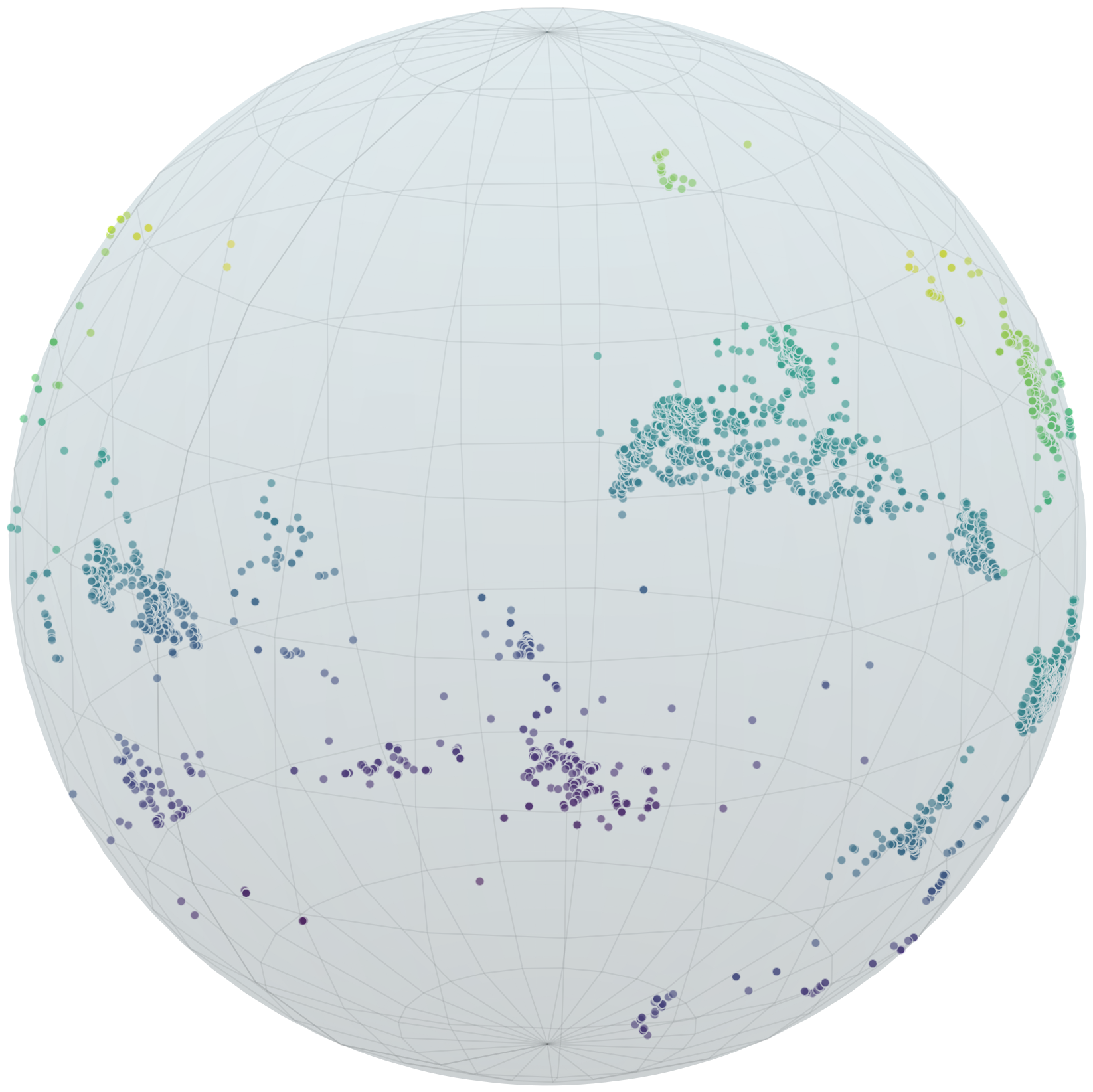}
        \caption{Fire data}
    \end{subfigure}
    \hfill
    \begin{subfigure}[b]{0.14\textwidth}
        \includegraphics[width=\textwidth]{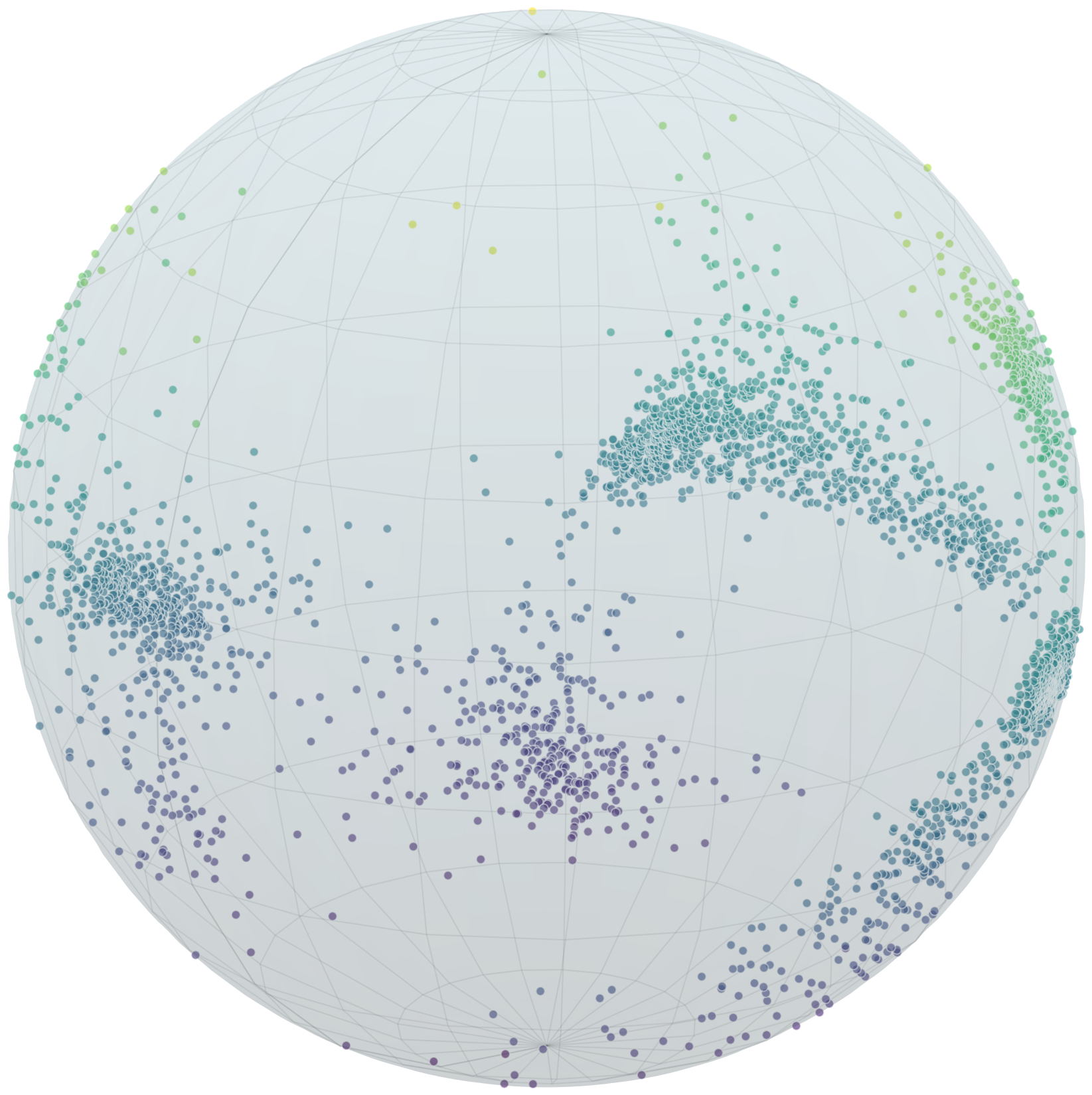}
        \caption{MAD}
    \end{subfigure}
    \hfill
    \begin{subfigure}[b]{0.14\textwidth}
        \includegraphics[width=\textwidth]{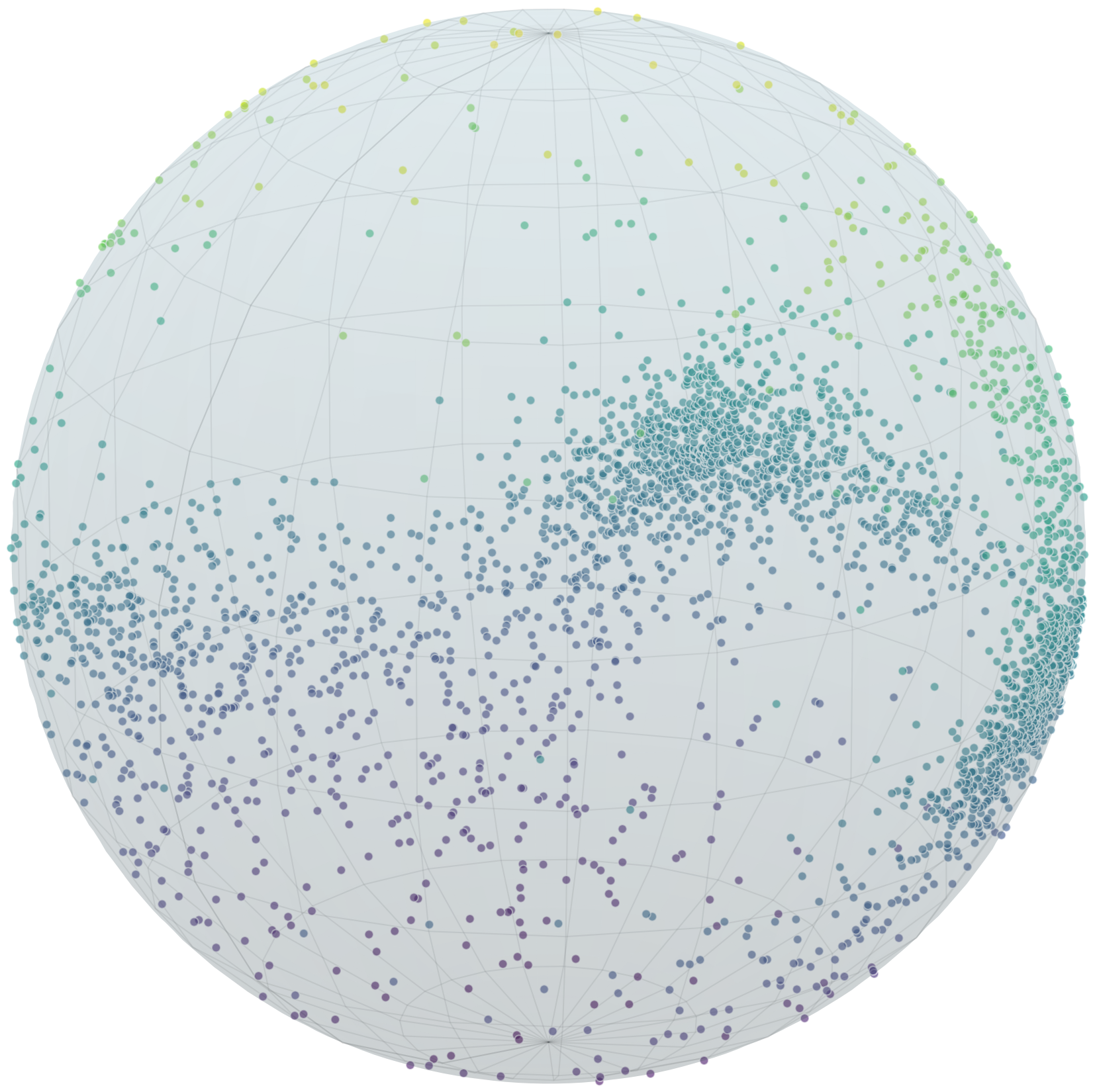}
        \caption{RSGM div-free}
    \end{subfigure}

        \centering
    \begin{subfigure}[b]{0.14\textwidth}
        \includegraphics[width=\textwidth]{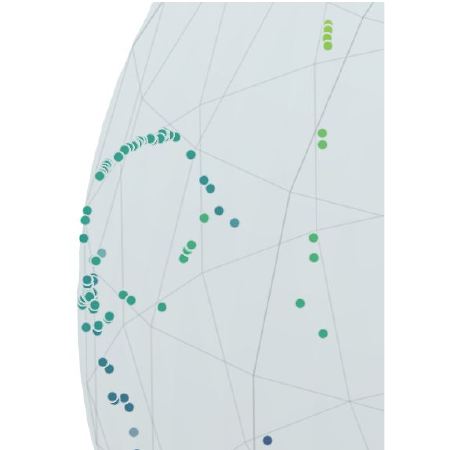}
        \caption{Volcano data} \label{fig:earth_closeup_data}
    \end{subfigure}
    \hfill
    \begin{subfigure}[b]{0.14\textwidth}
        \includegraphics[width=\textwidth]{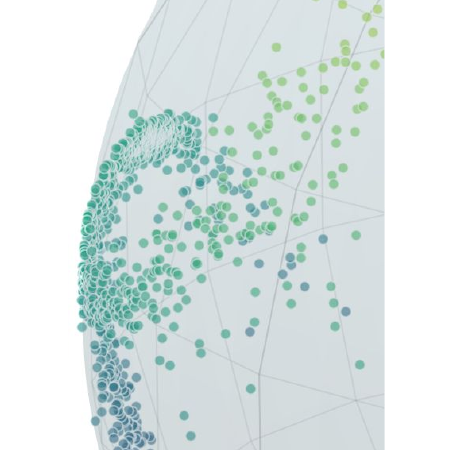}
        \caption{MAD} \label{fig:earth_closeup_ours}
    \end{subfigure}
    \hfill
    \begin{subfigure}[b]{0.14\textwidth}
        \includegraphics[width=\textwidth]{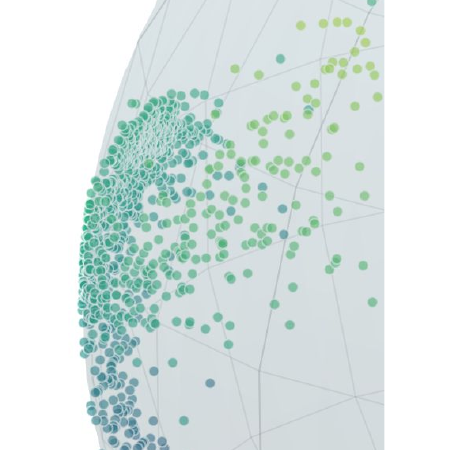}
        \caption{DSM} \label{fig:earth_closeup_dsm}
    \end{subfigure}
    \caption{Earth data (left panel) vs. generated samples from different methods.The colour map, defined by the polar axis, is left for visualisation purposes.}
    \label{fig:earth}
\end{figure}

\subsection{Rotations in $\SO(3)$}
We next continue to a more complicated task -- generating rotation matrices in $\SO(3)$, represented as quaternions in $S^3$. We test Gaussian mixtures in $\SO(3)$ as experimented in \cite{de2022riemannian} with increasing complexity from $K=16$ components (easiest) to $K=64$ components (toughest). As shown in Table \ref{tab:so3}, for $K=16$ RSGM outperforms all methods. However its MMD increases with distribution complexity until all methods are on par at $K=64$. Notably, MAD exhibits the fastest convergence (Figure \ref{fig:so3_mmd_vs_time}) at effectively no added sampling cost compared to DSM (Table \ref{tab:so3}). This suggests that incorporating a base score improves optimisation without sacrificing the simplicity of ambient-space sampling. Figure \ref{fig:so3_plots} shows qualitative differences between the different methods, with slightly better separation between the different components using MAD compared to DSM, and similar separation between MAD and RSGM.

\begin{figure}[ht]
  \vskip 0.2in
  \begin{center}
    \begin{minipage}[c]{0.35\columnwidth}
        \begin{subfigure}[t]{1\linewidth}
            \includegraphics[width=\linewidth]{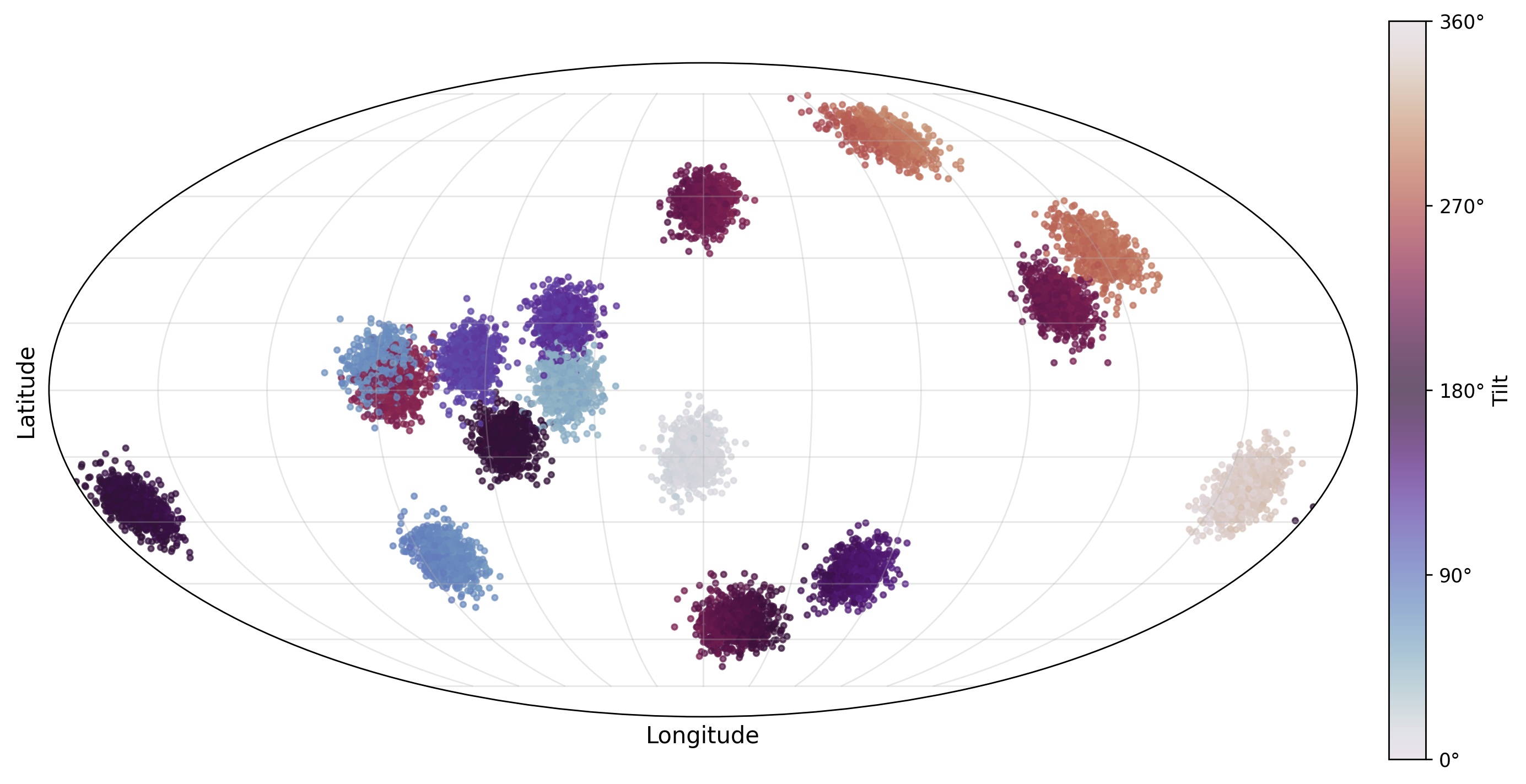}
            \caption{Data}
        \end{subfigure}\hfill
    \end{minipage}
    \hfill
    \begin{minipage}[c]{0.63\columnwidth}
        \centering
        \begin{subfigure}[t]{0.5\linewidth}
            \includegraphics[width=\linewidth]{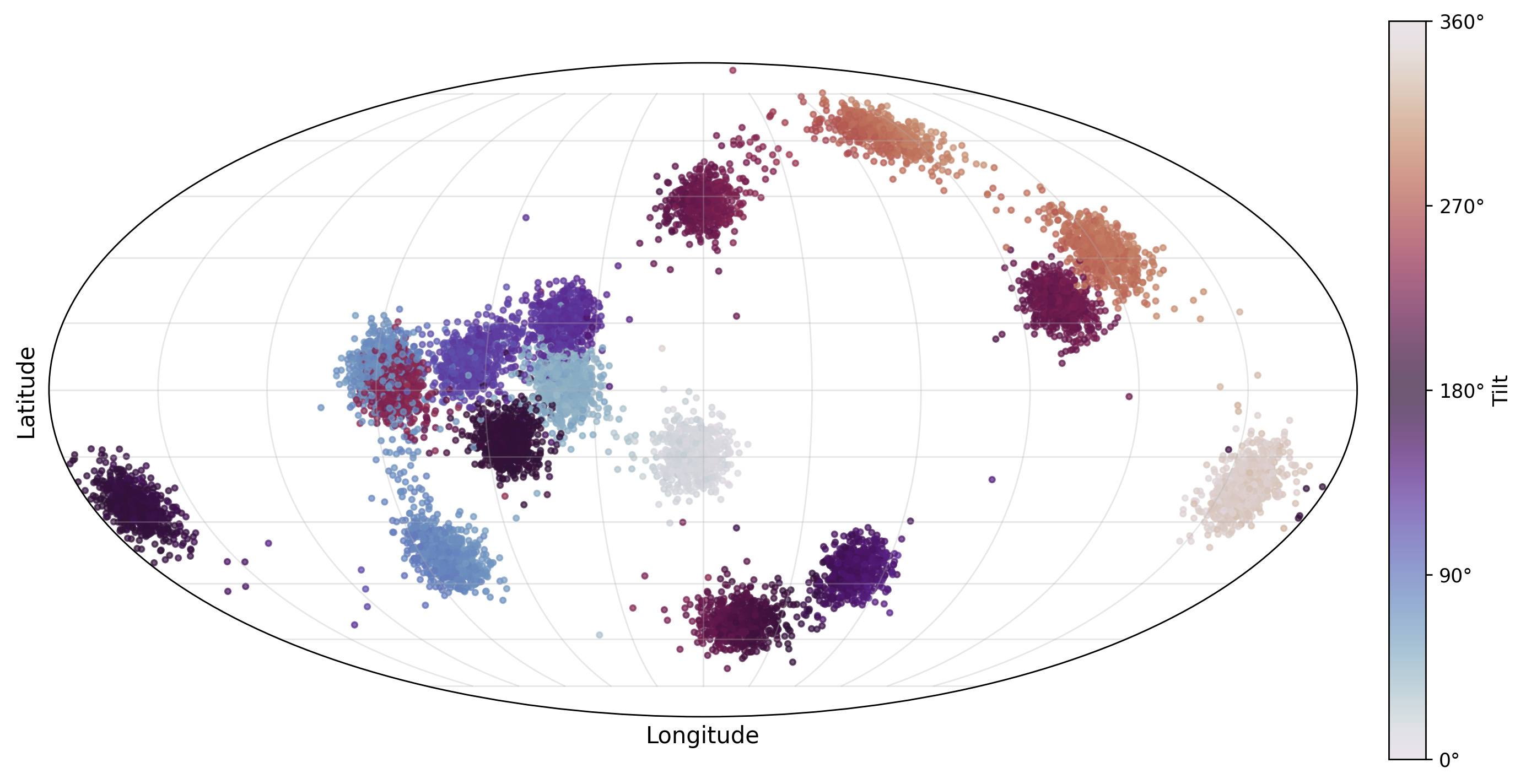}
            \caption{MAD}
        \end{subfigure}\hfill
        \begin{subfigure}[t]{0.48\linewidth}
            \includegraphics[width=\linewidth]{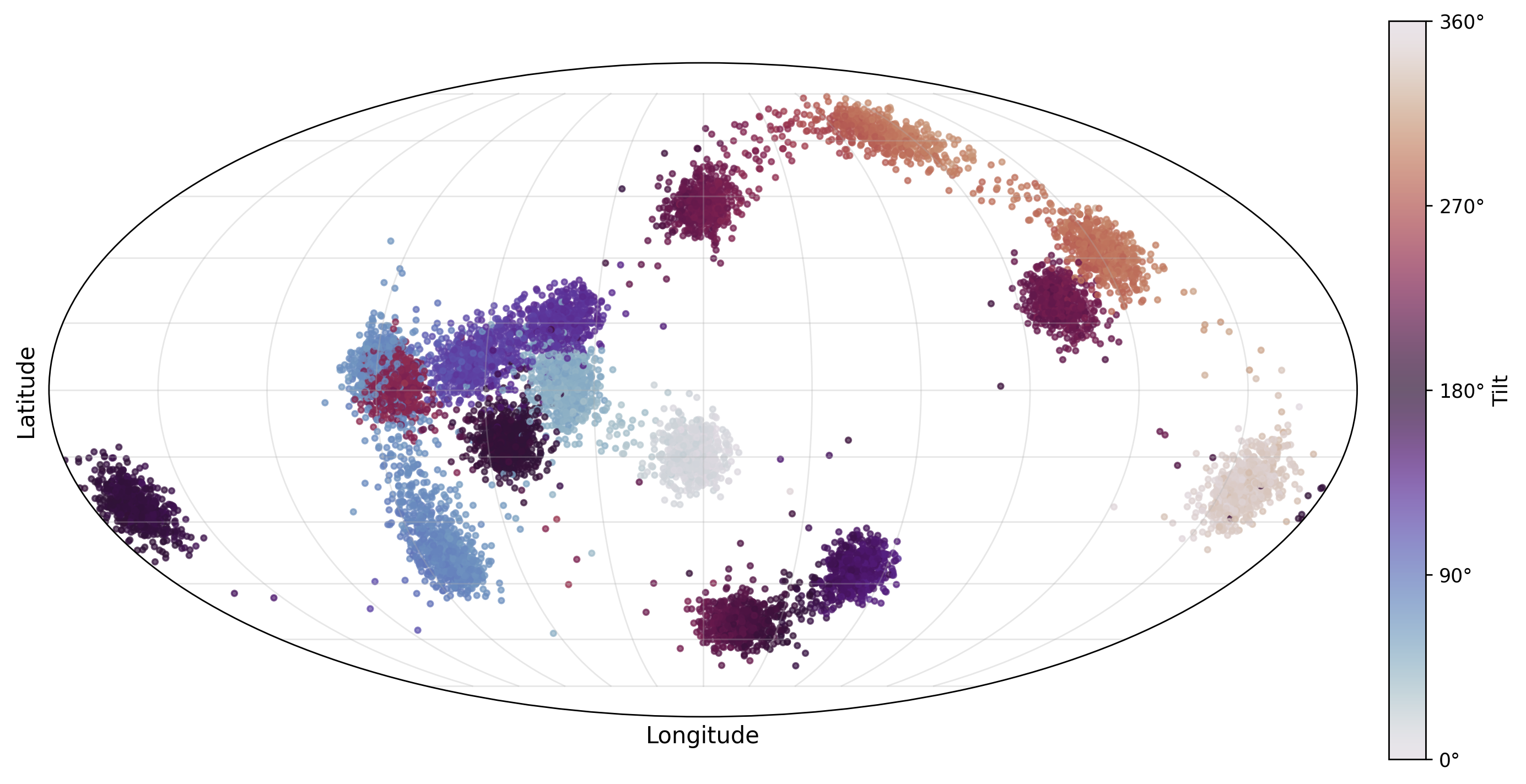}
            \caption{DSM}
        \end{subfigure}
    
        \medskip
    
        \begin{subfigure}[t]{0.48\linewidth}
            \includegraphics[width=\linewidth]{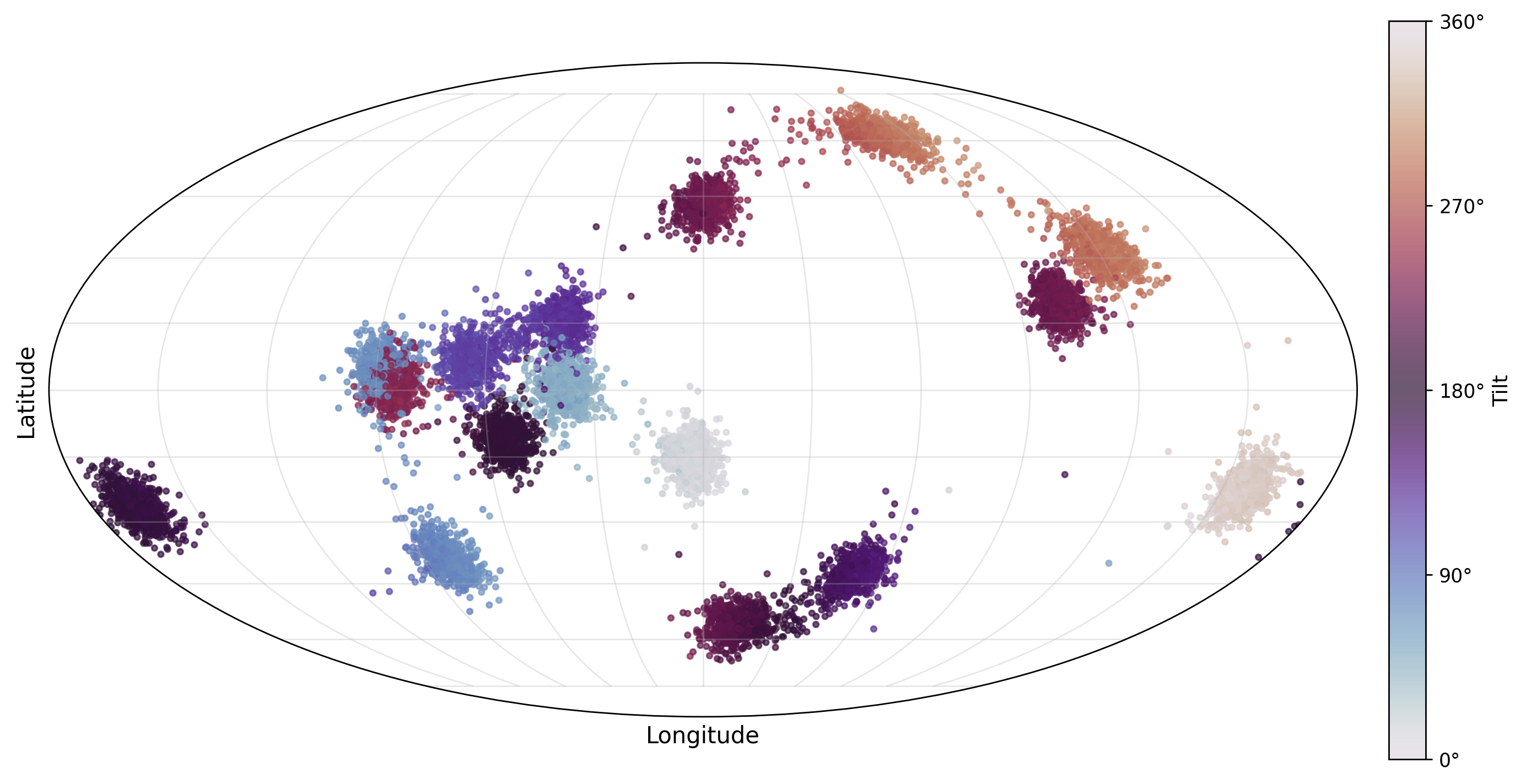}
            \caption{RSGM}
        \end{subfigure}\hfill
        \begin{subfigure}[t]{0.48\linewidth}
            \includegraphics[width=\linewidth]{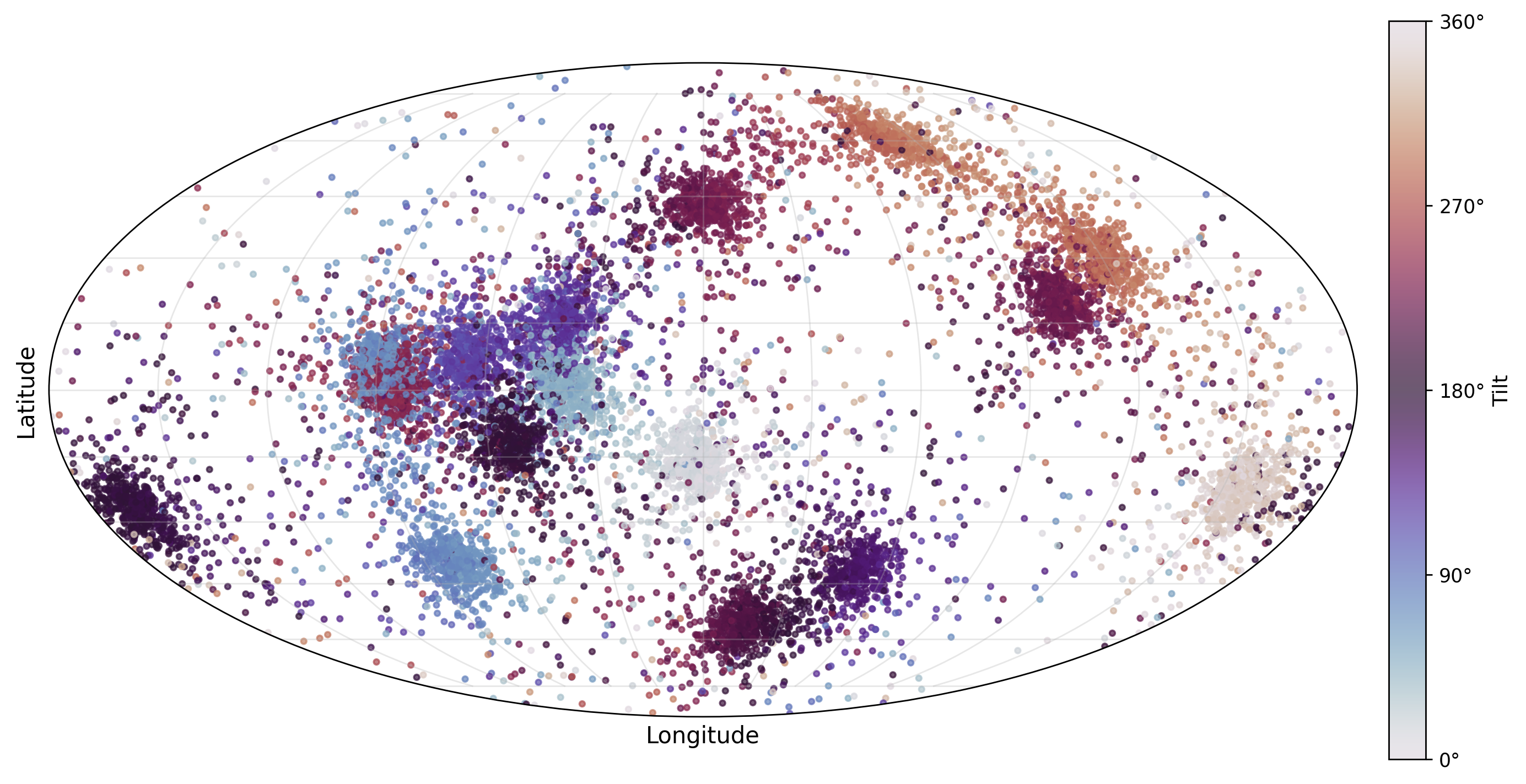}
            \caption{FFF}
        \end{subfigure}
    \end{minipage}
    \caption{$\SO(3)$ results for $K=16$ in the form of a Mollweide projection for different methods and data. All figures are from seed $0$. Points are coloured according to angle, not component.}
    \label{fig:so3_plots}
      \end{center}
\end{figure}

\begin{figure}[ht]
    \centering
        \includegraphics[width=\columnwidth]{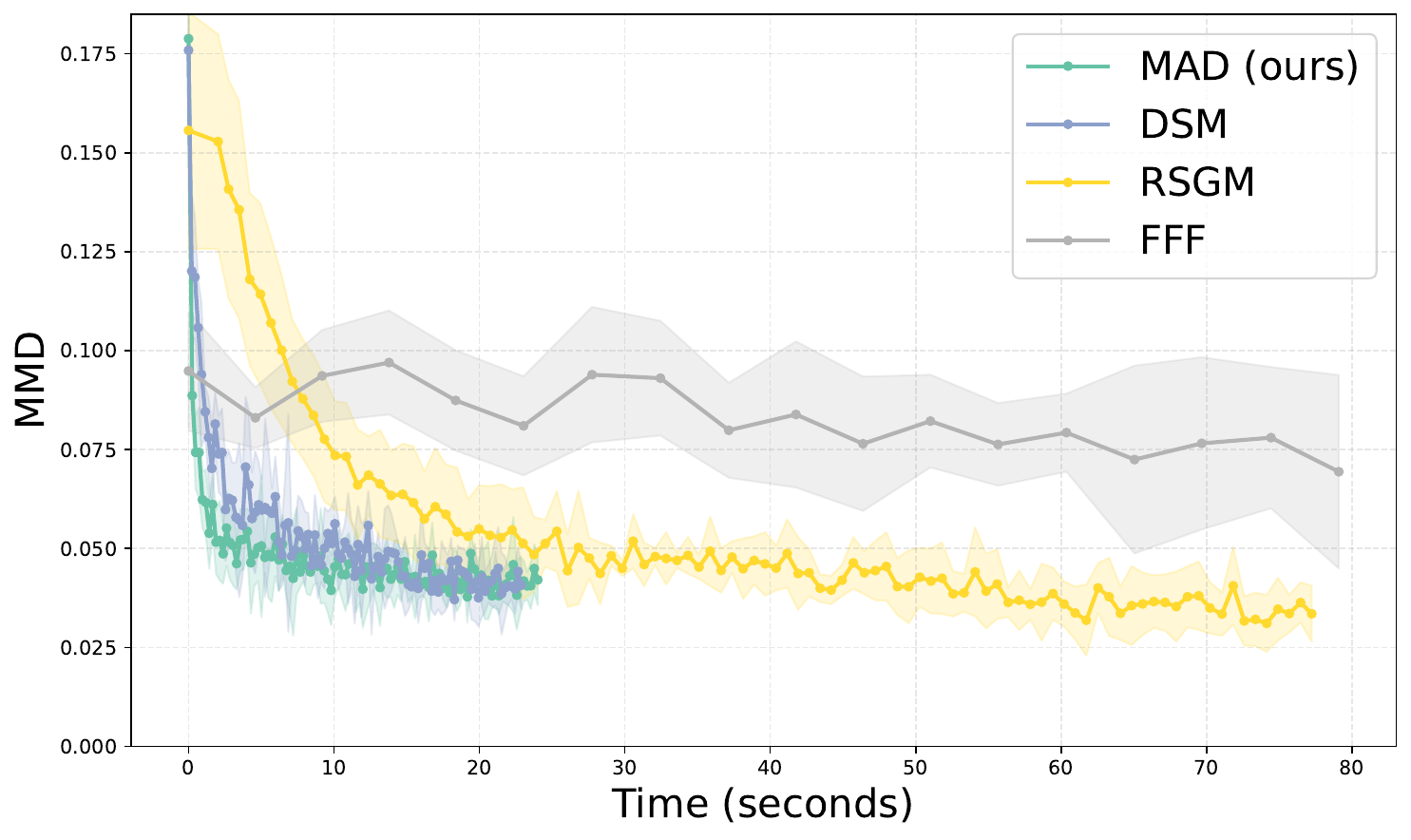}
    \caption{MMD vs cumulative training-step time. Shaded areas are ± std across $5$ runs. MMD is computed against the training set for $K=64$ using 1,000 samples for efficiency (the table uses 5,000). Note that for better clarity of comparison between all methods, the time shown is up to 80, but FFF ends after 400. } \label{fig:so3_mmd_vs_time}
\end{figure}

\textbf{SymSol: $\SO(3)$ with symmetries.} 
We study conditional generative modelling of 3D rotations for objects with discrete rotational symmetries. Concretely, given a single 2D image $c$ of a rigid object, the goal is to learn a conditional distribution over its pose,
$p(R|c)$ with $R \in \SO(3)$, and to generate samples from this posterior. Our experiments use the \textsc{SymSol I} dataset \citep{murphy2022implicitpdfnonparametricrepresentationprobability}, consisting of 2D images of simple symmetric solids (e.g.\ cylinders, cubes, icosahedra) paired with ground-truth rotations (see Figure \ref{fig:symsol}). Since many of these shapes admit a non-trivial symmetry group $G \subset \SO(3)$, e.g. a 2D image of a cylinder can be represented by two distinct rotations, the conditional pose distribution is generally non-identifiable for symmetric solids. Thus, we resort to the quotient-space canonicalisation outlined in \ref{sec:rotations}. Further details on the design of the conditional denoiser and training are in Appendix \ref{app:symsol}. Since \textsc{SymSol I} provides ground truth rotations, we compute the mean \textit{spread}\footnote{Spread evaluates the minimum expected average angular deviation $\mathbb{E}_{R\sim p(R|I)}[\min_{R'\in \{R_{gt}g | g \in G\}]} d(R, R')]$, where $d(R,R')$ denotes the geodesic distance.} as suggested in \citep{liu2023delvingdiscretenormalizingflows, murphy2022implicitpdfnonparametricrepresentationprobability} as our key metric. 

\begin{figure}[ht]
    \centering
    \includegraphics[width=\columnwidth]{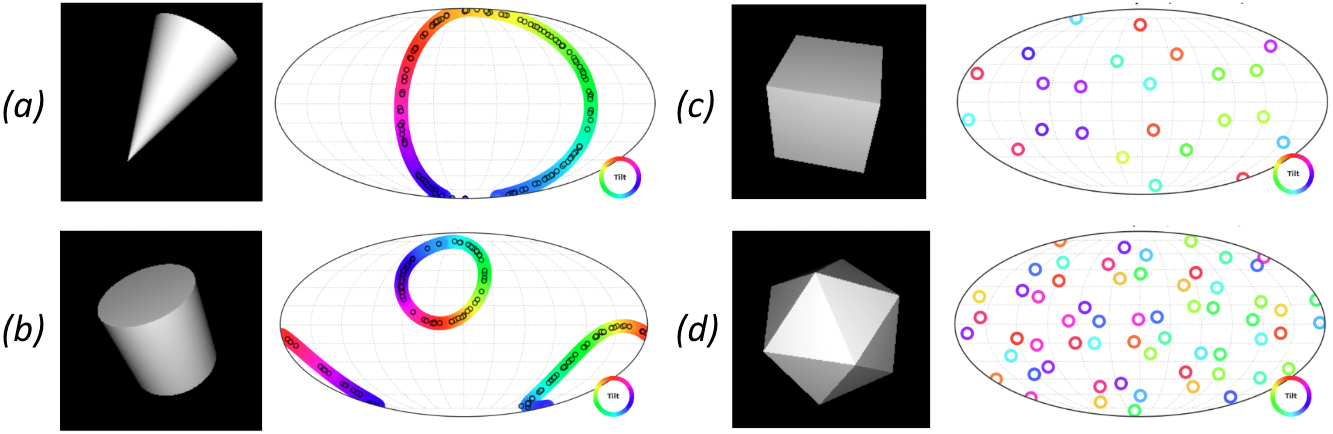}
    \caption{2D image snapshots of a randomly rotated cone, cylinder, cube, icosahedron (a-d) and Mollewide projections of the corresponding pose distributions in $\SO(3)$. Black circles in (a-b) denote samples from MAD and are excluded from (c-d) for visual clarity.}
    \label{fig:symsol}
\end{figure}

For all shapes, MAD demonstrates improved performance with respect to DSM, once again demonstrating faster convergence (Figure \ref{fig:symsol_training}). MAD also shows superior guarantees in reaching the manifold, with consistently low manifold deviation across samples. Notably, for more complex shapes with higher-order symmetries such as the cube ($|G|=24$) and the icosahedra ($|G|=60$), MAD outperforms DSM, suggesting MAD spends more optimization efforts learning the otherwise rather complex distribution. Finally, we demonstrate MAD is on-par with prior methods in Table \ref{tab:symsol}, highlighting that our score decomposition and quotient-space canonicalisation offers an efficient and simple solution to tackling complex distributions in $\SO(3)$.

\begin{figure*}[ht]
    \centering
    \includegraphics[width=\textwidth]{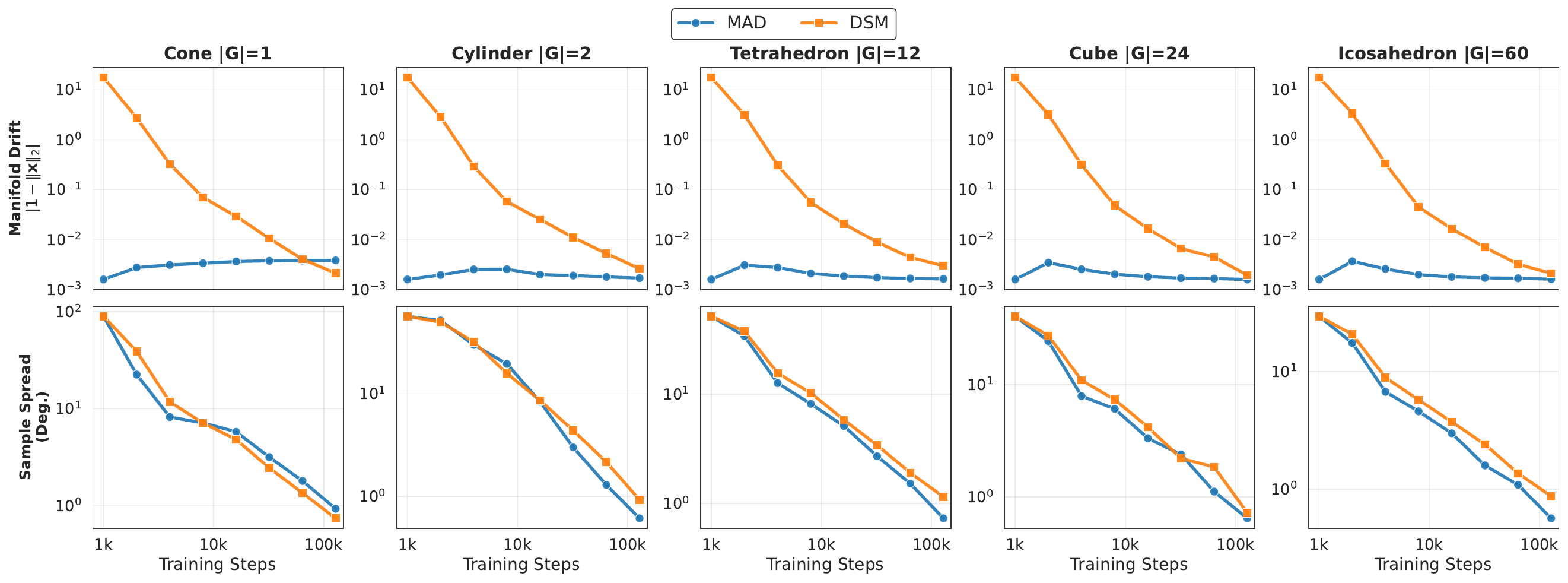}
    \caption{Top Row: Manifold Drift ($|1 - \|x\|_2|$). MAD (blue) maintains consistently lower drift, indicating predictions stay closer to valid rotations. DSM (orange) exhibits higher drift, indicative of 'ghost rotation' averaging. Bottom Row: Sample Spread (lower is better).}
    \label{fig:symsol_training}
\end{figure*}

\begin{table*}[t]
    \caption{Spread estimation on \textsc{SymSol I}. Values are in degrees ($\downarrow$). Baseline results are taken from \citet{liu2023delvingdiscretenormalizingflows}.}
    \label{tab:symsol}
    \centering
    \begin{small}
    \begin{sc}
    \begin{tabular}{lcccccc}
        \toprule
        Method & Cone & Cyl. & Tet. & Cube & Ico. & Avg. \\
        \midrule
        \citet{deng2020deepbinghamnetworksdealing} & 10.1 & 15.2 & 16.7 & 40.7 & 29.5 & 22.4 \\
        IPDF \citep{murphy2022implicitpdfnonparametricrepresentationprobability} & 1.4 & 1.4 & 4.6 & 4.0 & 8.4 & 4.0 \\
        DNF \cite{liu2023delvingdiscretenormalizingflows} & \textbf{0.5} & \textbf{0.5} & \textbf{0.6} & \textbf{0.6} & 1.1 & \textbf{0.7} \\
        \midrule
        DSM + Quotient & 0.7 & 0.9 & 1.2 & 0.7 & 0.9 & 0.9 \\
        MAD + Quotient (ours) & 0.9 & 0.6 & 0.7 & \textbf{0.6} & \textbf{0.6} & \textbf{0.7} \\
        \bottomrule
    \end{tabular}
    \end{sc}
    \end{small}
\end{table*}

\subsection{Discrete Data}
Finally, an increasingly important but complicated task is learning to generate from discrete distributions. There has been great interest recently in doing so with diffusion models, mainly in the context of large-language models (e.g. \cite{shi2024simplified}). 
Discrete distributions also represent an extreme case of low-dimensional structure, where the support consists of isolated points. This setting directly tests how well our method can incorporate support information.
We use a distribution on a discrete set of points on the unit circle shown in Figure \ref{fig:discrete}. Note that we do not project the samples onto the support, so that the proximity of the generated samples to the support is visible. As a first simple task, we assume a uniform distribution (Figure \ref{fig:discrete_data_uniform}). Not surprisingly, our method is able to learn this distribution easily since the difference between the base score and the real score is zero, leaving only the base score to take effect. In comparison, DSM generates out-of-distribution samples between each of the support points. Next we complicate the target distribution to a skewed distribution (Figure \ref{fig:discrete_data_skewed} and Appendix \ref{appx:data_discrete}). Here the difference becomes even more pronounced, with MAD strongly resembling the target distribution, while  DSM has many out-of-distribution samples. The ability of MAD to generate samples close to the support may be expected as a consequence of Theorem \ref{thm:o(1)_discrete}, which shows that as $\sigma_t$ decreases, the remaining component of the score will become approximately $0$. This is also relates to the findings from \cite{li2025scores} whereby the error between the learned and true score functions must be within $o(1)$ in order to recover the true distribution. 

\begin{figure}[ht] 
    \centering
        \begin{subfigure}[b]{0.14\textwidth}
        \includegraphics[width=\textwidth]{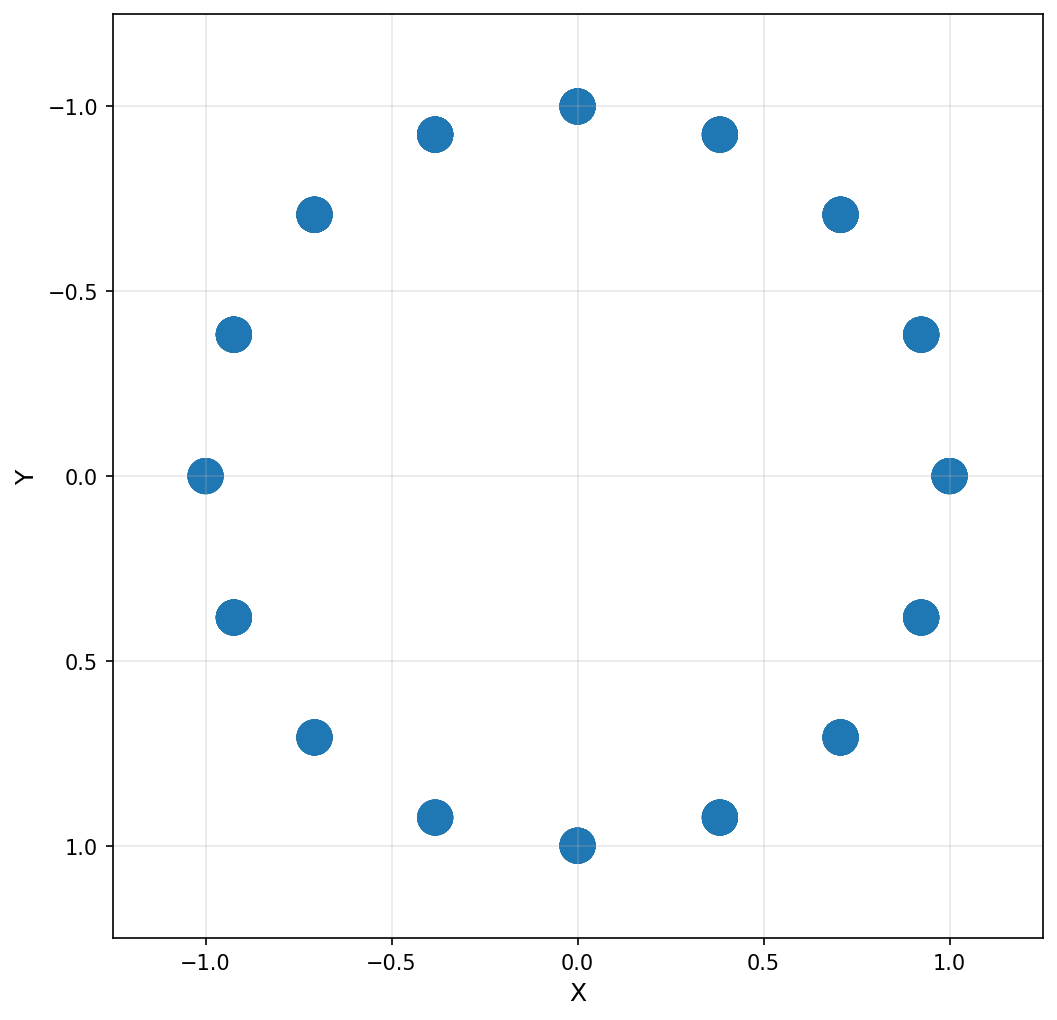}
        \caption{Data uniform} \label{fig:discrete_data_uniform}
    \end{subfigure}
    \hfill
    \begin{subfigure}[b]{0.14\textwidth}
        \includegraphics[width=\textwidth]{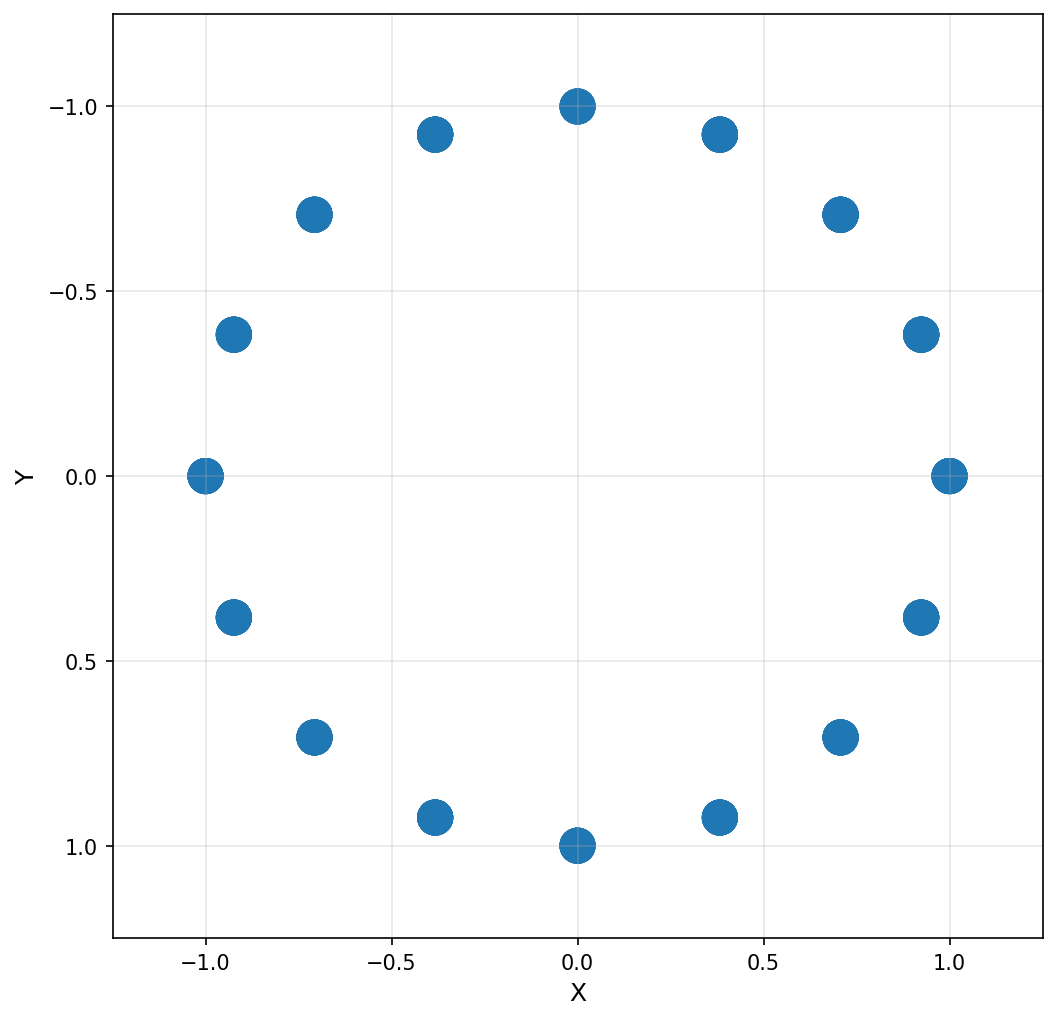}
        \caption{MAD uniform} \label{fig:discrete_mad_uniform}
    \end{subfigure}
    \hfill
    \begin{subfigure}[b]{0.14\textwidth}
        \includegraphics[width=\textwidth]{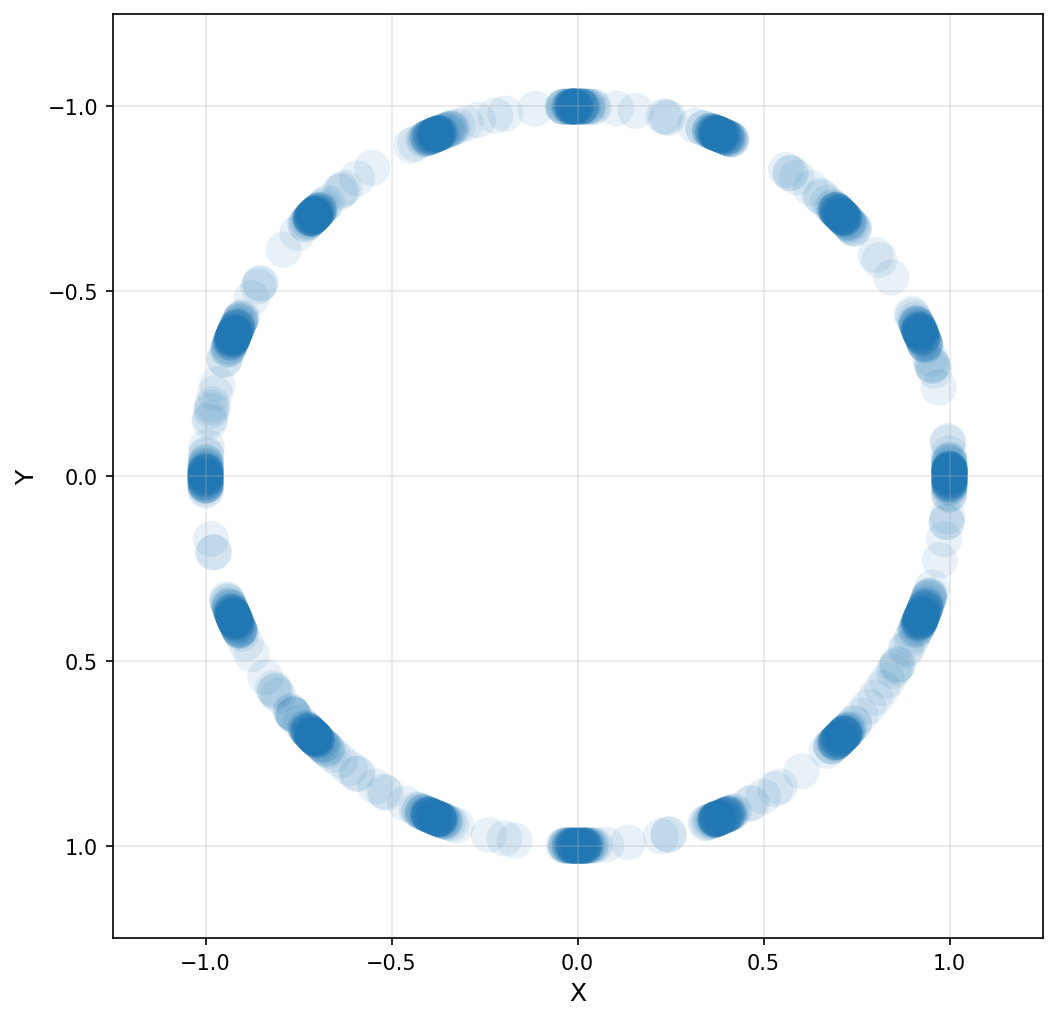}
        \caption{DSM uniform} \label{fig:discrete_dsm_uniform}
    \end{subfigure}
    \centering
    \begin{subfigure}[b]{0.14\textwidth}
        \includegraphics[width=\textwidth]{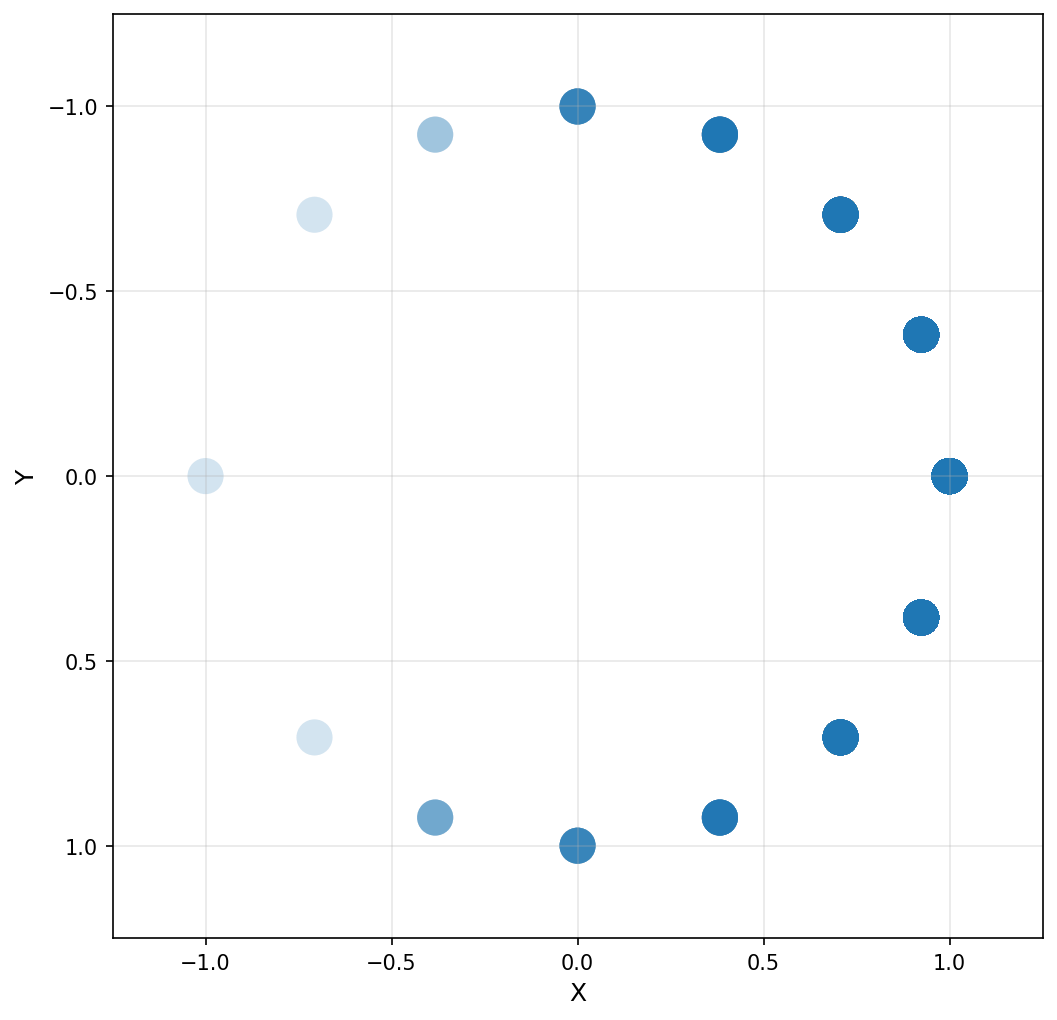}
        \caption{Data skewed} \label{fig:discrete_data_skewed}
    \end{subfigure}
    \hfill
    \begin{subfigure}[b]{0.14\textwidth}
        \includegraphics[width=\textwidth]{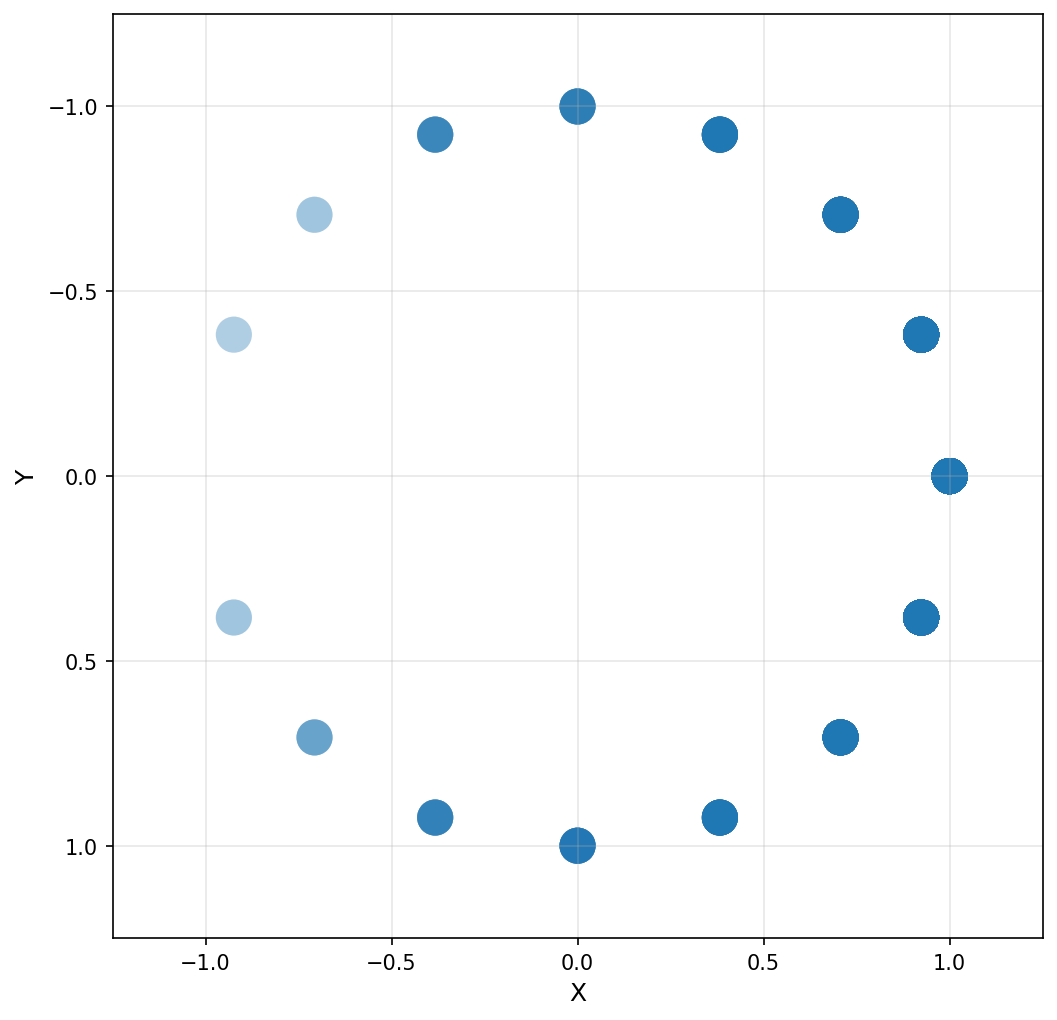}
        \caption{MAD skewed} \label{fig:discrete_mad_skewed}
    \end{subfigure}
    \hfill
    \begin{subfigure}[b]{0.14\textwidth}
        \includegraphics[width=\textwidth]{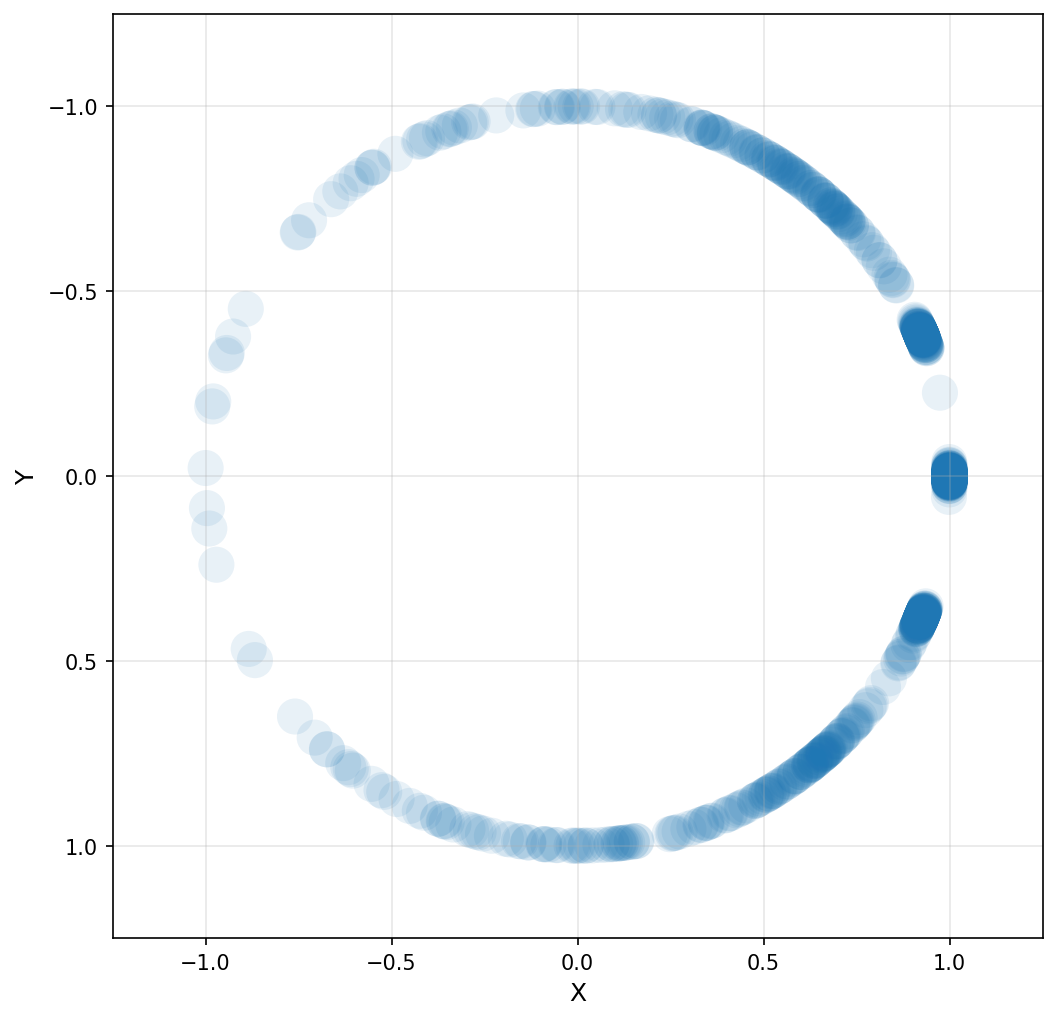}
        \caption{DSM skewed} \label{fig:discrete_dsm_skewed}
    \end{subfigure}
    \caption{Generating discrete distributions -- uniform (top) and skewed (bottom). Data (left) and generated samples from each method (middle, right). Samples are not projected onto the support. }\label{fig:discrete}
\end{figure}

\section{Related Work}


\subsection{On-Manifold Methods}
On-manifold methods explicitly model the generative process directly on the data manifold, typically by defining ODEs or SDE within the manifold geometry \cite{ falorsi2020neural, lou2020neural, mathieu2020riemannian, rozen2021moser, huang2022riemannian, ben2022matching, de2022riemannian, chen2023flow}. These methods often assume that the manifold is Riemannian, as they require the use of the tangent space. Recent representative approaches include Riemannian score-based generative models (RSGMs \cite{de2022riemannian}), which define stochastic differential equations on Riemannian manifolds and approximate them using geodesic random walks. Although such methods can result in high quality generation, they typically introduce overhead computation, both during training and sampling. Further, these methods require the Riemannian manifold assumption, which, although broadly applicable, limits their use in discrete spaces.

\subsection{Low-Dimensional Euclidean Methods}
Contrary to the methods in the on-manifold category, these methods do not explicitly define a generative process on the manifold like manifold-based SDEs or ODEs, but rather apply a Euclidean generative process to low-dimensional Euclidean spaces and use charts or  other means \cite{gemici2016normalizing, brehmer2020flows, kalatzis2021density} to map the generated samples in each back to the manifold. This often requires multiple mappings, which can complicate optimisation. It is also typically assumed that the manifold is smooth, which may not always be the case. Use of such mappings can also introduce distortions or yield different generated distributions depending on the choice of mapping. Finally, these methods do not immediately apply to discrete cases since the low-dimensional manifold would consist of a single point.

\subsection{Ambient Space Methods}
We include in this class methods that produce samples in the ambient space followed by a projection from the ambient space to the manifold. DSM, when applied to low-dimensional manifolds, can be viewed as such an approach. Specifically, as described by \cite{song2019generative}, when applied to low-dimensional manifolds it is assumed that the target distribution is slightly noised so as to obtain a full-dimensional approximation of it. After learning this approximation, it is necessary to project the samples onto the manifold from the ambient space in order to obtain on-manifold samples. This is computationally fast and simple to deploy, although the approximation in the ambient space needs to be sufficiently accurate and close to the manifold for the projection to produce good results. Since DSM is not aware of the manifold except via the samples provided from the target distribution, it has to first spends some effort on learning the manifold itself, as shown by \cite{li2025scores}. 

Free-Form Flows for manifolds (FFF) \cite{sorrenson2024learning} may also be viewed as ambient space approaches, as they rely on encoder–decoder architectures that generate samples in the ambient space followed by a projection onto the manifold. While FFF enables fast sampling, it too assumes a Riemannian manifold structure and introduces multiple auxiliary loss terms, one of which is needed to encourage proximity between ambient and projected samples. These additional objectives increase training complexity. 

MAD falls within the ambient-space class but differs in that it incorporates prior knowledge of the manifold through a known base score. This information pushes samples towards the manifold during sampling and  may reduce the burden of learning the manifold compared to other ambient space approaches, without introducing additional geometric assumptions or auxiliary training objectives.

\section{Discussion and Limitations}
This work adapts DSM to explicitly account for manifold structure through the use of a base score. Our experiments show that MAD not only improves convergence speed but also captures finer distributional detail. This may be attributed to the fact that our score decomposition effectively decouples the learning of the manifold support from the density on that support, and suggests that MAD provides a partial response to the phenomenon observed by \cite{li2025scores}. Specifically, by encoding the geometry of the support through the base score, MAD bypasses the first phase of ``support recovery"
required by standard DSM, a benefit that is most evident in complex distributions (e.g., skewed discrete sets and symmetric solids).

Our results comparing MAD to ambient DSM
show that MAD leads to better proximity of the generated samples to the manifold. The base score acts as a global attractor, ensuring the generative process adheres to the manifold structure.  
This suggests that MAD may be particularly valuable in settings where the sparsity of data makes implicit manifold recovery difficult or when data is scarce.

A limitation of our approach is the reliance on analytical base scores. In future work, we plan to investigate whether approximations could serve as viable alternatives. Although we focus on low-dimensional benchmarks, our theoretical results hold for manifolds of arbitrary intrinsic dimension ($n>0$). Evaluating MAD in higher-dimensional settings and on more complex, real-world datasets -- where manifold-awareness may provide even greater benefits -- is an important direction for future research.

\newpage
\section*{Acknowledgements}
This work was supported in part by the Engineering and Physical Sciences Research Council (EPSRC) through the AI Hub in Generative Models [grant number EP/Y028805/1], and by Isambard-AI which is operated by the University of Bristol and funded by the UK Government's Department for Science, Innovation and Technology (DSIT) via UK Research and Innovation; and the Science and Technology Facilities Council (ST/AIRR/I-A-I/1023). ALJ gratefully acknowledges support from Google DeepMind and Schmidt Sciences. AP is supported by the EPSRC [2928912] and AstraZeneca via an iCASE award for a DPhil in Machine Learning. JC is supported by EPSRC through the Modern Statistics and Statistical Machine Learning (StatML) CDT programme, grant no. EP/S023151/1.
\section*{Impact Statement}
This paper presents work aimed at improving the learning of score-based generative models for manifolds. While our work may help improve generative capabilities in important domains such as drug-design, earth and climate science and potentially text generation, it may also simultaneously increase the impacts of abuse of generative capabilities. To that extent, the ethical and societal impact of our work is similar to that of other generative models and machine learning methods in general.

\bibliography{main_mad}
\bibliographystyle{icml2026}

\newpage
\appendix
\onecolumn
\section{Quaternion Representation for $\SO(3)$} \label{appx:quat_repres}
An effective way to represent rotations is to use unit quaternions (also called versors) (for a useful introduction see \cite{lynch2017modern}). A 3D rotation $R\in \SO(3)$ can
be specified by an axis of rotation given by a unit vector $u = (u_1, u_2, u_3)$ and an angle $\theta \in [0,\pi]$. It can then
be represented by exactly two versors: $x = \cos(\theta/2) + \sin(\theta/2)(u_1\textbf{i} + u_2\textbf{j} + u_3\textbf{k})$, and its antipodal
point $-x$.
The set of all versors forms the topological group $\mathcal{S}^3(1)$ and its Haar measure is the
uniform distribution over the sphere $\sigma^3$.
A 2-to-1 mapping $f: \mathcal{S}^3(1) \to \SO(3)$ transforms $x, -x$ back to their rotation, preserving their original probability distribution. 

\subsection{canonicalisation on $\SO(3)/G$ in Quaternion Coordinates}\label{appx:canon}

\begin{corollary}[Quaternion canonicalisation for $\SO(3)/G$]\label{cor:quat_canon}
Let $G\subset \SO(3)$ be a finite symmetry group acting by right multiplication. Represent
rotations by unit quaternions $q\in\mathcal{S}^3$ and represent each $g\in G$ by a unit
quaternion (with a fixed choice of sign per group element), so that the symmetry orbit of
$q$ is $\{qg:\, g\in G\}\subset\mathcal{S}^3$.
\end{corollary}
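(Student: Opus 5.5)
The statement contains no inequality to estimate. What has to be justified is the identification it asserts: the right $G$-orbit of a rotation $R\in\SO(3)$ is described in quaternion coordinates by the set $\{qg:\,g\in G\}\subset\Sph^3$, where $q$ is a versor with $f(q)=R$ and each $g$ is replaced by its chosen quaternion representative. I will write $\tilde g\in\Sph^3$ for the representative of $g$, so that $f(\tilde g)=g$. The plan is to derive everything from one fact recalled in Appendix~\ref{appx:quat_repres}: the 2-to-1 map $f:\Sph^3\to\SO(3)$ is a surjective group homomorphism with kernel $\{\pm 1\}$.

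\textbf{Step 1 (homomorphism).} For $q\in\Sph^3$, $f(q)$ is the rotation $v\mapsto qv\bar q$ acting on pure quaternions. I would check that $f(q_1q_2)=f(q_1)f(q_2)$, which follows from $\overline{q_1q_2}=\bar q_2\bar q_1$. Hence $f(q\tilde g)=f(q)f(\tilde g)=Rg$ for every $g\in G$, so $f$ maps $\{q\tilde g\}$ onto the orbit $\{Rg\}$.

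\textbf{Step 2 (injectivity and full preimage).} Suppose $f(q\tilde g)=f(q\tilde h)$. Then $f(\tilde g^{-1}\tilde h)=\mathrm{id}$, so $\tilde h=\pm\tilde g$. Applying $f$ gives $h=g$, and since each group element has exactly one fixed representative, $\tilde h=\tilde g$. Therefore $g\mapsto q\tilde g$ is a bijection from $G$ onto a set of exactly one lift of each element of the orbit. The full preimage $f^{-1}(\{Rg\})$ is $\{\pm q\tilde g\}$, and this set is invariant under the antipodal symmetry required by parity equivariance.

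\textbf{Step 3 (independence of choices).} If $q$ is replaced by $-q$, or the sign of some $\tilde g$ is flipped, the set $\{q\tilde g\}$ changes only by antipodal flips of some of its elements. It therefore represents the same orbit in $\SO(3)$ and the same set modulo $\pm$. This lets the canonicalisation (choosing a representative in a fixed fundamental domain) be defined on $\{\pm q\tilde g\}$ rather than on $\{q\tilde g\}$, so the result depends only on $[R]$.

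\textbf{Main obstacle.} The delicate point is that the sign choices for $\tilde g$ need not make $\{\tilde g\}$ closed under multiplication: the lifted group $f^{-1}(G)$ has order $2|G|$ and may admit no section. The claim must therefore be stated at the level of sets modulo $\pm1$, not as a subgroup coset in $\Sph^3$. I would handle this by working throughout with $f^{-1}(G)$ and only then passing to the chosen representatives.
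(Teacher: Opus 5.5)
Your argument is correct, but the paper has no proof of this statement for it to follow. The paper treats the quaternion description of the orbit as a convention. It then goes straight to defining $\psi(q)=\operatorname*{argmax}_{g\in G}|\mathrm{Re}(qg)|$ with the sign fix $\mathrm{Re}(q_{\mathrm{canon}})\ge 0$. Its only support is the remark that $f(q)$ lies at geodesic distance $2\arccos(|\mathrm{Re}(q)|)$ from the identity, so $\psi$ selects the orbit element closest to the identity.

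Your route is that $f$ is a homomorphism with kernel $\{\pm 1\}$. Hence $g\mapsto q\tilde g$ picks exactly one lift of each $Rg$, and $f^{-1}(\{Rg:g\in G\})=\{\pm q\tilde g\}$. This supplies the justification the paper omits. Your caveat about sign choices is also sharper than the paper's wording. Since $-1$ is the only element of order $2$ in $\Sph^3$, a section $G\to\Sph^3$ exists only when $|G|$ is odd. So for the tetrahedral, octahedral and icosahedral groups used in the experiments, $\{q\tilde g\}$ is never a coset of a subgroup and is meaningful only modulo $\pm 1$.

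That is exactly why the paper's $\psi$ must use $|\mathrm{Re}(\cdot)|$ together with an explicit sign convention. Your Step 3 is what shows the resulting $q_{\mathrm{canon}}$ depends only on $[R]$, up to ties in the argmax, which neither you nor the paper address. The paper's comment, in turn, contributes the geometric interpretation of the chosen representative, which your argument does not need.
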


Define the canonicalisation map $\psi:\mathcal{S}^3\to\mathcal{F}$ by
\begin{equation}
    q_{\mathrm{canon}} \;=\; \psi(q)
    \;=\; \operatorname*{argmax}_{g\in G}\, \bigl|\mathrm{Re}(qg)\bigr|,
\end{equation}
where $\mathrm{Re}(q)$ denotes the scalar component of $q=(w,x,y,z)$, i.e.\ $\mathrm{Re}(q)=w$.
Finally, fix the sign ambiguity of quaternion coordinates by enforcing
\begin{equation}
    \mathrm{Re}(q_{\mathrm{canon}})\ge 0.
\end{equation}
Then $q_{\mathrm{canon}}$ is a deterministic representative of the equivalence class
$[R]\in \SO(3)/G$, and training a diffusion model on canonicalised targets removes the
symmetry-induced multimodality in $p(R\mid c)$, where $c$ is an observation such as an image, molecular group, point cloud, etc., with implicit symmetry.

At inference time, a sample may be lifted back to $\SO(3)$ by drawing $g\sim \mathrm{U}(G)$
and returning $q_{\mathrm{canon}}g$.

\paragraph{Comment.}
Maximizing $|\mathrm{Re}(qg)|$ selects the symmetry-equivalent representative closest to the
identity rotation, since for a unit quaternion $q$ the geodesic distance on $\SO(3)$ to the
identity is $2\arccos(|\mathrm{Re}(q)|)$. Note that we solely perform this operation for unit quaternions to yield $q_\text{canon}$ as our reference target for a particular condition $c$. The forward diffusion process is the standard Variance Exploding SDE.

\begin{figure}[ht]
    \centering
    \includegraphics[width=0.2\linewidth]{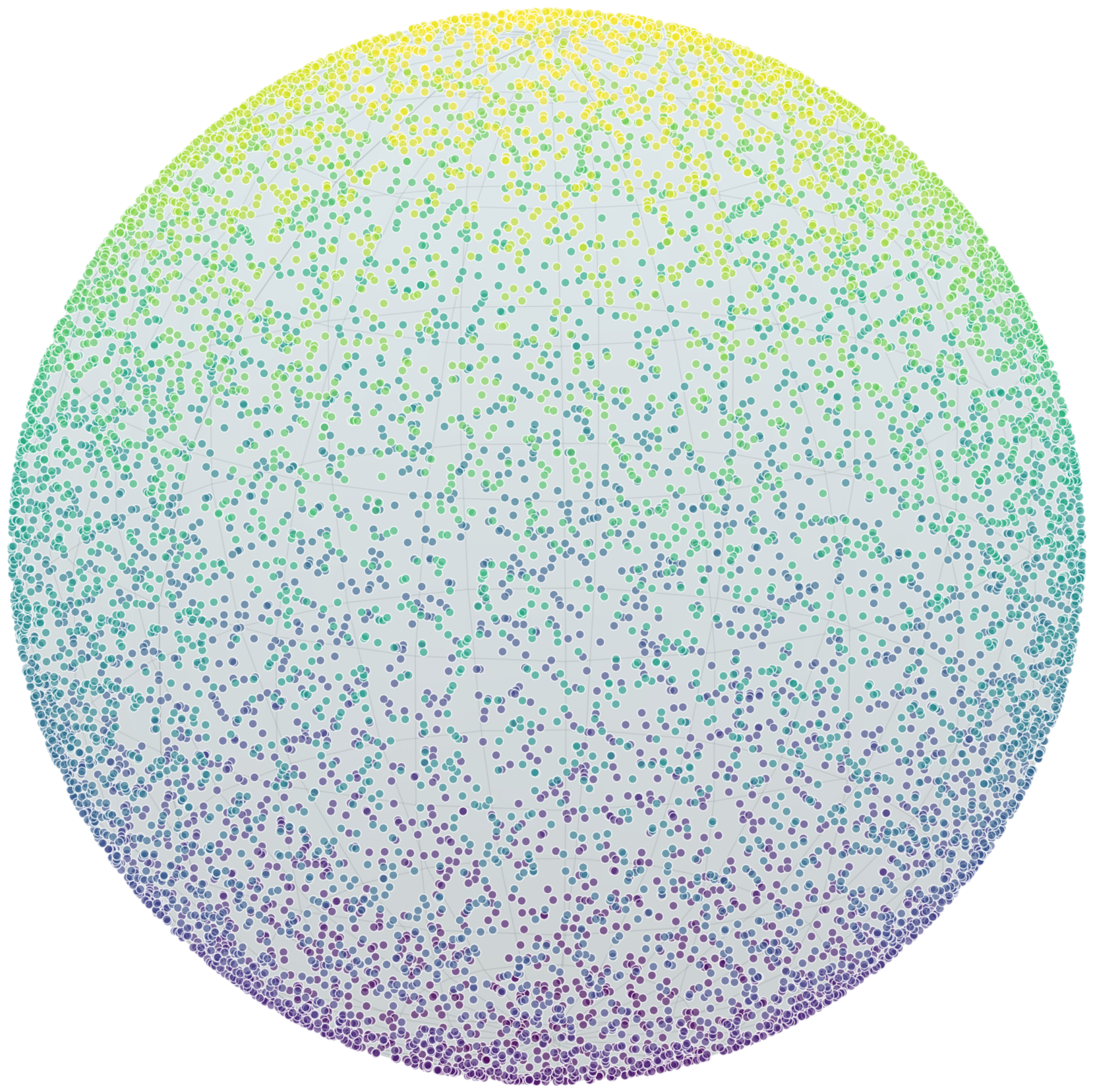}
    \caption{Demonstration of the base score for $S^2$ -- samples generated using only the base score (the score function for a uniform distribution on $\mathcal{S}^2$) with $\sigma_{\min}=1e^{-4}$. Samples are not projected onto the manifold. The colour is by $z$-axis values and is for ease of visualisation only.}
    \label{fig:s2_not_projected}
\end{figure}

\begin{figure}[ht]
  \begin{center}    \centerline{\includegraphics[width=0.4\linewidth]{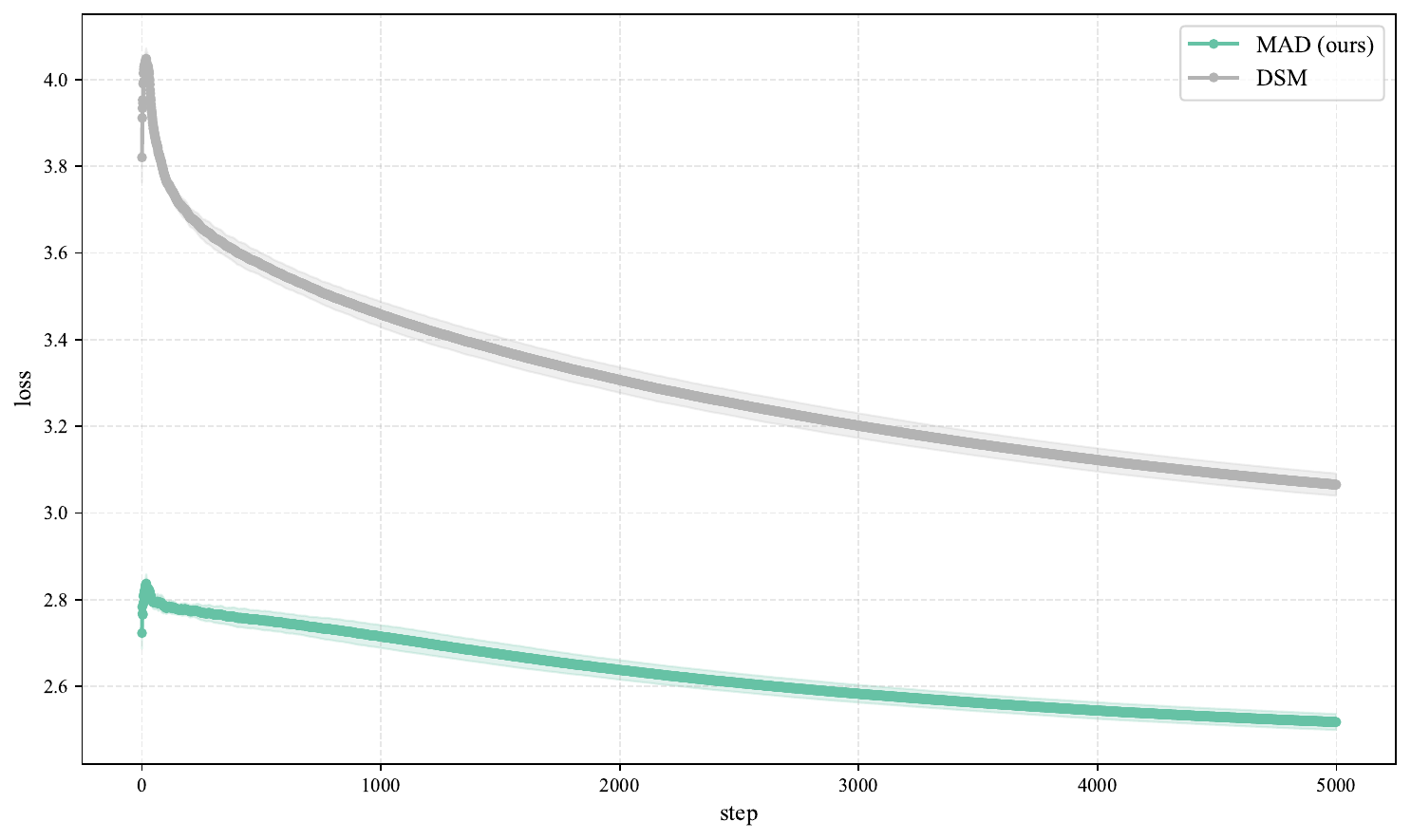}}
    \caption{
      Training loss for MAD is consistently lower throughout training compared to that of DSM. Losses are for $\SO(3)$ with $K=64$. Shaded area represents ±std across $5$ random seeds.
    }
    \label{fig:loss_dsm_vs_mad}
  \end{center}
\end{figure}

\section{Proofs} \label{appx: proofs}


\subsection{Proof of Theorem \ref{thm:o(1)_discrete}}\label{appx:proof_for_discrete}

\begin{theorem}
    Let $N \in \mathbb{N}$, $N>1$ and $\MM=\{u_i\}_{i=1}^{N}\subset\mathbb{R}^n$ for some $n \in\mathbb{N}_{>0}$. Let $\mu$ be the uniform normalised counting measure on $\mathcal{M}$ and $p$ a density function w.r.t  $\mu$ that is supported on $\MM$. Let $x \in \mathbb{R}^n$ so that  $x$ is not equidistant to all points in $\MM$. Then as $t \to 0$ ($\sigma_t \to 0$):

\begin{equation}
    \|s(x,t)-s^{base}(x,t)\| \to 0 
\end{equation}

\end{theorem}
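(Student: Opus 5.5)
The plan is to reduce both scores to posterior means via \eqref{eq:score_fn} and compare the two posteriors directly. Write $p_i := p(u_i)$, which I take to be strictly positive for every $i$ (reading ``supported on $\MM$'' as full support; see the caveat at the end). By \eqref{eq:score_fn}, applied once with $p$ and once with the uniform density (as in Proposition~\ref{thm:base_discrete}), the $-x/\sigma_t^2$ terms cancel and
\begin{equation*}
s(x,t)-s^{base}(x,t)=\frac{1}{\sigma_t^2}\sum_{i=1}^N \bigl(w_i^{p}(x,t)-w_i^{\mu}(x,t)\bigr)u_i .
\end{equation*}
Here the posterior weights are
\begin{equation*}
w_i^{p}=\frac{p_i e^{-d_i^2/2\sigma_t^2}}{\sum_k p_k e^{-d_k^2/2\sigma_t^2}}, \qquad w_i^{\mu}=\frac{e^{-d_i^2/2\sigma_t^2}}{\sum_k e^{-d_k^2/2\sigma_t^2}}, \qquad d_i:=\|x-u_i\| .
\end{equation*}
The Gaussian normalising constants and the factor $1/N$ from $\mu$ cancel. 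Since $\sum_i (w_i^p-w_i^\mu)=0$, I can replace $u_i$ by $u_i-u_j$ for any fixed $j$. This gives $\|s-s^{base}\|\le \sigma_t^{-2}\,D\sum_{i\neq j}(w_i^p+w_i^\mu)$, with $D:=\max_{i,k}\|u_i-u_k\|$.

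Next I choose $j$ to be the closest point: let $u_j$ attain $d_{\min}=\min_i d_i$, and set $\Delta:=\min_{i\neq j}(d_i^2-d_j^2)$. For $i\ne j$, dividing the numerator and denominator of $w_i^p$ by $e^{-d_j^2/2\sigma_t^2}$ gives
\begin{equation*}
w_i^p\le \frac{p_i}{p_j}e^{-\Delta/2\sigma_t^2}, \qquad w_i^\mu\le e^{-\Delta/2\sigma_t^2}.
\end{equation*}
Hence
\begin{equation*}
\|s-s^{base}\|\le \frac{D\,C}{\sigma_t^2}\,e^{-\Delta/2\sigma_t^2},
\end{equation*}
where $C$ depends only on $N$ and $\max_i p_i/p_j$. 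If $\Delta>0$, the right-hand side tends to $0$ as $\sigma_t\to0$, because $e^{-a/\sigma^2}$ decays faster than any power of $\sigma$. This is the whole argument in the generic case.

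The main obstacle is the hypothesis itself, i.e.\ ensuring $\Delta>0$. This requires the nearest point of $\MM$ to $x$ to be \emph{unique}, which is stronger than ``not equidistant to all points''. If $x$ is equidistant to a set $J$ of nearest points with $1<|J|<N$, the weights converge to $p_i/\sum_{k\in J}p_k$ and $1/|J|$ on $J$. The numerator then has a generally nonzero limit, and the difference blows up like $\sigma_t^{-2}$ unless $p$ is constant on $J$. Similarly, if some $p_i=0$ at the nearest point, $w^p$ concentrates elsewhere. So in writing the proof I would state explicitly that the condition is used in the form ``$x$ has a unique closest point $u_j\in\MM$ and $p_j>0$'', equivalently $\Delta>0$. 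This holds for all $x$ outside a finite union of hyperplanes (the bisectors), hence for Lebesgue-a.e.\ $x$. I would also remark that the exact tie case is excluded, or is covered only when $p$ is constant on the tied set. With that reading, the two steps above give the result, with exponential rate $O(\sigma_t^{-2}e^{-\Delta/2\sigma_t^2})$.
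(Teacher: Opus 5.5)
\textbf{Comparison with the paper's proof.} Your proof is correct, and it is more direct than the paper's. The paper bounds $\|\EE[x_0|x]-\EE^{base}[x_0|x]\|$ by $\max_i\|u_i\|\sum_i|\Delta_{u_i}|$ and splits the indices by the sign of $\Delta_{u_i}$. It then replaces the $p(u_j)$ by $m=\min_i p(u_i)$ or $M=\max_i p(u_i)$, reaching a bound $(1-a)/(r_{-i,i}^{-1}+1+a+a\,r_{-i,i})$ with $r_{-i,i}=\sum_{j\neq i}N_{\sigma_t}(u_j-x)/N_{\sigma_t}(u_i-x)$. Finally it argues that each $r_{-i,i}$ tends to $0$ or $\infty$ exponentially, which beats the factor $\sigma_t^{-2}$. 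You instead use $\sum_i(w_i^p-w_i^\mu)=0$ to recentre at the nearest point, which removes the one weight that does not vanish. You then bound every other weight against $u_j$ in one line. Both arguments rest on the same fact: both posteriors concentrate exponentially on the nearest point. Yours avoids the sign split and the $m$/$M$ comparison, and gives the explicit rate $O(\sigma_t^{-2}e^{-\Delta/2\sigma_t^2})$. It also needs $p>0$ only at the nearest point, rather than $m>0$ everywhere.

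\textbf{On the hypothesis.} Your caveat is also right, and it points to a flaw in the paper rather than in your argument. The paper's last step says a term $\exp\bigl((\|u_i-x\|^2-\|u_j-x\|^2)/2\sigma_t^2\bigr)$ can fail to tend to $0$ or $\infty$ only if $x$ is equidistant to $u_i,u_j$, ``which doesn't hold by assumption''. But the assumption only rules out equidistance to \emph{all} points. What the argument actually needs is that every $r_{-i,i}$ tends to $0$ or $\infty$, which holds exactly when the nearest point is unique. The two conditions coincide for $N=2$, but for $N\ge 3$ the statement as written is false. Take $\MM=\{(-1,0),(1,0),(0,3)\}$, $x=0$ and $p(u_1)\neq p(u_2)$. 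Then $\EE[x_0|x]-\EE^{base}[x_0|x]\to\frac{p(u_2)-p(u_1)}{p(u_1)+p(u_2)}(1,0)\neq 0$, so $\|s-s^{base}\|$ grows like $\sigma_t^{-2}$. Your reformulation is the correct hypothesis: a unique nearest point carrying positive mass, or more generally $p$ constant on the tied nearest set. Under it, both proofs go through.
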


\begin{proof}
We use the following identity which holds for all $x\in \mathbb{R}^n$:

\begin{align}\label{eq:discrete_score_to_diff_expectations}
    \|s(x,t)-s^{base}(x,t)\| &= \left\|\frac{\EE[x_0|x]-x}{\sigma_t^2} - \frac{\EE^{base}[x_0|x]-x}{\sigma_t^2}\right\| \nonumber \\
    &= \left\|\frac{\EE[x_0|x]-\EE^{base}[x_0|x]}{\sigma_t^2}\right\|
\end{align}

We therefore wish to show that:

\begin{align}\label{eq:discrete_target_abstract}
    \lim_{\sigma_t \to 0} \frac{1}{\sigma_t^2}\left\|\EE[x_0|x]-\EE^{base}[x_0|x]\right\| = 0
\end{align}

In our case:

\begin{align}
    \EE[x_0|x] &= \frac{\sum_{i=1}^N u_i N_{\sigma_t}(x-u_i) p(u_i) }{ \sum_{i=1}^N N_{\sigma_t}(x-u_i)p(u_i)}
\end{align}

\begin{align}
    \EE^{base}[x_0|x] &= \frac{\sum_{i=1}^N u_i N_{\sigma_t}(x-u_i) }{ \sum_{i=1}^N N_{\sigma_t}(x-u_i)}
\end{align}

so that our target becomes:
\begin{align}\label{eq:discrete_target}
    \lim_{\sigma_t \to 0} \frac{1}{\sigma_t^2}\left\| \frac{\sum_{i=1}^N u_i N_{\sigma_t}(x-u_i) p(u_i) }{ \sum_{i=1}^N N_{\sigma_t}(x-u_i)p(u_i)}-\frac{\sum_{i=1}^N u_i N_{\sigma_t}(x-u_i) }{ \sum_{i=1}^N N_{\sigma_t}(x-u_i)}\right\| = 0
\end{align}

The expression inside the norm can be written as:

\begin{align}
    \sum_{i=1}^N u_i \left(\frac{ N_{\sigma_t}(x-u_i) p(u_i) }{ \sum_{i=1}^N N_{\sigma_t}(x-u_i)p(u_i)}-\frac{N_{\sigma_t}(x-u_i) }{ \sum_{i=1}^N N_{\sigma_t}(x-u_i)}\right)
\end{align}

Using the triangle inequality followed by the fact that the set $\MM=\{u_i\}_{i=1}^{N}$ is bounded by $C:=\max_i \|u_i\|$, we obtain:

\begin{align}
    &\left \| \sum_{i=1}^N u_i \left(\frac{ N_{\sigma_t}(x-u_i) p(u_i) }{ \sum_{i=1}^N N_{\sigma_t}(x-u_i)p(u_i)}-\frac{N_{\sigma_t}(x-u_i) }{ \sum_{i=1}^N N_{\sigma_t}(x-u_i)}\right) \right \| \\ &\leq
   C \sum_{i=1}^N \left |  \frac{ N_{\sigma_t}(x-u_i) p(u_i) }{ \sum_{i=1}^N N_{\sigma_t}(x-u_i)p(u_i)}-\frac{N_{\sigma_t}(x-u_i) }{ \sum_{i=1}^N N_{\sigma_t}(x-u_i)} \right | 
\end{align}
so that a sufficient target would be to prove that:

\begin{align}
    \lim_{\sigma_t\to 0}\frac{1}{\sigma_t^2}
    \sum_{i=1}^N \left |  \frac{ N_{\sigma_t}(x-u_i) p(u_i) }{ \sum_{i=1}^N N_{\sigma_t}(x-u_i)p(u_i)}-\frac{N_{\sigma_t}(x-u_i) }{ \sum_{i=1}^N N_{\sigma_t}(x-u_i)} \right | = 0
\end{align}

Denoting by $\Delta_{u_i} = \frac{ N_{\sigma_t}(x-u_i) p(u_i) }{ \sum_{i=1}^N N_{\sigma_t}(x-u_i)p(u_i)}-\frac{N_{\sigma_t}(x-u_i) }{ \sum_{i=1}^N N_{\sigma_t}(x-u_i)}$, this can be further split into positive and negative terms to obtain the equivalent expression:

\begin{align}
    \lim_{\sigma_t\to 0}\frac{1}{\sigma_t^2}
    \sum_{u_i \in U^+} \Delta_{u_i}  \
    + \sum_{u_i \in U^-} -\Delta_{u_i}= 0
\end{align}

where $U^+$ and $U^-$ denote the set of points $u_i \in \MM$ for which $\Delta_{u_i}>0$ and $\Delta_{u_i}<0$, respectively. We will now bound each of these sums from above. First, observe that since the numerators are non-zero, we can write:

\begin{equation}
    \Delta_{u_i}  =  \frac{1}{1+\frac{\sum_{j\neq i}N_{\sigma_t}(u_j-x)p(u_j)}{N_{\sigma_t}(u_i-x)p(u_i)}}- \frac{1}{1+\frac{\sum_{j\neq i}N_{\sigma_t}(u_j-x)}{N_{\sigma_t}(u_i-x)}} 
\end{equation}
For $u_i \in U^+$ we can bound this from above by reducing all $p(u_j)$ to $m:=\min_{i\in[N]}p(u_i)$ to obtain: 

\begin{align}
    \Delta_{u_i} \label{eq:before_ratio_notation} \leq  \frac{1}{1+\frac{m}{p(u_i)}\frac{\sum_{j\neq i}N_{\sigma_t}(u_j-x)}{N_{\sigma_t}(u_i-x)}}- \frac{1}{1+\frac{\sum_{j\neq i}N_{\sigma_t}(u_j-x)}{N_{\sigma_t}(u_i-x)}}
\end{align}
and for $u_i \in U^-$ by increasing all $p(u_j)$ to $M:=\max_{i\in[N]}p(u_i)$ to obtain:

\begin{align}
    -\Delta_{u_i}  \leq  \frac{1}{1+\frac{\sum_{j\neq i}N_{\sigma_t}(u_j-x)}{N_{\sigma_t}(u_i-x)}}
    - \frac{1}{1+\frac{M}{p(u_i)}\frac{\sum_{j\neq i}N_{\sigma_t}(u_j-x)}{N_{\sigma_t}(u_i-x)}}
\end{align}

We now focus only on $u_i \in U^+$ since the analysis for $U^-$ is symmetric. 

For simplicity of exposition, we use the following notation for the  ratio that appears in both denominators:

$$ r_{-i, i} := \frac{\sum_{j\neq i}N_{\sigma_t}(u_j-x)}{N_{\sigma_t}(u_i-x)} $$
With this notation, and by obtaining a common denominator, the bound in Eq. \ref{eq:before_ratio_notation} becomes:

\begin{align}
     \frac{r_{-i,i}\left(1-\frac{m}{p(u_i)}\right)}{1+r_{-i,i}+\frac{m}{p(u_i)}r_{-i,i}+\frac{m}{p(u_i)}r_{-i,i}^2}= \frac{1-\frac{m}{p(u_i)}}{\frac{1}{r_{-i,i}}+1+\frac{m}{p(u_i)}+r_{-i,i}\frac{m}{p(u_i)}}
\end{align}
where the last expression follows from division by $r_{-i,i}\neq 0$.
Replacing the expression, and the one  similarly obtained for $u_i \in U^-$, into Eq. \ref{eq:before_ratio_notation} and combining with our target, we obtain that it is sufficient to prove that the resulting upper bound converges to $0$, that is:

\begin{align}
    \lim_{\sigma_t\to 0}\frac{1}{\sigma_t^2}
    \left( 
    \sum_{u_i \in U^+}  \frac{1-\frac{m}{p(u_i)}}{\frac{1}{r_{-i,i}}+1+\frac{m}{p(u_i)}+r_{-i,i}\frac{m}{p(u_i)}}
    + 
    \sum_{u_i \in U^-}  \frac{\frac{M}{p(u_i)} - 1}{\frac{1}{r_{-i,i}}+\frac{M}{p(u_i)}+1+r_{-i,i}\frac{M}{p(u_i)}}
    \right )= 0
\end{align}
Again, we focus only on the first sum since the analysis for the second is symmetric. Since the numerator is constant as a function of $\sigma_t$, it is sufficient to focus on the denominator and show that for all $i\in[N]$:

\begin{align}\label{eq:discrete_final_target}
    \lim_{\sigma_t\to 0}  \sigma_t^2\left(\frac{1}{r_{-i,i}}+r_{-i,i}\frac{m}{p(u_i)}\right) = \infty
\end{align}

Let $i\in[N]$. Observe that since exponential convergence dominates polynomial convergence, then the convergence of $r_{-i,i}$  dominates $\sigma_t^2$, and since $m\neq 0$ ($p$ is supported on $\MM$) then the above holds if $\lim_{\sigma_t\to 0} r_{-i,i} \in \{\infty, 0\}$ (the key observation is that $r_{-i,i}$ appears both as a numerator and as a denominator, so if $\lim_{\sigma_t\to 0} r_{-i,i} \in \{\infty, 0\}$ then at least one of the terms involving $r_{-i,i}$ converges to $\infty$).
We will now show that this is indeed the case.

We rewrite $r_{-i,i}$ as:

\begin{align}
    r_{-i, i} &= \sum_{j\neq i}\exp\left(\frac{-\|u_j-x\|^2+\|u_i-x\|^2}{2\sigma_t^2}\right)
\end{align}  
Consider a single term in the sum. The only case in which it does not converge to one of $\infty, 0$ as $\sigma_t \to 0$ is if the numerator is $0$ which occurs iff $\|u_j-x\|^2=\|u_i-x\|^2$, meaning $x$ is equidistant to both $u_i, u_j$, which doesn't hold by assumption.
\end{proof}
Note that the condition that $x$ is not equidistant to all points in $\MM$ is a  necessary condition. This can be seen by taking Eq. \ref{eq:discrete_target_abstract} for this case (i.e. the resulting equivalent target \ref{eq:discrete_target}) with $x$ equidistant to all points in $\MM$ (e.g. it is useful to think of $S^0\subset \mathbb{R}$ so that $x=0$). The result is the limit of a constant divided by $\sigma_t$ which tends to $\infty$ as $\sigma_t \to 0$. This shows it may be better to avoid these cases for improved numerical stability (although this should occur with probability $0$ w.r.t $p_t$ for all $t$).

\subsection{Proof of Theorem \ref{thm:base_discrete}}

\begin{theorem} (Base score for discrete distributions)
    For a discrete distribution supported on a finite set of points $\MM = \{u_i\}_{i=1}^N$, 
\begin{align}
    s^{base}(x_t,t) = \frac{\sum_{i=1}^N u_i N_{\sigma_t}(x_t-u_i) }{ \sum_{i=1}^N N_{\sigma_t}(x_t-u_i)}
\end{align}
\end{theorem}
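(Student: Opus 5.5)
The plan is a direct computation of the noised base density followed by differentiation, then a comparison with the posterior-mean form \eqref{eq:score_fn}. Since $\mu$ is the uniform normalised counting measure on $\MM=\{u_i\}_{i=1}^N$, I will first write the noised base density explicitly as
\begin{equation*}
p^{base}_t(x_t)=\int N_{\sigma_t}(x_t-x_0)\,\mu(dx_0)=\frac{1}{N}\sum_{i=1}^N N_{\sigma_t}(x_t-u_i).
\end{equation*}
This is a finite mixture of isotropic Gaussians. It is smooth and strictly positive on $\mathbb{R}^n$ for every $t$ with $\sigma_t>0$, so its logarithm is differentiable everywhere and no exceptional set of $x_t$ has to be removed.

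Next I will differentiate. The constant $1/N$ and the normalising constant $(2\pi\sigma_t^2)^{-n/2}$ both drop out of $\nabla_{x_t}\log$. Using $\nabla_{x_t}N_{\sigma_t}(x_t-u_i)=\frac{u_i-x_t}{\sigma_t^2}N_{\sigma_t}(x_t-u_i)$, I get
\begin{equation*}
\nabla_{x_t}\log p^{base}_t(x_t)=\frac{\sum_{i=1}^N (u_i-x_t)N_{\sigma_t}(x_t-u_i)}{\sigma_t^2\sum_{i=1}^N N_{\sigma_t}(x_t-u_i)}
=\frac{1}{\sigma_t^2}\left(\frac{\sum_{i=1}^N u_i N_{\sigma_t}(x_t-u_i)}{\sum_{i=1}^N N_{\sigma_t}(x_t-u_i)}-x_t\right).
\end{equation*}
The last equality uses the fact that the weights $N_{\sigma_t}(x_t-u_i)/\sum_j N_{\sigma_t}(x_t-u_j)$ sum to one.

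Finally I will identify the softmax-weighted average of the $u_i$ with $\EE^{base}[x_0\mid x_t]$. By Bayes' rule under the uniform prior, $P^{base}(x_0=u_i\mid x_t)\propto N_{\sigma_t}(x_t-u_i)$. This shows that the computation above is exactly \eqref{eq:score_fn} specialised to $\mu$, and it matches the expression $\EE^{base}[x_0\mid x]$ used in the proof of Theorem~\ref{thm:o(1)_discrete}.

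The only real obstacle is bookkeeping. The ratio displayed in the statement is precisely the posterior mean $\EE^{base}[x_0\mid x_t]$, so what the calculation actually establishes is
\begin{equation*}
s^{base}(x_t,t)=\frac{\EE^{base}[x_0\mid x_t]-x_t}{\sigma_t^2}.
\end{equation*}
I will therefore state the result in this form and read the displayed formula as the posterior-mean term that enters the score through \eqref{eq:score_fn}. That is also the way it is used in the proof of Theorem~\ref{thm:o(1)_discrete}.
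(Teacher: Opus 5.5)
Your proposal is correct and is essentially the paper's proof. You write the noised base density as the uniform Gaussian mixture $\frac{1}{N}\sum_{i} N_{\sigma_t}(x_t-u_i)$, take $\nabla_{x_t}\log$, and identify the normalised weighted average of the $u_i$ as $\EE^{base}[x_0\mid x_t]$. You also spell out the differentiation step that the paper leaves implicit through the posterior-mean form of the score. You are right that the displayed ratio is only the posterior mean: the printed statement drops the $-x_t$ and the factor $1/\sigma_t^2$, and the paper's own proof concludes with exactly your form $s^{base}(x_t,t)=(\EE^{base}[x_0\mid x_t]-x_t)/\sigma_t^2$.
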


\begin{proof}
    For a discrete distribution supported on a finite set of points $\MM = \{u_i\}_{i=1}^N$, we have,
\begin{align}
    s^{base}(x_t,t) &= \nabla_{x_t} \log \int N_{\sigma_t}(x_t-x_0) \mu(dx_0) \\
    &= \nabla_{x_t} \log \frac{1}{N}\sum_{i=1}^N N_{\sigma_t}(x_t-u_i) \\
    &= \frac{\EE^{base}[x_0|x_t] - x_t}{\sigma_t^2} \\
    \EE^{base}[x_0|x_t] &= \frac{\sum_{i=1}^N u_i N_{\sigma_t}(x_t-u_i) }{ \sum_{i=1}^N N_{\sigma_t}(x_t-u_i)}
\end{align}
\end{proof}
This is the posterior mean of the mean of the Gaussian, under a mixture of Gaussians model with means $u_i$ and variances $\sigma_t^2$.

\subsection{Proof of Theorem \ref{thm:base_sphere}}

\begin{theorem} (Base score for $n$-spheres) 
Assume $n \in \mathbb{N}_{>0}$. Let $\mathcal{M} = \mathcal{S}^n(1) := \{x \in \mathbb{R}^{n+1} : \|x\| = 1\}$ and $\mu$ be the normalized spherical measure on $\mathcal{M}$ (commonly denoted $\sigma^n$). Then for $x_t \neq 0$,
\begin{equation}
\begin{split}
    &s^{base}(x_t,t)  = -\frac{x_t}{\sigma_t^2} + \frac{1 - n}{2}\frac{x_t}{\|x_t\|^2} + \\ & \frac{1}{2I_{\frac{n-1}{2}}\left(\frac{\|x_t\|}{\sigma_t^2}\right)}\bigg( I_{\frac{n-3}{2}}\bigg(\frac{\|x_t\|}{\sigma_t^2}\bigg)+I_{\frac{n+1}{2}}\bigg(\frac{\|x_t\|}{\sigma_t^2}\bigg)\bigg)\frac{x_t}{\sigma_t^2\|x_t\|}
\end{split}
\end{equation}

where $I_k$ stands for the modified Bessel function of the first kind of order $k$.
\end{theorem}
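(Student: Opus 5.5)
We compute the marginal $p^{base}_t(x)=\int_{\mathcal{S}^n} N_{\sigma_t}(x-y)\,\sigma^n(dy)$ in closed form and then differentiate its logarithm. Write $r=\|x\|$ and $\kappa=r/\sigma_t^2$. Since $\|y\|=1$ on the sphere, expanding the square gives
\[
N_{\sigma_t}(x-y)=(2\pi\sigma_t^2)^{-(n+1)/2}\exp\!\Big(-\tfrac{r^2+1}{2\sigma_t^2}\Big)\exp\!\Big(\tfrac{\langle x,y\rangle}{\sigma_t^2}\Big).
\]
So the only nontrivial piece is $\int_{\mathcal{S}^n}\exp(\kappa\,\langle \hat x,y\rangle)\,\sigma^n(dy)$ with $\hat x=x/r$. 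This is the normalising constant of the von Mises--Fisher distribution.

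By rotational invariance of $\sigma^n$ we may take $\hat x=e_1$. Writing $y_1=\cos\theta$, the pushforward of $\sigma^n$ onto $\theta$ has density proportional to $\sin^{n-1}\theta$ on $[0,\pi]$. The standard integral representation
\[
I_\nu(\kappa)=\frac{(\kappa/2)^\nu}{\sqrt{\pi}\,\Gamma(\nu+\tfrac12)}\int_0^\pi e^{\kappa\cos\theta}\sin^{2\nu}\theta\,d\theta,
\]
taken with $\nu=\frac{n-1}{2}$, then yields
\[
\int e^{\kappa\langle e_1,y\rangle}\sigma^n(dy)=c_n\,\kappa^{-(n-1)/2}I_{\frac{n-1}{2}}(\kappa),
\]
where $c_n$ depends only on $n$. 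For $n=1$ this is just $I_0$, consistent with the formula.

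Hence, up to terms independent of $x$,
\[
\log p^{base}_t(x)=-\frac{r^2}{2\sigma_t^2}-\frac{n-1}{2}\log r+\log I_{\frac{n-1}{2}}(r/\sigma_t^2)+\text{const}.
\]
We differentiate using $\nabla r=x/r$, valid for $x\neq 0$, which is where the hypothesis $x_t\neq0$ enters. The three contributions are:
\begin{itemize}
\item the Gaussian term gives $-x/\sigma_t^2$;
\item the log term gives $\frac{1-n}{2}\frac{x}{r^2}$;
\item the Bessel term gives $\frac{I'_{\nu}(\kappa)}{I_\nu(\kappa)}\frac{x}{\sigma_t^2 r}$.
\end{itemize}
Finally we apply the recurrence $I'_\nu=\tfrac12(I_{\nu-1}+I_{\nu+1})$ with $\nu=\frac{n-1}{2}$, which produces exactly $\frac{1}{2I_{\frac{n-1}{2}}}\big(I_{\frac{n-3}{2}}+I_{\frac{n+1}{2}}\big)$ and matches the stated expression.

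The main obstacle is the middle step: getting the spherical integral into Bessel form with the correct power of $\kappa$. In particular we must track the marginal density of $\langle e_1,y\rangle$ under normalised surface measure, and check that the exponent $\nu=\frac{n-1}{2}$ matches $\sin^{n-1}\theta=\sin^{2\nu}\theta$. Any error in that power would shift the $\frac{1-n}{2}\frac{x}{r^2}$ term, so this is where the care goes. The normalising constants never need to be evaluated, since they vanish under the gradient.
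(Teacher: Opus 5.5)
Your proposal is correct and matches the paper's proof step for step: the same factorisation of the Gaussian using $\|x_0\|=1$, the same three-term gradient (the paper's (a), (b), (c)), and the same recurrence $I'_\nu=\tfrac12(I_{\nu-1}+I_{\nu+1})$ at the end. The only difference is that you derive the von Mises--Fisher normalising constant $\Gamma\!\left(\tfrac{n+1}{2}\right)(\kappa/2)^{\frac{1-n}{2}}I_{\frac{n-1}{2}}(\kappa)$ from the Poisson integral together with the $\sin^{n-1}\theta$ marginal, whereas the paper cites it from Mardia and Jupp; your derivation is right, and the constant $c_n$ you leave unevaluated does drop out under the gradient.
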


\begin{proof}
     We derive a closed-form solution (up to the use of modified Bessel functions of the first kind) for:
    
\begin{equation}
\begin{split}
    s^{base}(x_t,t)  &=\nabla_{x_t}\log{\int_{\mathcal{S}^n(1)} N_{\sigma_t}(x_t-x_0)\mu(dx_0)}
\end{split}
\end{equation}

\begin{equation}
\begin{split}
    \int_{\mathcal{S}^n(1)} N_{\sigma_t}(x_t-x_0)\mu(dx_0)  
    &= (2\pi\sigma_t^2)^{-\frac{n+1}{2}}\int_{\mathcal{S}^n(1)} \exp{\bigg(-\frac{\|x_t-x_0\|^2}{2\sigma_t^2}}\bigg)\mu(dx_0)   \\
    &= (2\pi\sigma_t^2)^{-\frac{n+1}{2}}\int_{\mathcal{S}^n(1)} \exp{\bigg(-\frac{\|x_t\|^2-2x_t \cdot x_0+1}{2\sigma_t^2}}\bigg)\mu(dx_0) \\
    &= (2\pi\sigma_t^2)^{-\frac{n+1}{2}} \exp{\bigg(-\frac{\|x_t\|^2+1}{2\sigma_t^2}}\bigg)\int_{\mathcal{S}^n(1)} \exp{\bigg(\frac{x_t \cdot x_0}{\sigma_t^2}}\bigg)\mu(dx_0) 
\end{split}
\end{equation}

where in the second equality we used the fact that $\|x_0\|=1$.

Denote:
$$ J := \int_{\mathcal{S}^n(1)} \exp{\bigg(\frac{x_t \cdot x_0}{\sigma_t^2}}\bigg)\mu(dx_0)  = $$
We use the following proposition from \cite{mardia2009directional} (we chose to state it here for readability): 

\begin{proposition} 
Let $ k > 0 $, $\|x\| = 1 $. Then:

$$
\int_{\mathcal{S}^n(1)} e^{kx \cdot y} d\sigma(y) = \Gamma\left(\frac{n+1}{2}\right) \left(\frac{k}{2}\right)^{\frac{1-n}{2}} I_{\frac{n-1}{2}}(k)
$$

where \( I \) is the modified Bessel function of the first kind. 
\end{proposition}

Since $x_t \neq 0$, this result applies to $J$ with $k=\frac{\|x_t\|}{\sigma_t^2}$, $x=\frac{x_t}{\|x_t\|}$ and $y=x_0$ so that:

\begin{equation}
\begin{split}
    J = \Gamma\left(\frac{n+1}{2}\right) \left(\frac{\|x_t\|}{2\sigma_t^2}\right)^{\frac{1 - n}{2}} I_{\frac{n-1}{2}}\left(\frac{\|x_t\|}{\sigma_t^2}\right)
\end{split}
\end{equation}

Substituting the expression for $J$ we obtain:

\begin{equation}
\begin{split}
    \int_{\mathcal{S}^n(1)} N_{\sigma_t}(x_t-x_0)\mu(dx_0)  
    &= (2\pi\sigma_t^2)^{-\frac{n+1}{2}} \exp{\bigg(-\frac{\|x_t\|^2+1}{2\sigma_t^2}}\bigg) \Gamma\left(\frac{n+1}{2}\right) \left(\frac{\|x_t\|}{2\sigma_t^2}\right)^{\frac{1 - n}{2}} I_{\frac{n-1}{2}}\left(\frac{\|x_t\|}{\sigma_t^2}\right)
\end{split}
\end{equation}

Therefore:

\begin{equation}
\begin{split}
\nabla_{x_t}\log{\int_{\mathcal{S}^n(1)} N_{\sigma_t}(x_t-x_0)\mu(dx_0)} &= \underbrace{\nabla_{x_t}{\bigg(-\frac{\|x_t\|^2+1}{2\sigma_t^2}}\bigg)}_\text{(a)} + 
\underbrace{\nabla_{x_t}\log{\left(\frac{\|x_t\|}{2\sigma_t^2}\right)^{\frac{1 - n}{2}}}}_\text{(b)} +
\underbrace{\nabla_{x_t}\log{I_{\frac{n-1}{2}}\left(\frac{\|x_t\|}{\sigma_t^2}\right)}}_\text{(c)}
\end{split}
\end{equation}

and:
$$(a)=-\frac{x_t}{\sigma_t^2}$$

$$(b)=\frac{1 - n}{2}\bigg(\frac{1}{\|x_t\|/2\sigma_t^2}\bigg)\bigg(\frac{x_t}{2\sigma_t^2\|x_t\|}\bigg)=\frac{1 - n}{2}\frac{x_t}{\|x_t\|^2}$$

$$(c)=\frac{1}{I_{\frac{n-1}{2}}\left(\frac{\|x_t\|}{\sigma_t^2}\right)}\frac{1}{2}\bigg( I_{\frac{n-3}{2}}\bigg(\frac{\|x_t\|}{\sigma_t^2}\bigg)+I_{\frac{n+1}{2}}\bigg(\frac{\|x_t\|}{\sigma_t^2}\bigg)\bigg)\frac{x_t}{\sigma_t^2\|x_t\|}$$

\end{proof}

\subsection{Proof of Theorem \ref{thm:base_2_sphere}}

\begin{corollary} (Base score for $2$-spheres)
    For $\mathcal{M} = \mathcal{S}^2(1)=\{x \in \mathbb{R}^{3}:||x||=1\}$ and $\mu$  the usual normalised spherical measure on $\mathcal{S}^2(1)$,
\begin{equation}
\begin{split}
s^{base}(x_t,t) = x_t\left(-\frac{1}{\sigma_t^2} 
-\frac{1}{\|x_t\|^2} +
\frac{1}{\sigma_t^2\|x_t\|\tanh{\frac{\|x_t\|}{\sigma_t^2}}} \right) \\
\end{split}
\end{equation}
\end{corollary}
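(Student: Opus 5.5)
We specialise Theorem~\ref{thm:base_sphere} to $n=2$ and then express the half-integer-order Bessel functions in elementary form. Write $k := \|x_t\|/\sigma_t^2$.

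With $n=2$, the general formula becomes
\begin{equation*}
s^{base}(x_t,t) = -\frac{x_t}{\sigma_t^2} - \frac{1}{2}\frac{x_t}{\|x_t\|^2} + \frac{I_{-1/2}(k)+I_{3/2}(k)}{2I_{1/2}(k)}\,\frac{x_t}{\sigma_t^2\|x_t\|}.
\end{equation*}
The Bessel functions involved have the standard closed forms
\begin{equation*}
I_{1/2}(k)=\sqrt{\tfrac{2}{\pi k}}\sinh k, \qquad I_{-1/2}(k)=\sqrt{\tfrac{2}{\pi k}}\cosh k, \qquad I_{3/2}(k)=\sqrt{\tfrac{2}{\pi k}}\Bigl(\cosh k-\tfrac{\sinh k}{k}\Bigr).
\end{equation*}
The last one also follows from the recurrence $I_{\nu-1}-I_{\nu+1}=\tfrac{2\nu}{k}I_\nu$ with $\nu=1/2$.

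The common prefactor $\sqrt{2/(\pi k)}$ cancels, so the Bessel ratio is
\begin{equation*}
\frac{I_{-1/2}(k)+I_{3/2}(k)}{2I_{1/2}(k)} = \frac{2\cosh k - \sinh k/k}{2\sinh k} = \frac{1}{\tanh k} - \frac{1}{2k}.
\end{equation*}
Multiplying by $x_t/(\sigma_t^2\|x_t\|)$ and substituting $k=\|x_t\|/\sigma_t^2$, the term $-\frac{1}{2k}\cdot\frac{x_t}{\sigma_t^2\|x_t\|}$ equals $-\frac{1}{2}\frac{x_t}{\|x_t\|^2}$. Added to the existing $-\frac{1}{2}\frac{x_t}{\|x_t\|^2}$, this gives $-\frac{x_t}{\|x_t\|^2}$. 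The remaining term is $\frac{x_t}{\sigma_t^2\|x_t\|\tanh k}$, which yields exactly the claimed expression.

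As a cross-check, we could bypass Bessel functions entirely. For $n=2$ the integral $J$ equals $\frac{1}{2}\int_{-1}^1 e^{kc}\,dc = \sinh(k)/k$, by Archimedes' hat-box theorem. Then $\log J = \log\sinh k - \log k$, and its gradient is $(\coth k - 1/k)\,\nabla k$ with $\nabla k = x_t/(\sigma_t^2\|x_t\|)$. Adding term (a) from the proof of Theorem~\ref{thm:base_sphere} gives the same result.

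The only delicate step is getting the $I_{3/2}$ closed form right, including the sign of the $\sinh k/k$ term. An error there would spoil the cancellation that merges the two $-\tfrac12 x_t/\|x_t\|^2$ contributions. The elementary cross-check above guards against this.
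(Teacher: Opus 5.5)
Your proposal is correct and follows the paper's proof essentially step for step. You specialise Theorem~\ref{thm:base_sphere} to $n=2$, use the closed forms of $I_{\pm 1/2}$ and the recurrence for $I_{3/2}$ to reduce the Bessel ratio to $\coth k - \tfrac{1}{2k}$, and merge the resulting $-\tfrac12 x_t/\|x_t\|^2$ with term (b). Your Archimedes cross-check ($J=\sinh k/k$, after which only term (a) needs to be added to $\nabla_{x_t}\log J$) is a correct Bessel-free derivation. The paper does not give it, and it is arguably the cleaner route for $n=2$.
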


\begin{proof}

In this case, we can obtain a closed-form expression for $ s^{base}(x_t,t)$, without modified Bessel functions of the first kind, since $(c)$ includes only $I_{\frac{1}{2}}$, $I_{-\frac{1}{2}}$ and $I_{\frac{3}{2}}$.
The former two are known to have the closed form expressions \cite{Andrews_Askey_Roy_1999}:

\begin{equation}
    I_{\frac{1}{2}}(x)= \sqrt{\frac{2}{\pi x}}\sinh{x}\\
\end{equation}

\begin{equation}
    I_{-\frac{1}{2}}(x)= \sqrt{\frac{2}{\pi x}}\cosh{x}\\
\end{equation}

and $I_{\frac{3}{2}}$ can be obtained from them in closed form using the known recursion relation \citep{NIST:DLMF}:

\begin{equation} \label{eq:recurrence_modified_bessel}
    2nI_{n}(x)=x\left(I_{n-1}(x)-I_{n+1}(x)\right)
\end{equation} 

Specifically, we apply the recurrence to $n=\frac{1}{2}$ to obtain that for all $x\neq 0$: 

\begin{equation}
\begin{split}
I_{\frac{3}{2}}(x) &= I_{-\frac{1}{2}}(x)-\frac{1}{x}I_{\frac{1}{2}}(x) \\
&= \sqrt{\frac{2}{\pi x}}\cosh{x} - \frac{1}{x}\sqrt{\frac{2}{\pi x}}\sinh{x}
\end{split}
\end{equation}

Combining these results, $(c)$ becomes the closed form expression:

\begin{equation}
\begin{split}
(c) &= 
\frac{1}{I_{\frac{1}{2}}\left(\frac{\|x_t\|}{\sigma_t^2}\right)}\frac{1}{2}\bigg( I_{-\frac{1}{2}}\bigg(\frac{\|x_t\|}{\sigma_t^2}\bigg)+I_{\frac{3}{2}}\bigg(\frac{\|x_t\|}{\sigma_t^2}\bigg)\bigg)\frac{x_t}{\sigma_t^2\|x_t\|}\\
&= 
\frac{1}{I_{\frac{1}{2}}\left(\frac{\|x_t\|}{\sigma_t^2}\right)}\frac{1}{2}\bigg( 2I_{-\frac{1}{2}}\bigg(\frac{\|x_t\|}{\sigma_t^2}\bigg)-\frac{\sigma_t^2}{\|x_t\|}I_{\frac{1}{2}}\bigg(\frac{\|x_t\|}{\sigma_t^2}\bigg)\bigg)\frac{x_t}{\sigma_t^2\|x_t\|} \\
&= \left(\frac{\cosh{\frac{\|x_t\|}{\sigma_t^2}}}{\sinh{\frac{\|x_t\|}{\sigma_t^2}}}-\frac{\sigma_t^2}{2\|x_t\|}\right)\frac{x_t}{\sigma_t^2\|x_t\|} \\
&= \left(\frac{1}{\tanh{\frac{\|x_t\|}{\sigma_t^2}}}-\frac{\sigma_t^2}{2\|x_t\|}\right)\frac{x_t}{\sigma_t^2\|x_t\|}
\end{split}
\end{equation}

Adding this to $(a)$ and $(b)$ with $n=2$ yields the following closed form solution for all $x_t\neq 0$: 

\begin{equation}
\begin{split}
s^{base}(x_t,t) &= -\frac{x_t}{\sigma_t^2} 
-\frac{1}{2}\frac{x_t}{\|x_t\|^2} +
\left(\frac{1}{\tanh{\frac{\|x_t\|}{\sigma_t^2}}}-\frac{\sigma_t^2}{2\|x_t\|}\right)\frac{x_t}{\sigma_t^2\|x_t\|}\\
&= -\frac{x_t}{\sigma_t^2}  
-\frac{x_t}{\|x_t\|^2} +
\frac{1}{\tanh{\frac{\|x_t\|}{\sigma_t^2}}}\frac{x_t}{\sigma_t^2\|x_t\|} \\
&= x_t\left(-\frac{1}{\sigma_t^2} 
-\frac{1}{\|x_t\|^2} +
\frac{1}{\sigma_t^2\|x_t\|\tanh{\frac{\|x_t\|}{\sigma_t^2}}} \right) \\
\end{split}
\end{equation}

\end{proof}

\subsection{Proof of Theorem \ref{thm:base_3_sphere}}

\begin{corollary} (Base score for $3$-spheres)
For $\mathcal{M} = \mathcal{S}^3(1)=\{x \in \mathbb{R}^{4}:||x||=1\}$ and $\mu$  the usual normalised spherical measure on $\mathcal{S}^3(1)$,
    \begin{equation}
\begin{split}
    &s^{base}(x_t,t)  = -\frac{x_t}{\sigma_t^2} + \frac{1 - n}{2}\frac{x_t}{\|x_t\|^2} + \\ &\qquad \left( \frac{I_{0}\left(\frac{\|x_t\|}{\sigma_t^2}\right)}{I_{1}\left(\frac{\|x_t\|}{\sigma_t^2}\right)} -\frac{\sigma_t^2}{\|x_t\|}\right)\frac{x_t}{\sigma_t^2\|x_t\|}\
\end{split}
\end{equation}
\end{corollary}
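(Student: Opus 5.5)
The plan is to specialise Theorem~\ref{thm:base_sphere} to $n=3$ and then use the Bessel recurrence to remove the order-$2$ term, as in the proof of Corollary~\ref{thm:base_2_sphere}. With $n=3$, terms (a) and (b) of that theorem are already in the required form. They are $-\frac{x_t}{\sigma_t^2}$ and $\frac{1-n}{2}\frac{x_t}{\|x_t\|^2}$; the statement keeps $n$ symbolic here, and its value is $-\frac{x_t}{\|x_t\|^2}$. So only term (c) needs work. Its Bessel orders are $\frac{n-3}{2}=0$, $\frac{n-1}{2}=1$ and $\frac{n+1}{2}=2$. Writing $z:=\frac{\|x_t\|}{\sigma_t^2}$, which is positive because $x_t\neq 0$, term (c) reads
\begin{equation*}
(c)=\frac{I_0(z)+I_2(z)}{2I_1(z)}\,\frac{x_t}{\sigma_t^2\|x_t\|}.
\end{equation*}

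The key step is to remove $I_2$. We apply the recurrence \eqref{eq:recurrence_modified_bessel} with order $1$, which gives $2I_1(z)=z\,(I_0(z)-I_2(z))$. Solving for $I_2$ yields $I_2(z)=I_0(z)-\frac{2}{z}I_1(z)$, valid for $z\neq 0$. Substituting this into the numerator gives $I_0+I_2=2I_0-\frac{2}{z}I_1$. Hence
\begin{equation*}
\frac{I_0(z)+I_2(z)}{2I_1(z)}=\frac{I_0(z)}{I_1(z)}-\frac{1}{z}=\frac{I_0(z)}{I_1(z)}-\frac{\sigma_t^2}{\|x_t\|}.
\end{equation*}
Division by $I_1(z)$ is legitimate because $I_1(z)>0$ for $z>0$. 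This follows from its power series, in which every term is positive.

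Adding (a), (b) and the simplified (c) gives exactly the expression in the statement. No step is a real obstacle. The only points needing care are applying the recurrence at the correct order, so that the index shift goes from $I_2$ down to $I_0$ and $I_1$, and checking that $z\neq 0$ and $I_1(z)\neq 0$, which the hypothesis $x_t\neq 0$ guarantees.
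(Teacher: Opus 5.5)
Your proposal is correct and follows essentially the same route as the paper. Both specialise the $n$-sphere result to $n=3$, use the recurrence at order $1$ to write $I_2(z)=I_0(z)-\frac{2}{z}I_1(z)$, and reduce term (c) to $\bigl(\frac{I_0(z)}{I_1(z)}-\frac{\sigma_t^2}{\|x_t\|}\bigr)\frac{x_t}{\sigma_t^2\|x_t\|}$ before adding (a) and (b). Your explicit checks that $z>0$ and $I_1(z)>0$, so the division is legitimate, are a small addition the paper leaves implicit.
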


\begin{proof}

Observe that the difference lies in the third term in the summation, which was denoted as expression (c) in the $n$-Recall that expression $(c)$ included the modified Bessel functions of the first kind $I_{1}, I_{0},I_{2}$. We use the recursion relation in Equation \ref{eq:recurrence_modified_bessel} to obtain the solution in terms of $I_{1}, I_{0}$. Specifically, when plugging $n=1$ into the recursion, we obtain for all $x\neq 0$:

\begin{equation}
    I_{2}(x)=I_{0}(x)-2\frac{I_{1}(x)}{x}
\end{equation}

so that $(c)$ becomes:

\begin{equation}
    \begin{split}
        (c)&=\frac{1}{I_{1}\left(\frac{\|x_t\|}{\sigma_t^2}\right)}\frac{1}{2}\bigg( I_{0}\bigg(\frac{\|x_t\|}{\sigma_t^2}\bigg)+I_{2}\bigg(\frac{\|x_t\|}{\sigma_t^2}\bigg)\bigg)\frac{x_t}{\sigma_t^2\|x_t\|}\\
        &= \frac{1}{I_{1}\left(\frac{\|x_t\|}{\sigma_t^2}\right)}\left( I_{0}\bigg(\frac{\|x_t\|}{\sigma_t^2}\bigg)-\frac{\sigma_t^2 I_1\bigg(\frac{\|x_t\|}{\sigma_t^2}\bigg)}{\|x_t\|}\right)\frac{x_t}{\sigma_t^2\|x_t\|}\\
        &=\left( \frac{I_{0}\left(\frac{\|x_t\|}{\sigma_t^2}\right)}{I_{1}\left(\frac{\|x_t\|}{\sigma_t^2}\right)} -\frac{\sigma_t^2}{\|x_t\|}\right)\frac{x_t}{\sigma_t^2\|x_t\|}\\
    \end{split}
\end{equation}
This is then added to $(a)$ and $(b)$ (with $n=3$ in $(b)$) to obtain the result.
\end{proof}

\section{Datasets}\label{appx:datasets}
For both $S^2$ earth data and $\SO(3)$ synthetic data, we first ran RSGM and saved the precise training, validation and test sets it produced during its run, for all seeds. This ensured all methods saw the same dataset. DSM, MAD and FFF were easily adapted to accept this data. 

\subsection{$\mathcal{S}^2$}\label{appx:datasets_s2}

We use earth and climate events, with the earth assumed to be a $2$-sphere. The data, as mentioned in general comments above, is used by RSGM and split by their training code into train, validation and test sets. It is then resampled during training. We save the resulting training set that arises from resampling during across all training steps. As such, we include the same events as RSGM: volcanic eruptions \cite{volcanoe_dataset}, earthquakes \cite{data_earthquake}, floods \cite{data_flood} and wild fires \cite{data_fire}. Note that the link to the flood dataset is no longer active, however this was the originally reported source for this dataset. 

\subsection{$\SO(3)$}\label{appx:datasets_SO3_synth}
We use the data generated by RSGM (it is data sampled in an online fashion where each batch is generated from a mixture of wrapped normal distributions with $K$ components). We refer the reader to the original paper for a description of how this data is generated.

\subsection{Discrete}\label{appx:data_discrete}
We generate two types -- one is uniformly sampled from 8 discrete points equally spaced on the unit circle and  the other (skewed) is using the following (we also release this in our code repository):

\[
\begin{aligned}
n_{\text{coords}} &= \lvert \text{coordinates} \rvert \\
i_{\text{peak}} &= \left\lfloor \frac{n_{\text{coords}}}{4} \right\rfloor \\
d_i &= \min\!\left( \lvert i - i_{\text{peak}} \rvert,\; 
                   n_{\text{coords}} - \lvert i - i_{\text{peak}} \rvert \right),
\quad i = 0, \dots, n_{\text{coords}} - 1 \\
p_i &= \exp(-0.8 \, d_i) \\
p_i &= \frac{p_i}{\sum_{j=0}^{n_{\text{coords}}-1} p_j} \\
\text{indices} &\sim \text{Multinomial}(p,\; n_{\text{samples}},\; \text{with replacement})
 \\
\text{data} &= \text{coordinates}[\text{indices}]
\end{aligned}
\]

\section{Training specifications}
\label{app:training}

\textbf{General notes}:

The notes hold for $\SO(3)$ synthetic (not objects), earth and discrete experiments.

In all experiments, DSM and MAD both had the same exact settings and hyperparameter tuning. The only differences were those that arise from the use of the base score, namely in the loss function and during sampling phase. 
For the best controlled settings, we adopted the same  network architecture as RSGM's for SO(3) -- a multilayer perceptron composed of five hidden layers, each with 512 neurons. 
We used the Adam optimiser as in RSGM.

All experiments on $\SO(3)$ synthetic, discrete data and earth data were run on a standard PC.

\subsection{Earth experiments}

All methods were run for 2,000 steps with a batch size of 512. 

\textbf{DSM and MAD} -- as elaborated at the start of this Section, all settings were the same save for the necessary differences. This includes hyperparameter search with $\sigma_{\min} \in \{10^{-2}, 10^{-3}, 10^{-4}, 10^{-5}, 10^{-6}\}$, the number of scales used was $100$, and learning rates $\{10^{-4}, 3 \times 10^{-4}, 5 \times 10^{-4}, 7 \times 10^{-4}, 9 \times 10^{-4}, 10^{-3}\}$.
Below are the settings that performed best for each, which were the ones used in the results section.

The best results for MAD were as follows: 




\begin{itemize}
    \item Fire: $\text{lr}=7 \times 10^{-4}$, $\sigma_{\min}=10^{-6}$
    \item Earthquake: $\text{lr}=5 \times 10^{-4}$, $\sigma_{\min}=10^{-5}$
    \item Volcano: $\text{lr}=7 \times 10^{-4}$, $\sigma_{\min}=10^{-6}$
    \item Flood: $\text{lr}=9 \times 10^{-4}$, $\sigma_{\min}=10^{-6}$
\end{itemize}

The best results for DSM were as follows:

\begin{itemize}
    \item Fire: $\text{lr}=9 \times 10^{-4}$, $\sigma_{\min}=10^{-6}$
    \item Earthquake: $\text{lr}=10^{-3}$, $\sigma_{\min}=10^{-6}$
    \item Volcano: $\text{lr}=7 \times 10^{-4}$, $\sigma_{\min}=10^{-6}$
    \item Flood: $\text{lr}=10^{-3}$, $\sigma_{\min}=10^{-6}$
\end{itemize}

\textbf{RSGM} -- we use the publicly available repository published in \cite{de2022riemannian}. 
We used setups as in the command below (also for the other values for 'experiment' (e.g. 'fire') and 'SEED' being in 0-4 incl.). For all other settings we used the defaults loaded from the associated config file. We did not perform an extensive hyperparameter sweep for this baseline, as the original work does not include guidance on which hyperparameter settings are critical, and an exhaustive search would have been computationally infeasible. We did, however, test both 'div\_free' and 'ambient' for the generators as both were specified as options, and we viewed the quality of the generator to be a potentially important factor, which was computationally feasible to test.

Command:

python main.py -m \
    experiment=earthquake \
    model=rsgm \
    generator=div\_free \
    loss=ism \
    steps=2000 \
    seed=SEED \
    val\_freq=3000

Note that since val\_freq was not used for early stopping, but only for logging information during training, we set it to a higher value than the number of steps to avoid additional computation.

\subsection{$\SO(3)$ experiments}
All methods were run for 5,000 steps with a batch size of 512, where all batches were freshly sampled using RSGM's code and saved to run the other methods on the exact same data. This amounts to $5000*512 = \sim 2.5M$ unique samples. All methods saw the same exact samples.

\textbf{DSM and MAD} -- as elaborated at the start of this Section, all settings were the same save for the necessary differences. This includes hyperparameter search with $\sigma_{\min} \in \{10^{-2}, 10^{-3}, 10^{-4}\}$, and
$\text{lr} \in \{3 \times 10^{-4}, 5 \times 10^{-4}, 7 \times 10^{-4}, 9 \times 10^{-4}, 10^{-3}\}$. The number of scales used was $100$.
Below are the settings that performed best for each, which were the ones used in the results section.

The best results for MAD were as follows:

\begin{itemize}
    \item K=16: $\text{lr}=7 \times 10^{-4}$, $\sigma_{\min}=10^{-4}$
    \item K=32: $\text{lr}=9 \times 10^{-4}$, $\sigma_{\min}=10^{-4}$
    \item K=64: $\text{lr}=7 \times 10^{-4}$, $\sigma_{\min}=10^{-4}$
\end{itemize}

The best results for DSM were as follows:

\begin{itemize}
    \item K=16: $\text{lr}=7 \times 10^{-4}$, $\sigma_{\min}=10^{-4}$
    \item K=32: $\text{lr}=9 \times 10^{-4}$, $\sigma_{\min}=10^{-4}$
    \item K=64: $\text{lr}=9 \times 10^{-4}$, $\sigma_{\min}=10^{-4}$
\end{itemize}

\textbf{RSGM} -- we used the same repository mentioned in the context of earth data. We chose the first out of two suggested close learning rates (5e-4, 2e-4) and the default setting for the other hyperparameter flow.beta. The command we used was the following (run for the three $K$ values and five seeds (SEED below in 0-4 incl.):

    python main.py -m experiment=so3 \
        model=rsgm
        dataset.K=K
        steps=5000
        loss=dsmv
        generator=lie\_algebra
        optim.learning\_rate=5e-4
        seed=SEED
        val\_freq=50

where, val\_freq was only set to the low value of $50$ for the purpose of logging train-step times every 50 steps. Otherwise it set to anything above 5000 for efficiency. See also our note in Appendix \ref{appx:how_compute_time} on how we computed train-step time to achieve the best controlled setting.

\textbf{FFF} -- after adapting the make\_so\_data function in special\_orthogonal.py to load the dataset we simply ran the following command across the different values of $K$ and seeds:

  python -m lightning\_trainable.launcher.fit \
    configs/m-fff/so3.yaml \
    data\_set.K=64 \
    data\_set.stored\_seed=SEED \
    --name 'special-orthogonal'

\subsection{SymSol: Symmetric $\SO(3)$ Experiments}
\label{app:symsol}

\subsubsection{Architecture}
We employ a conditional score-based generative model consisting of two jointly trained components: a visual encoder $E_\phi$ and a conditional denoising network $D_\theta$. We absorb these into $s_\theta$ for simplicity.

\textbf{Visual Encoder.} To condition the generation on input images, we use a ResNet-50 backbone \citep{he2015deepresiduallearningimage} initialized with weights pre-trained on the ImageNet dataset \citep{5206848}. We remove the final fully connected classification layer and extract the 2048-dimensional feature vector from the global average pooling layer. We jointly fine-tune the ResNet encoder end-to-end along with the denoiser. This allows the model to adapt the visual features specifically for the task of 3D pose estimation on gray-scale images, which also requires capturing subtle geometric cues (e.g., distinguishing the front vs. back of a cylinder) that may be invariant in standard classification tasks. A projection head maps the visual features to the hidden dimension $d_{model}$.

\textbf{Score Network parameterisation.} We parameterise the score function $s_\theta(q_t, \sigma_t, c)$ using a neural network $F_\theta$ (Residual MLP) that predicts a pre-conditioned output. Following \citet{karras2022elucidatingdesignspacediffusionbased}, we scale the network inputs to maintain unit variance via $q_{\text{in}} = q_t / \sqrt{\sigma_t^2 + 1}$. To respect the global geometry of $\SO(3)$, we must account for the double cover property where quaternions $q$ and $-q$ represent the identical rotation. This implies the score field must be parity equivariant, satisfying $s_\theta(-q_t) = -s_\theta(q_t)$. We enforce this constraint structurally by antisymmetrizing the network output:$$    F_\theta^{\text{odd}}(q_{\text{in}}, \sigma_t, c) = \frac{1}{2} \left( F_\theta(q_{\text{in}}, \sigma_t, c) - F_\theta(-q_{\text{in}}, \sigma_t, c) \right).$$The final score is obtained via $s_\theta(q_t, \sigma_t, c) = F_\theta^{\text{odd}}(q_{\text{in}}, \sigma_t, c) / \sigma_t$. We train this network using a denoising score matching loss ($\mathcal{L}_{\text{dsm}}$). With our annealing weighting schedule $\lambda(\sigma_t) = \sigma_t^2$, this parameterisation is mathematically equivalent to training $F_\theta$ to predict the negative noise $(-\epsilon)$, i.e. minimizing $\|F_\theta + \epsilon\|^2$.

\paragraph{Training}
We implement our framework using \texttt{PyTorch Lightning} to ensure reproducibility and scalable training. We train the joint architecture (ResNet encoder and Score Network) end-to-end for 128,000 steps with a batch size of 64. We employ the AdamW optimizer \citep{loshchilov2019decoupledweightdecayregularization} with a peak learning rate of $4 \times 10^{-4}$, weight decay of $1 \times 10^{-3}$, and default momentum parameters ($\beta_1, \beta_2$). To ensure stable convergence, we use a linear warmup with cosine annealing scheduler and an Exponential Moving Average (EMA) over the model weights.

A complete summary of the architecture and training hyperparameters is provided in Table \ref{tab:hyperparams}.

\begin{table}[h]
    \caption{Hyperparameter configuration for \textsc{SymSol I} experiments.}
    \label{tab:hyperparams}
    \centering
    \begin{small}
    \begin{sc}
    \begin{tabular}{ll}
        \toprule
        \textbf{Category} & \textbf{Value} \\
        \midrule
        \multicolumn{1}{l}{\textit{Optimization}} \\
        \midrule
        Optimizer & AdamW (default betas) \\
        Batch Size & 64 \\
        Training Steps & 128,000 \\
        Peak Learning Rate & $4 \times 10^{-4}$ \\
        Min Learning Rate & $1 \times 10^{-4}$ \\
        Weight Decay & $1 \times 10^{-3}$ \\
        Warmup Init Learning Rate & $1 \times 10^{-6}$ \\
        Warmup Steps & 12,800 (10\%) \\
        Gradient Clip & 10.0 \\
        Precision & 16-mixed \\
        EMA Decay & 0.999 \\
        \midrule
        \multicolumn{2}{l}{\textit{Score Network Architecture}} \\
        \midrule
        Time Embed Dim & 64 \\
        Time Encoding & Fourier \\
        Hidden Dimension ($d_{\text{model}}$) & 128 \\
        Image Embed Dim & 128 \\
        Layers & 4 \\
        \midrule
        \multicolumn{2}{l}{\textit{Diffusion Process (SDE)}} \\
        \midrule
        $\sigma_{\min}$ & 0.002 \\
        $\sigma_{\max}$ & 20.0 \\
        $\sigma_{\text{data}}$ & 0.5 \\
        Train Noise Dist. & Log-Uniform $[\sigma_\text{min}, \sigma_\text{max}]$ \\
        \bottomrule
        \bottomrule
    \end{tabular}
    \end{sc}
    \end{small}
\end{table}

\paragraph{Sampling}
At inference time, we use a polynomial noise schedule as proposed in EDM \cite{karras2022elucidatingdesignspacediffusionbased} with $\rho=4$. The sequence of noise levels is defined as:
\begin{equation}
    \sigma_i = \left( \sigma_{\max}^{\frac{1}{\rho}} + \frac{i}{N-1} (\sigma_{\min}^{\frac{1}{\rho}} - \sigma_{\max}^{\frac{1}{\rho}}) \right)^\rho.
\end{equation}
We use $N=40$ steps for all evaluations. The sampling is performed in the ambient space $\mathbb{R}^4$ using a standard Euler-Maruyama solver for the SDE adapted from \cite{karras2022elucidatingdesignspacediffusionbased}. To compute metrics employing geodesic distances, we project onto the unit sphere $\mathcal{S}^3$ to ensure the quaternion constraint is satisfied.

\textbf{Lifting.} 
The sampled outputs of the diffusion model $\hat{q}\sim p(q\mid c)$ lie in the fundamental domain $\mathcal{F}$. To recover the full distribution over $\SO(3)$, we apply a lifting procedure. We uniformly sample a symmetry element $g \sim G$ and apply the transformation $q_\text{final} = \hat{q} \cdot g$. This restores the multimodal nature of the distribution required for the correct evaluation of symmetric objects.

\section{Discrete}
We run for 2,000 steps (fresh 512-batched samples from the distribution) and perform hyperarameter tuning for both DSM and MAD with learning rates: 
3e-4, 5e-4, 7e-4, 9e-4, 1e-3 and and sigma minimum: 0.01, 0.001, 0.0001

\section{Measuring Step-Train Time} \label{appx:how_compute_time}

In each method, we wrapped the loss and step to measure the time. That is, the start time was after the batch was loaded just before the loss and train step were computed and the end time was just after. 

\end{document}